\documentclass[lettersize,journal]{IEEEtran}

\newif\ifanonymous
\anonymousfalse
\ifdefined\setanonymous \if0\setanonymous \anonymousfalse \else \anonymoustrue \fi \fi
\usepackage{amsmath,amsfonts}
\usepackage{algorithmic}
\usepackage{array}
\usepackage[caption=false,font=normalsize,labelfont=sf,textfont=sf]{subfig}
\usepackage{textcomp}
\usepackage{stfloats}
\usepackage{url}
\usepackage{verbatim}
\usepackage{graphicx}
\def\BibTeX{{\rm B\kern-.05em{\sc i\kern-.025em b}\kern-.08em
    T\kern-.1667em\lower.7ex\hbox{E}\kern-.125emX}}
\usepackage{balance}
\usepackage{capt-of}   

\usepackage{moreverb,url}
\usepackage{xspace}
\usepackage{xcolor}
\usepackage{colortbl}
\usepackage{siunitx}
\usepackage{wrapfig}

\usepackage{amssymb}
\usepackage{amsmath}
\usepackage{amsbsy}
\usepackage{bbm}
\usepackage{dsfont}

\usepackage{amsthm}
\newtheorem{assumption}{Assumption}

\usepackage{algorithm}
\usepackage{algorithmic}

\usepackage{graphicx}
\usepackage{wrapfig}
\usepackage{booktabs} 

\usepackage{subfiles}
\usepackage{csquotes}

\usepackage{array}

\usepackage{graphicx}
\usepackage{subfig} 
\usepackage{multirow}
\definecolor{wine}{RGB}{204, 0, 102}
\definecolor{magenta_wine}{RGB}{158, 44, 143}
\definecolor{dusty_wine}{RGB}{143, 59, 101}
\definecolor{ocean}{RGB}{13, 121, 202}
\definecolor{light_ocean}{RGB}{18, 178, 235}
\definecolor{dark_ocean}{RGB}{10, 89, 148}
\definecolor{grey}{RGB}{170, 170, 170}
\definecolor{light-grey}{RGB}{220, 220, 220}
\definecolor{dark_gray}{rgb}{0.2, 0.2, 0.2} 
\definecolor{med-grey}{rgb}{0.3, 0.3, 0.3} 
\definecolor{grape}{RGB}{112,48,160}
\definecolor{aqua}{RGB}{52,172,139}
\definecolor{dark_aqua}{RGB}{35,115,93}
\definecolor{dark_orange}{RGB}{216,92,0}
\definecolor{vibrant_orange}{RGB}{250, 160, 26}
\definecolor{vibrant_blue}{RGB}{14, 120, 255}
\definecolor{vibrant_pink}{RGB}{255, 0, 104}
\definecolor{dark_red}{RGB}{122, 0, 0}
\definecolor{dark_green}{RGB}{0, 92, 34}
\definecolor{dusty_blue}{RGB}{77, 91, 128}
\definecolor{dark_brown}{RGB}{125, 54, 36}
\definecolor{violet}{RGB}{116, 12, 173}
\definecolor{teal}{HTML}{0D948F}
\definecolor{disturbance_orange}{HTML}{FF9500}
\definecolor{nominal_blue}{HTML}{0048A6}

\newtheorem{theorem}{Theorem}

\newtheorem{corollary}{Corollary}
\newtheorem{remark}{Remark}

\newcommand{\para}[1]{\smallskip\noindent\textbf{#1. }} 
 
\newcounter{qnum}
\usepackage{pifont}
\newcommand{\cmark}{\ding{51}} 
\newcommand{\xmark}{\ding{55}}%

\DeclareMathOperator{\E}{\mathbb{E}}
\DeclareMathOperator*{\argmax}{argmax}
\DeclareMathOperator*{\argmin}{argmin}

\newcommand{\state}{s}
\newcommand{\stateSpace}{\mathcal{S}}

\newcommand{\latent}{z}
\newcommand{\latentSpace}{\mathcal{Z}}
\newcommand{\dynz}{f_{\latent}}
\newcommand{\dynamics}{f}
\newcommand{\encoder}{\mathcal{E}}
\newcommand{\decoder}{\mathcal{G}}
\newcommand{\disturbance}{d}
\newcommand{\disturbanceSpace}{D}
\newcommand{\disturbancePolicy}{\policy_{\disturbance}}

\newcommand{\objective}{J}

\newcommand{\obs}{o}
\newcommand{\obsSpace}{\mathcal{O}}
\newcommand{\observation}{\mathcal{H}} 

\newcommand{\action}{a}
\newcommand{\actionSpace}{\mathcal{A}}

\usepackage{fontawesome5}
\newcommand{\shield}{\text{\tiny{\faShield*}}}

\newcommand{\failure}{F}

\newcommand{\failuremargin}{\ell}

\newcommand{\ellz}{\ell_\latent}

\newcommand{\valfunc}{V}
\newcommand{\valfuncGt}{\valfunc_\textrm{gt}}

\newcommand{\unsafeSet}{U} 

\newcommand{\policy}{\pi}
\newcommand{\policyTask}{\pi^{\text{task}}}
\newcommand{\monitor}{V^{\shield}} 
\newcommand{\fallback}{\policy^{\shield}} 
\newcommand{\buffer}{\mathcal{B}}

\newcommand{\dataset}{\mathcal{D}}

\newcommand{\calib}{\text{calib}}

\newcommand{\divergence}{D}
\newcommand{\KL}{D_{\mathrm{KL}}}

\newcommand{\Qfunction}{Q}
\newcommand{\QfunctionRobust}{\Qfunction_{\mathrm{rob}}}
\newcommand{\targetQfunctionRobust}{\hat{Q}_{\mathrm{rob}}}

\newcommand{\fallbackRobust}{\policy_\mathrm{rob}} 
\newcommand{\monitorRobust}{V_\mathrm{rob}} 
\newcommand{\ambiguity}{\mathcal{F}}
\newcommand{\latentDisturbanceDynamics}{\dynz^d}
\newcommand{\optimalDisturbance}{\dynz^{\latentDisturbance}}
\newcommand{\latentDisturbanceDynamicsParameterized}{\dynz^{\latentDisturbance}}

\newcommand{\parameter}{\phi}
\newcommand{\latentDisturbance}{\policy_d}
\newcommand{\latentDisturbanceParam}{\psi}
\newcommand{\densityEstimator}{s_{\mathrm{OOD}}}
\newcommand{\thresholdKL}{\epsilon_{\mathrm{KL}}}
\newcommand{\thresholdOOD}{\epsilon_{\mathrm{OOD}}}
\newcommand{\calibAmbiguity}{\alpha_{\mathrm{KL}}}
\newcommand{\calibOOD}{\alpha_{\mathrm{ood}}}
\newcommand{\penaltyKL}{\lambda_{\mathrm{KL}}}
\newcommand{\penaltyOOD}{\lambda_{\mathrm{OOD}}}

\newcommand{\discount}{\gamma}
\newcommand{\sensitivity}{\kappa}
\newcommand{\bellmanOperator}{\mathcal{T}}
\newcommand{\robustOperator}{\bellmanOperator_{\mathrm{rob}}}

\newcommand{\nominalValue}{\valfunc_{\dynz}}
\newcommand{\robustValue}{\valfunc_{\mathrm{rob}}}
\newcommand{\nominalPolicy}{\policy^{\mathrm{nom}}}
\newcommand{\robustPolicy}{\policy^{\mathrm{rob}}}
\newcommand{\worstCaseValue}{\mathcal{J}_{\mathrm{wc}}}
\newcommand{\worstCaseSuboptimality}{\operatorname{SubOpt}_{\mathrm{wc}}}

\newcommand{\CVaR}{\mathrm{CVaR}}
\newcommand{\monitorNominal}{\monitor_{\mathrm{nom}}}
\newcommand{\monitorWoN}{\monitor_{\mathrm{WoN}}}
\newcommand{\monitorCVaR}{\monitor_{\CVaR}}

\newcommand{\ours}{\textit{LUCID}\xspace}
\newcommand{\nominal}{\textit{Nominal}\xspace}

\usepackage{annotate_equation}
\usepackage{soul}
\newcommand{\methodhighlight}[2]{\begingroup\sethlcolor{#1!\eqnhighlightshade}\hl{#2}\endgroup}

\usepackage[colorlinks=true,
            linkcolor=dark_ocean,
            citecolor=dark_ocean,
            urlcolor=dark_ocean]{hyperref}

\begin{document}

\title{\LARGE \bf
   Modeling Latent Disturbances for \\ Robust Decision-Making in World Models
}
\ifanonymous
\author{Anonymous Authors
\thanks{}}
\else
\author{Junwon Seo and Andrea Bajcsy%
\thanks{The authors are with the Robotics Institute, Carnegie Mellon University, Pittsburgh, PA, USA. Email: \{junwonse, abajcsy\}@andrew.cmu.edu}%
\\[0.3em] { \hypersetup{urlcolor=disturbance_orange}\url{https://junwon.me/LatentDisturbance/}}}
\fi

\makeatletter
\let\@oldmaketitle\@maketitle

\renewcommand{\@maketitle}{%
    \@oldmaketitle
    \setcounter{figure}{0}
    \centering
    \includegraphics[width=\textwidth]{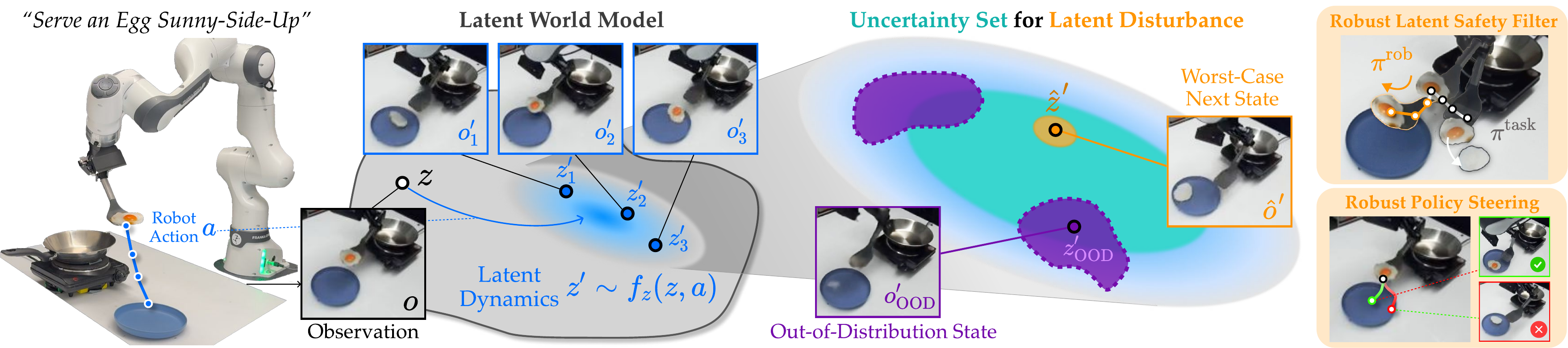}
    \vspace{-0.2in}
    \captionof{figure}{
        \textbf{Overview of Robust Decision Making in World Models.} \textit{Left}: A latent world model learns a latent dynamics model $\latent'\sim\dynz(\latent,\action)$ from observation--action transitions $(\obs,\action,\obs')$, predicting a \textit{distribution} over next latent states $\latent'$ given the current latent state $\latent$ and the robot's action $\action$. \textit{Middle}: Leveraging the learned predictive distribution, we construct an \textit{uncertainty set} of latent dynamics, calibrated to include plausible transitions while excluding implausible out-of-distribution states $\latent'_{\mathrm{OOD}}$. Within this set, we model a \textit{latent disturbance} as a perturbation to the learned dynamics that yields worst-case plausible next states $\hat{\latent}'$. \textit{Right}: This enables robust decision-making in the latent space of world models, including robust safety filtering that proactively safeguards the system without overestimating safety and robust policy steering that selects actions that remain effective under their worst-case plausible outcomes.
    }
    \label{fig:front-fig}
    \vspace{-0.4in}
    \bigskip
}
\makeatother
\maketitle

\begin{abstract}
In this paper, we study robust decision-making in the latent space of world models (WMs). Robust optimization is a mathematical framework where, given explicitly specified dynamics and physically meaningful disturbances, a robot can select actions that remain effective even under worst-case disturbances. However, applying this principle to the learned latent space of WMs introduces a fundamental challenge: because WMs have fully learned state spaces and dynamics inferred from high-dimensional observations, it is unclear how to define latent-space disturbances that faithfully represent uncertainty in the underlying physical system. Our key idea is to model a latent-space disturbance as a perturbation to the learned latent dynamics that induces pessimistic but plausible transitions. Specifically, we construct a set of plausible latent dynamics by combining a dynamics-aware similarity metric that captures physically plausible transitions with out-of-distribution detection that excludes implausible latent states. We calibrate this uncertainty set over latent dynamics using conformal prediction, ensuring that WM imaginations induced by the latent disturbance remain plausible without becoming overly pessimistic. We then jointly optimize robust robot actions and the worst-case latent disturbances within the calibrated uncertainty set through efficient game-theoretic optimization. We leverage this latent-space robust optimization framework to robustify policy steering under uncertainty, considering two paradigms: latent safety filtering and sample-and-verify style steering of a generative control policy. Our controlled simulation experiments show that our latent disturbance enables robust decision-making directly in WM latent spaces, and hardware experiments with a Franka manipulator show that modeling latent disturbances enables robust policy steering, reducing failures by $70\%$ in safety filtering and $54\%$ in sampling-based policy steering.

\end{abstract}

\begin{IEEEkeywords}
World Models, Decision-Making under Uncertainty, Robust Safe Control
\end{IEEEkeywords}

\section{Introduction}

Consider a robot manipulator serving a sunny-side-up egg by sliding it from a spatula onto a plate (Fig.~\ref{fig:front-fig}). This task is deceptively challenging: it not only requires precise motor control from visual observations, such as carefully tilting the spatula, but the outcome of each robot action is uncertain: the same serving motion may successfully slide the egg onto the plate or accidentally flip it sunny-side-down depending on unobserved factors such as the egg's mass or the amount of oil on the spatula. How can the robot choose actions that are \textit{robust} to these uncertainties?

A formal way to model such robust decision-making under uncertainty is via \textit{robust optimization}, which selects a robot action that remains effective under \textit{any} possible realization of a disturbance, which represents system uncertainty~\cite{ben2009robust}:
\begin{equation}
\policy_\text{rob}(\state) = \arg\min_{\action \in \actionSpace} \max_{\disturbance \in \disturbanceSpace} \,\objective\left(\dynamics(\state,\action,\disturbance)\right).
\label{eq:intro_robust_optimization}
\end{equation}
Here, $f$ denotes discrete-time system dynamics conditioned on state $s$, action $a$, and disturbance $d$ representing uncertain factors beyond the robot's control; $\disturbanceSpace$ denotes the set of such disturbances, and the objective function $\objective$ evaluates the resulting future evolution. This optimization can be viewed as a dynamic game~\cite{fisac2018general, nguyen2024gameplay}: the outer minimization selects a robust action that remains effective despite the worst-case disturbance selected by the inner maximization~\cite{mitchell2005time,pinto2017robust,moos2022robust}.

Robust optimization typically requires a structured dynamical system model, where the state and disturbance are represented, often by an expert, to have clear physical meaning (e.g., position and velocity as states and external forces as disturbances~\cite{bansal2017hamilton, fisac2018general}). However, latent world models~(WMs)~\cite{hafner2023dreamerv3, maes2026leworldmodel} have recently shown promise for representing hard-to-model systems directly from high-dimensional sensor observations. By jointly learning compact latent representations and their associated dynamics from datasets of robot observations and actions, WMs enable complex systems to be modeled directly in a \textit{learned} latent space, such as interactions with deformable objects~\cite{nakamura2025generalizing} and complex rigid bodies~\cite{seo2025uncertainty}.

However, extending robust optimization in \eqref{eq:intro_robust_optimization} to the learned latent space of WMs raises a fundamental challenge: how to define a \textit{latent-space disturbance} that faithfully represents uncertainty in the underlying physical system? Unlike disturbances in physically interpretable state spaces~\cite{hsu2023isaacs}, arbitrary perturbations in the learned latent space need not correspond to physically plausible system evolutions. For example, in Fig.~\ref{fig:front-fig}, factors such as friction or the amount of oil on the spatula are not directly observed or explicitly represented by the world model. This motivates the central question of our work:
\begin{quote}
\centering
\textit{How can we represent disturbances in the latent space of world models for robust optimization?}
\end{quote}

Our key idea is to model a latent-space disturbance as a perturbation to the learned latent dynamics that induces pessimistic world-model imaginations, based on the observation that predictive uncertainty in the latent dynamics reflects disturbances in the underlying system. The main challenge is to construct an \textit{uncertainty set}~\cite{moos2022robust} of latent disturbances, a set of plausible latent dynamics that is broad enough to cover possible transitions while excluding implausible ones that would lead to overly conservative decisions~\cite{johnstone2021conformal,hu2023deception,seo2026stressdream}. This is particularly important in high-dimensional latent spaces, where nearby or high-likelihood states may still correspond to physically infeasible or hallucinated out-of-distribution (OOD) transitions~\cite{seo2025uncertainty,hansen2026hallucination}. If such OOD transitions are not properly excluded from the uncertainty set, the disturbance can exploit these regions, producing implausibly pessimistic imaginations and leading to overly conservative decisions.

To address this, we propose \textit{Latent-space Uncertainty-Calibrated In-distribution Disturbance (\ours)}, a model of latent-space disturbance, which combines a dynamics-aware similarity metric with out-of-distribution detection to construct an uncertainty set of plausible latent dynamics. We calibrate this uncertainty set using conformal prediction~\cite{shafer2008tutorial,angelopoulos2023conformal}, ensuring that world-model imaginations induced by the latent disturbance remain plausible without becoming overly pessimistic. We then jointly optimize the worst-case disturbance and robust action through game-theoretic optimization, using an efficient parameterization of the latent disturbance for tractable optimization. We instantiate \ours for robust policy steering, including latent safety filtering~\cite{nakamura2025generalizing, seo2025uncertainty, agrawal2026anysafe} and sampling-based policy steering~\cite{wu2025foresight, yuan2026ups}, which can anticipate potential failures without overestimating safety and steer the policy with actions that remain safe under adverse uncertainty.

Through experiments, we show that: (i) in controlled settings with access to the true system dynamics, \ours approximates the optimal solution to robust optimization in the world-model latent space; (ii) in high-fidelity simulated vision-based manipulation, where we can control system disturbances such as friction and mass, \ours robustly prevents task failures despite unobservable disturbances; and (iii) in hardware experiments with a Franka robot, \ours synthesizes a robust safety filter that proactively detects potential failures without becoming overly pessimistic, reducing the failure rate ($44\%\rightarrow13\%$), and enables robust policy steering through pessimistic yet plausible WM imaginations, improving the success rate of visuomotor policies ($35\%\rightarrow70\%$).

\section{Related Work}

\para{Robust Decision Making Under Uncertainty} Robust optimization formalizes robust decisions that remain effective across a set of plausible system evolutions, including worst-case realizations~\cite{mayne2005robust, moos2022robust}, by optimizing worst-case performance over an uncertainty set. The uncertainty set characterizes plausible variations in the system~\cite{ben2009robust}, often in physically interpretable terms such as external force~\cite{nguyen2024gameplay}, model errors~\cite{fisac2019bridging, cosner2023robustsafetystochasticuncertainty}, or uncertainty estimated from data~\cite{AkametaluGP, johnstone2021conformal}. Similarly, distributionally robust optimization defines the uncertainty set over probability distributions to account for distribution shifts in environments or model dynamics~\cite{yang2018dynamic, ren2022distributionally}. The problem is commonly formulated as a game between the controller and an adversary that selects worst-case disturbances~\cite{yang2018dynamic, choi2021robust, ren2022distributionally} with game-theoretic methods~\cite{hsu2023isaacs, nguyen2024gameplay} or Lagrangian approximations~\cite{sinha2018certifiable, ren2022distributionally}. A related perspective is risk-sensitive control~\cite{chow2018risk,ma2021conservative,chapman2021risk,nishimura2023rap}, which accounts for adverse outcomes using risk measures such as conditional value-at-risk (CVaR)~\cite{Majumdar2017HowSA}. Certain risk measures can also be expressed as worst-case expectations over corresponding sets of probability distributions, providing a connection between risk-sensitive and robust optimization~\cite{chow2015risk}. In robust reinforcement learning, policies are optimized against adverse perturbations to observations~\cite{liang2022efficient, zhang2020robust, ren2022distributionally}, dynamics variations~\cite{pinto2017robust, mandlekar2017adversarially}, distributional shifts encountered at deployment, and out-of-distribution state-action regions~\cite{curi2021combining, rigter2022rambo, blanchet2023double, chen2026policydriven}.

A key design choice in robust optimization is the disturbance and its uncertainty set, often defined using physically meaningful bounds specified by experts~\cite{hsu2023isaacs, nguyen2024gameplay, oh2026synthesis} or estimated from data~\cite{johnstone2021conformal, rigter2023one, sun2023conformal, marques2024quantifying}, where an overly broad set may admit implausible system evolutions, leading to overly conservative decisions~\cite{zhou2026calibrating}. This issue is more pronounced in the learned latent space of world models, where latent-space similarity does not necessarily reflect the underlying physical dynamics~\cite{ wang2026temporal} and may include implausible out-of-distribution states~\cite{seo2025uncertainty, lutkus2025latent, hansen2026hallucination}. Our work addresses this challenge by constructing a conformalized uncertainty set over plausible latent dynamics, optimizing for worst-case within this set to produce pessimistic but plausible world-model imaginations.

\para{World Models in Robotics} World models predict the outcomes of robot actions directly from high-dimensional observations such as RGB images, enabling vision-based policy learning~\cite{hafner2023dreamerv3}, planning~\cite{hansen2024tdmpc2}, policy evaluation~\cite{guo2026ctrl, quevedo2026worldgym}, and policy improvement~\cite{guo2026vlaw, sharma2026world} through imaginations~\cite{agarwal2025cosmos, gao2026dreamdojo, ye2026world}. Compared to video world models that predict future observations directly in pixel space~\cite{mei2026video, seo2026stressdream}, latent world models jointly learn compact latent representations and their associated dynamics~\cite{zhou2024dino, maes2026leworldmodel}, enabling efficient decision-making directly in the learned latent space~\cite{nakamura2025generalizing, seo2025uncertainty, jain2026texttt}. Because world models operate under partial observability, latent dynamics models are often probabilistic models such as recurrent state-space models~\cite{hafner2019learning} and diffusion models~\cite{Kerssies_2026_CVPR}. However, defining disturbances that meaningfully represent uncertainties for robust optimization is challenging~\cite{ren2022distributionally, nath2026pixels}, while model errors and unseen inputs can further produce hallucinated states during imagination~\cite{seo2025uncertainty, hansen2026hallucination}. Consequently, world models typically use finite samples from learned dynamics~\cite{zhang2023storm, hafner2025training, huang2026nano}, which can miss rare but high-impact outcomes in high-dimensional spaces~\cite{seo2026stressdream}. Our work introduces a latent-space disturbance that directly searches for worst-case latent dynamics with gradient-based optimization.

\para{Latent Safety Filters} Safety filtering is a control-theoretic approach that safeguards dynamical systems from failures given known safety constraints and a system dynamics model~\cite{hsu2023safety}. Safety filters can be implemented through minimally restrictive switching with a safety fallback policy synthesized via Hamilton--Jacobi (HJ) reachability analysis~\cite{bansal2017hamilton}, control barrier functions (CBFs)~\cite{choi2021robust}, or predictive rollout-based methods such as model predictive safety filtering~\cite{li2020robust,bastani2021safe}. While such filters can be synthesized using dynamic programming~\cite{mitchell2007toolbox}, self-supervised learning~\cite{bansal2021deepreach} and reinforcement learning (RL)~\cite{fisac2019bridging} have enabled scalable approximations for high-dimensional systems~\cite{wabersich2023data}. Latent safety filters extend this idea to the latent space of learned world models, enabling safety filtering for hard-to-model tasks and safety constraints from high-dimensional observations~\cite{nakamura2025generalizing, seo2025uncertainty, agrawal2026anysafe}. By anticipating future failures within world-model imaginations, these methods can proactively steer robots away from failures such as spilling the contents of deformable bags~\cite{nakamura2026howto} or toppling complex rigid-body structures~\cite{seo2025uncertainty}. However, existing latent safety filters evaluate safety under nominal latent dynamics and may therefore overestimate safety under optimistic imaginations. Our method enables the synthesis of robust safety filters using game-theoretic adversarial RL~\cite{hsu2023isaacs, nguyen2024gameplay, oh2026synthesis} in the learned latent space of world models.

\section{Preliminaries: Latent World Models}
\label{sec:prelim}

\para{System} The robot operates from high-dimensional observations $\obs_t=\observation(\state_t)\in\obsSpace$ (e.g., RGB images), where $\observation:\stateSpace\rightarrow\obsSpace$ denotes the sensor mapping from the privileged state to the observation. We assume that the underlying system \(\state_{t+1}=\dynamics(\state_t, \action_t, \disturbance_t)\) is a discrete-time system with a privileged state and continuous control input $\action_t \in \actionSpace \subset \mathbb{R}^{|\actionSpace|}$, and the disturbance $\disturbance_t \in \disturbanceSpace$ is not observable to the robot, inducing uncertainty in the system. In classical control, a disturbance set $\disturbanceSpace$ is typically specified by human experts or identified from data and often carries a physical meaning (e.g., bounds on wind or actuation error)~\cite{fisac2019bridging, hsu2023safety, wabersich2023data}.

\para{Latent World Model} To represent hard-to-model systems, we learn a latent world model from a dataset of observation-action trajectories,
\(
\mathcal{D}_{\text{train}}
:=\left\{
(\obs_t,\action_t,\obs_{t+1})_{t=1}^{T-1}
\right\}_{i=1}^{N}
\)~\cite{hafner2023dreamerv3, maes2026leworldmodel}. It comprises: (i) an encoder $\encoder$ that maps an observation history to latent states $\latent \in \latentSpace$, (ii) a decoder $\decoder$ that reconstructs observations from latent states, and (iii) a latent dynamics model $\dynz\!:\!\latentSpace\times\actionSpace\!\rightarrow\!\Delta(\latentSpace)$ that predicts the next latent states, where $\Delta$ denotes the set of distributions:
\begin{equation}
\begin{aligned}
\text{Encoder: } &\latent_t \sim \encoder(\latent_t \mid \obs_{\le t}) \\
\text{Decoder: } &\obs_t \sim \decoder(\obs_t \mid \latent_t) \\ 
\text{Latent Dynamics: } &\latent_t \sim \dynz(\latent_t \mid \latent_{t-1}, \action_{t-1}).
\label{eq:world_model} 
\end{aligned}
\end{equation}

\para{World Model Training} In this work, we focus on recurrent state-space models (RSSM)~\cite{hafner2019learning, hafner2023dreamerv3}, which learn probabilistic latent dynamics with tractable conditional distributions over future latent states, using Gaussian or categorical distributions:
\begin{equation}\label{eq:latent-dynamics-parameterization}
    \dynz(\latent'\mid \latent, \action)
    =
    \mathcal{N}\big(\mu (\latent, \action), \Sigma(\latent, \action)\big)
    \,\, \text{or} \,\,\,
    \mathrm{Cat}\big(\parameter(\latent, \action)\big).
\end{equation}
The latent world model can be trained by learning both the latent representation through a reconstruction objective and the latent dynamics through a KL-divergence term:
\begingroup
\renewcommand{\mintagsep}{2pt}
\setlength{\thinmuskip}{0mu}
\setlength{\medmuskip}{1mu}
\setlength{\thickmuskip}{1mu}
\begin{equation}\label{eq:rssm-training-objective}
\begin{aligned}
\sum_{t=1}^T
\!\big[\!\underbrace{\log\decoder(\obs_t\!\mid\!\latent_t)}_{\text{\makebox[0pt]{reconstruction}}}\!-\!\underbrace{\KL\!\left(
\encoder(\latent_t\!\mid\!\obs_{\le t})
\|
\dynz(\latent_t\!\mid\!\latent_{t-1},\!\action_{t-1})
\right)}_{\text{latent dynamics}}\big],
\end{aligned}
\end{equation}
\endgroup
where the $\encoder(\latent_t\mid\obs_{\le t})$ provides a learned posterior target for the latent dynamics predictive distribution $\dynz(\latent_t\mid\latent_{t-1},\action_{t-1})$.

\para{Uncertainty in Latent Dynamics Represents Disturbances} Recall the example from Fig.~\ref{fig:front-fig}, where unobservable disturbances such as the oiliness of the spatula can lead to different next states under the same action. The probabilistic latent dynamics model therefore \textit{implicitly} captures these disturbances through its predictive distribution over next latent states~\cite{AkametaluGP, wabersich2023data} (Fig.~\ref{fig:front-fig}, left). During WM training with \eqref{eq:rssm-training-objective}, the latent state encoded from the next observation serves as the realized transition target for the dynamics model through the KL-divergence objective. In this way, uncertainty induced by unobserved system disturbances is captured in terms of the predictive uncertainty in the learned latent dynamics.

\section{Setup: Robust Optimization in Latent Space}\label{sec:setup-latent-ro}

Using the latent world model to predict the outcomes of robot actions, our goal is to solve the robust optimization problem in \eqref{eq:intro_robust_optimization} over the world model's latent space. We specify the robot's task through a cost function $\objective:\latentSpace\rightarrow\mathbb{R}$ that evaluates a latent state. Depending on the task, $\objective$ may represent an immediate cost, a cumulative cost over a finite horizon~\cite{ren2022distributionally}, or an infinite-horizon value function~\cite{choi2021robust, hsu2023safety}. Throughout this section, we present the optimization in single-step form for notational simplicity; the formulation also extends to multi-step (e.g., $T$-step) or infinite-horizon settings.

\para{Latent Disturbances as Pessimistic Imaginations} Traditionally, solving \eqref{eq:intro_robust_optimization} requires identifying disturbances $\disturbance\in\disturbanceSpace$ that induce worst-case next states, but the learned latent dynamics in \eqref{eq:world_model} do not explicitly model such disturbances. Nevertheless, as described in Sec.~\ref{sec:prelim}, the latent dynamics implicitly capture the effects of system disturbances through probabilistic predictions. Based on the intuition that predictive uncertainty in the learned dynamics reflects such disturbances, we model the worst-case \methodhighlight{disturbance_orange}{\textit{latent-space disturbance}} as selecting perturbed latent dynamics $\latentDisturbanceDynamics$ around the nominal learned dynamics $\dynz$, inducing plausible but pessimistic imaginations:
\begin{equation}
\begingroup
\setlength{\fboxsep}{0pt}
\renewcommand{\eqnannotationfont}{\small}
\policy_\text{rob}(\latent)
=
\argmin_{\action\in\actionSpace}
\eqnmarkbox[disturbance_orange]{robust-inner-opt}{\max_{\latentDisturbanceDynamics(\latent,\action)\in\eqnhighlight{teal}{\ambiguity(\latent,\action)}}
\mathbb{E}_{\latent'\sim\latentDisturbanceDynamics(\cdot\mid\latent,\action)}}
\left[\objective(\latent')\right],
\label{eq:problem-setup-latent-robust-optimization}
\endgroup
\end{equation} where \methodhighlight{teal}{the uncertainty set \mbox{$\ambiguity(\latent,\action)$}} defines a set of plausible latent transitions of the underlying system for a given $(\latent,\action)$. 

\begin{figure}[h]
\centering
\includegraphics[width=1.0\linewidth]{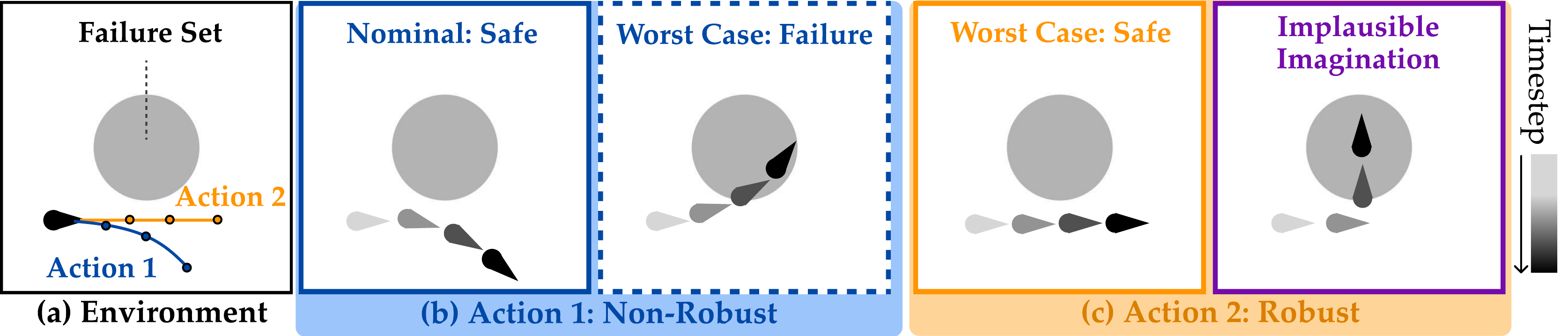}\vspace{-0.1in}
\caption{\textbf{Naughty 3D Dubins' Car.} (a) Environment with the vehicle and a failure set at the center. (b) For an action sequence that turns right, an adverse disturbance can instead drive the vehicle into failure, making the decision non-robust. (c) Driving straight remains safe even in the worst case. However, an implausible imagination can make this robust action appear unsafe.}\label{fig:naughty-dubins}
\vspace{-0.25in}
\end{figure}

\para{Running Example: \textit{Naughty} 3D Dubins' Car}\label{sec:running-example} In Fig.~\ref{fig:naughty-dubins}, we consider a discrete-time 3D Dubins' car with state $\state=[p_x,p_y,\theta]$, speed $v\!=\!1\mathrm{m/s}$, and timestep $\Delta t\!=\!0.05\,\mathrm{s}$. A circular failure set of radius $0.25\,\mathrm{m}$ centered at the origin defines the cost function \( \objective(\state)\!:=\!0.25^2-p_x^2-p_y^2.\) Each state is rendered as a \(3\!\times\!128\!\times\!128\) RGB image, and the robot's action is the angular velocity $\action_t\in[-1.25,1.25]\,\mathrm{rad/s}$, while its \textit{naughty} behavior is a disturbance introducing uncertainty by randomly flipping the sign of the control input:
\begin{equation}\label{eq:naughty-dynamics}
    \state_{t+1}
    \!=\!
    \state_t
    +
    \Delta t[v\cos(\theta_t),v\sin(\theta_t),\delta_t \action_t \,],
    \,\,
    \delta_t \in \{-1,1\}.
\end{equation} This disturbance is unobservable to the robot, making the underlying dynamics uncertain: for the same state and action, two distinct plausible next states may be realized. A robust action remains safe under the worst-case plausible outcome, whereas a non-robust action may appear safe \textit{nominally} but fail under an adverse disturbance.

\para{Challenges: Overly Pessimistic Disturbances} While \eqref{eq:problem-setup-latent-robust-optimization} formalizes robust optimization in the latent space of the world model, constructing an uncertainty set of latent dynamics is challenging because a latent-space distance metric can be misleading for two reasons. First, distance in a high-dimensional latent space does not necessarily reflect physical or dynamical plausibility, so nearby latent states may still correspond to infeasible outcomes. Second, learned latent dynamics may assign high likelihood to implausible out-of-distribution states, allowing the latent disturbance to exploit them and produce implausible imaginations. Consequently, a poorly constructed uncertainty set can admit overly pessimistic imaginations and make robust optimization overly conservative. 

For example, an action sequence of \methodhighlight{nominal_blue}{turning right} in Fig.~\ref{fig:naughty-dubins}(b) is a non-robust action because a control-sign flip can steer the vehicle into the central failure set, whereas \methodhighlight{disturbance_orange}{driving straight} in Fig.~\ref{fig:naughty-dubins}(c) is a robust action sequence because it remains safe under all possible control-sign flips. However, an \methodhighlight{violet}{implausible latent disturbance} can imagine the vehicle turning into the failure set, making the robust action appear unsafe.

\section{Conformalized Latent-Space Disturbance}

Our key idea is to model a latent-space disturbance as an alternative latent dynamics model within a \textit{conformalized} uncertainty set over plausible latent dynamics. To prevent the latent-space disturbance from becoming overly pessimistic, we adopt a dynamics-aware similarity metric together with out-of-distribution detection (Sec.~\ref{sec:disturbance-set}), and then calibrate this set using conformal prediction (Sec.~\ref{sec:conformal-calib}) (Fig.~\ref{fig:front-fig}). Finally, we jointly compute the robust action and the worst-case latent disturbance through game-theoretic optimization (Sec.~\ref{sec:latent-robust-optim}).

\subsection{Uncertainty Set over Plausible Latent Dynamics}\label{sec:disturbance-set}

\para{Dynamics-Aware Similarity} 
Recall that the latent dynamics $\dynz(\latent_{t+1}\,|\,\latent_t,\action_t)$ is trained to match the distribution over next latent states induced by next observations $\encoder(\latent_{t+1}\,|\,\obs_{\le t+1})$ using the KL divergence in \eqref{eq:rssm-training-objective}. This motivates using the same KL divergence to define the uncertainty set over plausible latent transitions, yielding a dynamics-aware notion of similarity rather than relying on an arbitrary distance in latent space. Specifically, for each transition from a latent-action pair $(\latent,\action)$, we define the dynamics-aware uncertainty set over plausible latent dynamics $\latentDisturbanceDynamics$ as \methodhighlight{teal}{a left-KL ball around the learned predictive distribution}\footnote{This KL direction matches the training objective in \eqref{eq:rssm-training-objective}. While the formulation is distributional, the mode-seeking character of left KL favors concentration on a single adverse mode of the learned latent dynamics model.}:
$\divergence_{\mathrm{KL}}\!\left(
\latentDisturbanceDynamics(\latent,\action)
\,\|\,
\dynz(\latent,\action)
\right)
\!\le \thresholdKL$, where $\thresholdKL$ controls the size of the uncertainty set to ensure coverage. This dynamics-aware KL ball contains alternative latent transitions consistent with the learned distribution, whereas a Euclidean ball may include nearby but dynamically infeasible transitions.

\para{In-Distribution Constraint} While the KL ball can provide \emph{coverage} by setting its radius to include plausible transitions, in high-dimensional latent spaces, coverage alone does not ensure \emph{precision}: latent states with high likelihood under the predictive distribution may still be implausible, or out-of-distribution (OOD)~\cite{seo2025uncertainty, hansen2026hallucination}. The disturbance may exploit such states, leading to overly pessimistic imaginations. We therefore augment the uncertainty set with an \methodhighlight{violet}{in-distribution constraint using an OOD score function \mbox{$\densityEstimator(\latent)$}}:
\begingroup
\setlength{\fboxsep}{0pt}
\setlength{\thickmuskip}{4mu}
\setlength{\medmuskip}{3mu}
\renewcommand{\eqnannotationfont}{\small}
\renewcommand{\mintagsep}{2pt}
\begin{align}
\rule{0pt}{2.4em}
\ambiguity(\dynz; \latent,\action)
:=&
\Big\{
\latentDisturbanceDynamics \in \Delta(\latentSpace)
: \eqnmarkbox[teal]{ambiguity-kl}{
\divergence_{\mathrm{KL}}\!\left(
\latentDisturbanceDynamics(\latent,\action)
\,\middle\|\,
\dynz(\latent,\action)
\right)
\le \thresholdKL
},
\notag\\[0.1em]
&\eqnmarkbox[violet]{ambiguity-ood}{
\mathbb{P}_{\latent' \sim \latentDisturbanceDynamics(\cdot\mid\latent,\action)}
\!\left(
\densityEstimator(\latent') > \thresholdOOD
\right)
\le \calibOOD
}
\Big\},
\rule[-1.7em]{0pt}{1.7em} 
\begin{tikzpicture}[overlay,remember picture]
\coordinate (ambiguity-kl-arrow) at ([xshift=1.8em]ambiguity-kl.north west);
\coordinate (ambiguity-ood-arrow) at ([xshift=5.8em]ambiguity-ood.south west);
\end{tikzpicture}
\addvalue{ambiguity-kl-arrow}{teal}
\addvalue{ambiguity-ood-arrow}{violet}
\annotate[yshift=0.4em,xshift=0.3em]{above,right,label above}{ambiguity-kl-arrow}{Dynamics-aware Similarity}
\annotate[yshift=-0.1em,xshift=0.3em]{below,right,label above}{ambiguity-ood-arrow}{In-distribution Constraint}
\label{eq:ambiguity-set}
\end{align}
\endgroup where $\thresholdOOD$ is the OOD threshold and $\calibOOD$ is the error rate. In practice, we instantiate $\densityEstimator$ using a flow-matching-based density estimator (logpZO)~\cite{xu2025can} trained on training latent states, where smaller scores indicate in-distribution.

\subsection{Conformal Calibration of the Uncertainty Set}\label{sec:conformal-calib}

Because the uncertainty set depends on learned latent dynamics and OOD score functions, its thresholds must be calibrated to ensure that plausible transitions are captured without making the uncertainty set overly conservative. We therefore use conformal prediction (CP)~\cite{shafer2008tutorial, angelopoulos2023conformal, marques2024quantifying}, a distribution-free statistical method, to calibrate two quantities: the KL-ball radius $\thresholdKL$ and the OOD threshold $\thresholdOOD$. Since transitions within a trajectory are temporally dependent, we perform both calibrations at the trajectory level using a held-out in-distribution calibration dataset of observation-action trajectories: \(
\mathcal{D}_{\text{calib}}
:=\left\{
(\obs_t^i,\action_t^i,\obs_{t+1}^i)_{t=1}^{T-1}
\right\}_{i=1}^{N_\text{calib}}.
\) Proofs and additional details for this section are provided in Appendix~\ref{sec:appendix-conformal}.

\para{Calibrating the Dynamics-Aware Similarity}
Our goal is to calibrate the KL-ball radius $\thresholdKL$ such that the KL ball around the learned latent dynamics prediction contains the ground-truth latent dynamics realized by the systems with high probability. Mirroring the training objective of the latent dynamics model \eqref{eq:rssm-training-objective}, we adopt the nonconformity score $\KL\left(
\encoder(\latent_t \mid \obs_{\le t})
\,\|\,\dynz(\latent_t \mid \latent_{t-1}, \action_{t-1})
\right),$ and apply conformal prediction with a user-specified miscoverage level $\calibAmbiguity \in [0,1]$. Under the exchangeability assumption, the calibrated uncertainty set guarantees inclusion of the test-time latent representation encoded from observations within the uncertainty set with probability at least $1-\calibAmbiguity$:
\begin{equation}\label{eq:kl-ball-cp}
\begin{aligned}
\mathbb{P}\left(
D_\mathrm{KL}\!\left(
\encoder(\obs^{\mathrm{test}}_{\le t})
\,\|\,
\dynz(\latent^{\mathrm{test}}_{t-1}, \action^{\mathrm{test}}_{t-1})
\right)
\leq \thresholdKL
\right)
\geq 1-\calibAmbiguity.
\end{aligned}
\end{equation}

\para{Calibrating the In-Distribution Constraint} While we aim to classify a latent state as OOD when $\densityEstimator(\latent) > \thresholdOOD$, OOD latent states are, by definition, not directly available~\cite{seo2025uncertainty}; we only have access to in-distribution states. Similar to \cite{seo2025uncertainty}, we therefore adopt class-conditional conformal prediction~\cite{ding2023class} to calibrate $\thresholdOOD$ using in-distribution states encoded from the calibration dataset, providing recall guarantees for detecting in-distribution states with a miscoverage level $\calibOOD \in [0,1]$:
\begin{equation}\label{eq:ood-cp}
\mathbb{P}\left(
\densityEstimator(\latent_\text{ID}^{\text{test}})
\leq \thresholdOOD
\right)
\geq 1-\calibOOD.
\end{equation} Intuitively, an in-distribution latent state is accepted as in-distribution with probability at least $1-\calibOOD$, while latent states whose OOD scores exceed $\thresholdOOD$ are less likely to be in-distribution and are therefore classified as OOD.

\para{Trajectory-Level Calibration} \eqref{eq:kl-ball-cp} and \eqref{eq:ood-cp} assume exchangeability of the calibration and test samples. However, individual transitions within a trajectory are temporally dependent and therefore cannot be treated as exchangeable. To address this, similar to \cite{ren2023robots}, we perform calibration at the \textit{trajectory} level, assuming that the calibration trajectories and the test trajectory are exchangeable, and causally reconstruct the state-level thresholds from the trajectory-level result.

Given a trajectory $\tau_i:=\{(\obs_{t}^i,\action_{t}^i,\obs_{t+1}^i)\}_{t=1}^{T-1} \in \mathcal{D}_{\text{calib}}$ with state-level nonconformity scores $s_t^i$, we define the trajectory-level nonconformity score as $S(\tau_i)\!:=\!\max_t s_t^i$ and compute the $(1\!-\!\alpha)$-quantile of $\{S(\tau_i)\}_{i=1}^{N_{\mathrm{calib}}}$ to obtain the calibrated threshold $\epsilon^\text{traj}$. This trajectory-level threshold can then be causally reconstructed as the state-level threshold:
\begin{equation}
\mathbb{P}\left(
\max_t s_t^{\mathrm{test}} \leq \epsilon^\text{traj}
\right)
=
\mathbb{P}\left(
s_t^{\mathrm{test}} \leq \epsilon^\text{traj},\;\forall t
\right)
\geq 1-\alpha,
\end{equation}
where $(\epsilon,\alpha)\in\{(\thresholdKL,\calibAmbiguity),(\thresholdOOD,\calibOOD)\}$. Thus, the trajectory-level threshold can be causally applied as the corresponding state-level threshold at each test-time transition.

\subsection{Solving Latent-Space Robust Optimization}\label{sec:latent-robust-optim}

With a calibrated model of latent disturbances, we now solve the latent-space robust optimization problem in \eqref{eq:problem-setup-latent-robust-optimization} by jointly optimizing the latent disturbance within the conformalized uncertainty set and the action that minimizes the cost under the imagination induced by the worst-case latent disturbance.

\para{Efficient Disturbance Parameterization}
Directly searching over all admissible dynamics in the uncertainty set is intractable in high-dimensional latent spaces (e.g., $1$K dims). We therefore parameterize the latent-space disturbance as a \textit{residual} perturbation to the learned latent dynamics. Specifically, a latent-space disturbance network $\latentDisturbance$ outputs an additive residual to the next latent state distribution parameters: \begin{equation}\label{eq:disturbance-residual} \latentDisturbance\,(\latent,\action) = \left(\Delta\mu\,(\latent,\action),\Delta\Sigma\,(\latent,\action)\right) \quad \text{or} \quad \Delta\parameter\,(\latent,\action), \end{equation} yielding the latent dynamics conditioned on $\latentDisturbance$ and $\dynz$: \begin{equation}\label{eq:rssm-pessimistic-dynamics} \latentDisturbanceDynamicsParameterized(\latent'\mid\latent,\action) = \mathcal{N}(\mu+\Delta\mu,\Sigma+\Delta\Sigma) \,\,\text{or} \,\, \mathrm{Cat}(\parameter+\Delta\parameter). \end{equation}

\para{Game-Theoretic Optimization}
We compute the worst-case latent disturbance $\eqnhighlight{disturbance_orange}{\optimalDisturbance(\cdot\mid\latent,\action)}$ for each latent state and robot action pair $(\latent,\action)$ by softly relaxing the uncertainty set into hinge penalties. Since both the latent dynamics and OOD score function are differentiable, we reformulate the inner maximization of \eqref{eq:problem-setup-latent-robust-optimization} with coefficients $\penaltyOOD,\penaltyKL>0$:
\begingroup
\setlength{\fboxsep}{0pt}
\setlength{\thinmuskip}{1mu}
\setlength{\medmuskip}{0.5mu}
\setlength{\thickmuskip}{1mu}
\setlength{\jot}{1pt}
\begin{align}
\max_{\latentDisturbanceDynamicsParameterized}\,
\,\mathbb{E}_{\latent'\sim\latentDisturbanceDynamicsParameterized(\latent, \action)}
&\Big[\objective(\latent')\label{eq:disturbance-objective}-\eqnhighlight{violet}{\penaltyOOD
\big[\densityEstimator(\latent')-\thresholdOOD\big]_+}\\[0pt]
&-\eqnhighlight{teal}{\penaltyKL
\Big[
\KL(
\latentDisturbanceDynamicsParameterized (\latent, \action)
\|
\dynz(\latent, \action)
)
-\thresholdKL
\Big]_+}
\Big],\notag
\end{align}
\endgroup where $[x]_+:=\max(x,0)$ applies a penalty only when a constraint is violated. The $\latentDisturbance$ is optimized through differentiable latent dynamics with gradient ascent, and the robust action is then optimized under the resulting latent disturbance:
\begin{equation}\label{eq:robust-policy-objective}
\begingroup
\setlength{\fboxsep}{0pt}
\policy_\text{rob}(\latent)=\arg\min_{\action}
\eqnmarkbox[disturbance_orange]{policy-optimal-disturbance}{\mathbb{E}_{
\latent'\sim\optimalDisturbance(\cdot\mid\latent,\action)
}}
\left[
\objective(\latent')
\right],
\endgroup
\end{equation} where the robust action and latent-space disturbance form a two-player minimax game that can be solved iteratively~\cite{ren2022distributionally}. Following \cite{wang2024magics}, we update the disturbance faster than the action to promote convergence, providing a practical approximation to the latent-space robust optimization problem in \eqref{eq:problem-setup-latent-robust-optimization}.

\section{Application: Robust Policy Steering}

\subsection{Setup: Policy Steering with World Models}\label{sec:application-prelim}

Modeling latent disturbances allows us to instantiate a suite of runtime steering techniques that guide a visuomotor policy robustly away from hard-to-model failures. We consider two steering paradigms: (i) \textit{latent safety filters}, which safeguard a task policy using a safety monitor and safe controller computed from world-model imaginations~\cite{nakamura2025generalizing, seo2025uncertainty, agrawal2026anysafe, kim2025well}; and (ii) \textit{sample-and-verify}, which samples multiple action candidates from a task policy, evaluates their imagined outcomes using the world model, and selects the best action~\cite{wu2025foresight, yuan2026ups}. Existing WM-based policy steering methods predominantly rely on \methodhighlight{nominal_blue}{\textit{nominal} world-model imaginations} both when synthesizing the safety controller and during runtime action selection. As a result, they can overestimate safety under optimistic predictions and produce non-robust safety actions that remain effective only under nominal system realizations.

\para{Failure Specification} Hard-to-model constraints for policies are specified via a latent failure set $\failure := \{\latent : \ellz(z) \leq 0 \} \subset \latentSpace$ defined as the zero-sublevel set of a latent failure margin function $\ellz:\!\latentSpace\!\rightarrow\!\mathbb{R}$. The margin function $\ellz\in[-1,1]$ is, in practice, instantiated as a binary classifier. 

\para{Steering Method \#1: Latent Safety Filters}
A latent safety filter synthesizes two components with WM: a safety value function \(\monitor:\!\latentSpace\!\rightarrow\!\mathbb{R}\), which measures how close the current latent state of the robot is to inevitable failures, and a safety-preserving policy \(\fallback: \latentSpace \rightarrow \actionSpace\), which steers the robot away from failure. The latent safety filter computes these functions using the learned latent dynamics $\dynz$ through Hamilton--Jacobi (HJ) reachability analysis~\cite{mitchell2005time, hsu2023safety}, satisfying the latent-space safety Bellman equation~\cite{nakamura2025generalizing}:
\begingroup
\setlength{\fboxsep}{0pt}
\renewcommand{\eqnannotationfont}{\small}
\setlength{\abovedisplayskip}{4pt}
\setlength{\belowdisplayskip}{4pt}
\setlength{\abovedisplayshortskip}{4pt}
\setlength{\belowdisplayshortskip}{4pt}
\setlength{\jot}{1pt}
\begin{subequations}\label{eq:trad-fixed-pt-bellman}
\begin{align}
        \rule{0pt}{2.2em}
        \monitor(\latent)
        &= \min \Big\{ \ellz(\latent),
        \max_{\action \in \actionSpace}
        \eqnmarkbox[nominal_blue]{nominal-value-expectation}{\mathbb{E}_{\latent'\sim \dynz(\cdot\mid\latent, \action)}}
        \left[\monitor(\latent')\right] \Big\},
        \begin{tikzpicture}[overlay,remember picture]
        \coordinate (nominal-expectation-arrow) at (nominal-value-expectation.north);
        \end{tikzpicture}%
        \addvalue{nominal-expectation-arrow}{nominal_blue}%
        \annotate[yshift=0.5em,xshift=-0.3em]{above,left,label above}{nominal-expectation-arrow}{Nominal Latent Dynamics}%
        \label{eq:trad-fixed-pt-bellman-value}\\
        \fallback(\latent)
        &= \arg\max_{\action \in \actionSpace}
        \eqnmarkbox[nominal_blue]{nominal-policy-expectation}{\mathbb{E}_{\latent'\sim \dynz(\cdot\mid\latent, \action)}}
        \left[\monitor(\latent')\right], \label{eq:trad-fixed-pt-bellman-policy}
\end{align}
\end{subequations}
\endgroup
\noindent where a discount factor $\gamma\in[0,1)$ can be incorporated to induce a contraction mapping~\cite{fisac2019bridging}. The zero-sublevel set of the safety value,
$\unsafeSet:=\{\latent:\monitor(\latent)\leq0\}$,
defines the unsafe set of latent states from which failure cannot be avoided.

At runtime, the safety filter safeguards an arbitrary task policy $\policyTask$ by evaluating its proposed action in a least-restrictive fashion~\cite{bansal2017hamilton}. Given the current latent state $\latent$, it estimates the expected safety value of next states {\setlength{\fboxsep}{0pt}$\eqnhighlight{nominal_blue}{\latent'\sim\dynz(\cdot\mid\latent,\policyTask(\latent))}$}, and the task action is executed if the predicted safety value is positive; otherwise, it is overridden by the fallback policy: \begin{equation}\label{eq:trad-filter-rollout}
\begingroup
\setlength{\fboxsep}{0pt}
\policy^{\text{steer}}(\latent) = \mathds{1}_{\{\eqnhighlight{nominal_blue}{\mathbb{E}_{\latent'}[\monitor(\latent')]} > 0\}} \cdot \policyTask(\latent) + \mathds{1}_{\{\eqnhighlight{nominal_blue}{\mathbb{E}_{\latent'}[\monitor(\latent')]} \le 0\}} \cdot \fallback(\latent).
\endgroup
\end{equation}

\para{Steering Method \#2: Sample-and-Verify}
Given $K$ candidate action sequences $\{\mathbf{a}^{k}\}_{k=1}^{k=K}\!\sim\!\policyTask(\latent)$ sampled from a task policy (e.g., diffusion policy~\cite{chi2023diffusionpolicy}), we evaluate each candidate using WM imaginations and select the best action sequence:
\begin{equation}
\label{eq:sampling-based-steering}
\begingroup
\setlength{\fboxsep}{0pt}
\policy^{\text{steer}}(\latent)
= \arg\min_{\mathbf{a}\in\{\mathbf{a}^{(k)}\}_{k=1}^{K}}
\eqnhighlight{nominal_blue}{
\mathbb{E}_{\mathbf{z}'\sim\dynz(\cdot\mid\latent,\mathbf{a})}
}
\left[\objective(\mathbf{z}')\right],
\endgroup
\end{equation}
where $\mathbf{z}'\!:=\!z_{i:i+H}$ denotes the latent states autoregressively generated over the action chunk of length $H$ using nominal imagination
{\setlength{\fboxsep}{0pt}$\eqnhighlight{nominal_blue}{\latent_{i+1}^{k}\sim\dynz(\cdot\mid\latent_i^{k},a_i^{k})}$} starting from $\latent_i$,
and $\objective$ is a cost function that scores its outcome, such as a latent failure margin~\cite{jain2026texttt}, safety value function~\cite{kim2025well}, or a VLM-based verifier applied to decoded images~\cite{yuan2026ups}.

\para{Limitation: Nominal Outcome under Uncertainty} While policy steering methods with latent world models show promise in steering robots away from hard-to-model failures~\cite{nakamura2025generalizing, seo2025uncertainty}, they can become non-robust in tasks with uncertain dynamics because both the synthesis in \eqref{eq:trad-fixed-pt-bellman} and the runtime evaluation in \eqref{eq:trad-filter-rollout} and \eqref{eq:sampling-based-steering} rely only on the \methodhighlight{nominal_blue}{nominal WM imaginations}. When disturbances in the underlying system can induce qualitatively different outcomes for the same action, the robot may choose non-robust actions based on overly optimistic evaluations.

\subsection{Robustifying Policy Steering}\label{sec:robust-safety-filter}
Both policy-steering paradigms in Sec.~\ref{sec:application-prelim} require a latent disturbance that drives the system toward failure through a sequence of plausible future transitions, rather than a single worst-case transition. To learn such a worst-case latent disturbance, we instantiate \ours using latent-space Hamilton--Jacobi--Isaacs~(HJI) reachability analysis~\cite{mitchell2005time}, which recursively evaluates future safety over successive latent transitions.

\para{Latent-Space HJI Reachability Analysis}
With the conformalized latent-space disturbance $\latentDisturbanceDynamics\in\ambiguity$, we formulate infinite-horizon robust decision making by replacing \eqref{eq:trad-fixed-pt-bellman} with a latent-space Bellman--Isaacs fixed-point equation~\cite{mitchell2005time, hsu2023isaacs}, the discrete-time dynamic-programming counterpart of Hamilton--Jacobi--Isaacs (HJI) reachability analysis:
\vspace{-0.05in}
\begin{equation}\label{eq:robust-fixed-pt-bellman}
\begingroup
\setlength{\fboxsep}{0pt}
\setlength{\thinmuskip}{1.5mu}
\setlength{\medmuskip}{1.5mu}
\setlength{\thickmuskip}{1.5mu}
\setlength{\abovedisplayshortskip}{2pt}
\setlength{\belowdisplayshortskip}{4pt}
\begin{aligned}
\monitorRobust(\latent)
\!=\!
\min\Bigg\{
\!\ellz(\latent)\!,
\max_{\action\in\actionSpace}\!
\eqnhighlight{disturbance_orange}{\min_{\latentDisturbanceDynamics\in\eqnhighlight{teal}{\ambiguity}}
\!\mathbb{E}_{\latent'\sim\latentDisturbanceDynamics(\cdot\mid\latent,\action)}}
\big[\monitorRobust(\latent')\big]
\Bigg\}.
\end{aligned}
\endgroup
\end{equation} Similarly, the zero-sublevel set $\unsafeSet_\text{rob}:=\{\latent:\monitorRobust(\latent)\leq0\}$ defines a robust latent unsafe set, and the corresponding robust safety policy and worst-case latent-space disturbance $\optimalDisturbance$ are then formulated as a two-player game optimizing against $\monitorRobust$:
\begingroup
\setlength{\fboxsep}{0pt}
\setlength{\thinmuskip}{2mu}
\setlength{\medmuskip}{2mu}
\setlength{\thickmuskip}{2mu}
\setlength{\abovedisplayskip}{0pt}
\setlength{\belowdisplayskip}{3pt}
\setlength{\abovedisplayshortskip}{0pt}
\setlength{\belowdisplayshortskip}{3pt}
\vspace{-0.05in}
\begin{subequations}\label{eq:robust-fixed-pt-bellman-optimizer}
\begin{align}
\eqnhighlight{disturbance_orange}{\optimalDisturbance(\cdot\mid\latent,\action)}
&=
\argmin_{\latentDisturbanceDynamics(\latent,\action)\in\eqnhighlight{teal}{\ambiguity(\latent,\action)}}
\mathbb{E}_{\latent'\sim\latentDisturbanceDynamics(\cdot\mid\latent,\action)}
\big[\monitorRobust(\latent')\big],
\label{eq:robust-fixed-pt-bellman-disturbance}
\\
\fallbackRobust(\latent)
&=
\argmax_{\action\in\actionSpace}
\eqnhighlight{disturbance_orange}{\mathbb{E}_{\latent'\sim\optimalDisturbance(\cdot\mid\latent,\action)}}
\big[\monitorRobust(\latent')\big].
\label{eq:robust-fixed-pt-bellman-policy}
\end{align}
\end{subequations}
\endgroup

\para{Computation via Adversarial RL}
To compute the solution of the latent-space HJI reachability analysis, we approximate the dynamic game in \eqref{eq:robust-fixed-pt-bellman}--\eqref{eq:robust-fixed-pt-bellman-optimizer} using off-policy actor--critic learning with Soft Actor-Critic (SAC)~\cite{haarnoja2018soft}, using the WM as a simulator. We learn three networks: (i) a robust safety critic $\QfunctionRobust(\latent,\action)$, with $\monitorRobust(\latent)=\QfunctionRobust(\latent,\fallbackRobust(\latent))$;
(ii) a latent-space disturbance network $\latentDisturbance(\latent,\action)$ that predicts residual distribution parameters for the nominal latent dynamics $\dynz(\latent,\action)$, yielding the pessimistic dynamics $\latentDisturbanceDynamicsParameterized(\latent,\action)$, as in \eqref{eq:disturbance-residual}; and
(iii) a robust safety policy $\fallbackRobust(\latent)$. 

Starting from latent states encoded from $\dataset_\text{train}$, we generate imaginations with $\fallbackRobust$ and $\latentDisturbanceDynamicsParameterized$, and store the latent-action pairs $\{(\latent, \action)\}$ in a replay buffer $\buffer$. At each iteration, we sample transitions from $\buffer$ and update the robust safety critic using the discounted safety Bellman equation, with $\gamma\in[0,1)$~\cite{fisac2019bridging}:
\begin{equation}
    \begin{aligned}
    \mathcal{L}_{\QfunctionRobust}
    =
    \mathbb{E}_\buffer\Big[
    &\Big(
    \QfunctionRobust(\latent ,\action)
    -
     \Big(
    (1-\gamma)\,\ellz(\latent)
    \\
    & +\gamma\,\min\!\left\{
    \ellz(\latent),
    \targetQfunctionRobust(\latent',\action')
    \right\}
    \Big)
    \Big)^2
    \Big],
    \end{aligned}\label{eq:discounted-safety-value}
\end{equation}
where the next latent state {\setlength{\fboxsep}{0pt}$\eqnhighlight{disturbance_orange}{\latent'\sim\latentDisturbanceDynamicsParameterized(\cdot\mid\latent,\action)}$} is sampled from the latent dynamics induced by the current disturbance policy, and $\action'\sim\fallbackRobust(\cdot\mid\latent')$ is sampled from the current robust safety policy. The latent-space disturbance $\latentDisturbanceDynamicsParameterized$ is then updated to minimize the robust safety value while remaining within the calibrated uncertainty set, following \eqref{eq:disturbance-objective}:
\begingroup
\setlength{\fboxsep}{0pt}
\setlength{\thinmuskip}{2mu}
\setlength{\medmuskip}{2mu}
\setlength{\thickmuskip}{2mu}
\setlength{\abovedisplayskip}{5pt}
\setlength{\belowdisplayskip}{5pt}
\begin{subequations}\label{eq:robust-latent-safety-filters-disturbance-optim}
\renewcommand{\mintagsep}{2pt}
\begin{align}
&\mathcal{L}_{\latentDisturbance}:=
\mathbb{E}_\buffer
\Big[
\underbrace{\QfunctionRobust(\latent', \fallbackRobust(\latent'))}_{\text{Minimize Value}}
+
\underbrace{\eqnhighlight{violet}{\penaltyOOD\,\big[\densityEstimator(\latent')-\thresholdOOD\big]_+}}_{\text{In-Distribution Constraint}}
\notag\\[0.2em]
&\quad+
\underbrace{\eqnhighlight{teal}{\penaltyKL\,
\big[\KL (\latentDisturbanceDynamicsParameterized(\cdot \mid \latent, \action) \,\|\, \dynz(\cdot \mid \latent, \action)) - \thresholdKL \big]_+}}_{\text{Dynamics-aware Uncertainty Set Constraint}}
\Big],\label{eq:disturbance-loss}\\[0.5em]
&\mathcal{L}_{\fallbackRobust}:=
\mathbb{E}_\buffer
\Big[\underbrace{-\QfunctionRobust(\latent, \action')}_{\text{Maximize Value}}
+
\underbrace{\alpha_{\text{entropy}}\, \log \fallbackRobust(\action' \mid \latent)}_{\text{SAC Entropy Regularization}}\Big].
\label{eq:safety-policy-loss}
\end{align}
\end{subequations}
\endgroup
\noindent where $\alpha_{\text{entropy}}$ is the SAC entropy-regularization coefficient.

\para{Runtime \#1: Robust Latent Safety Filters} Since the robust safety value is computed under worst-case latent dynamics, its zero-sublevel set $\unsafeSet_\text{rob}:=\{\latent:\monitorRobust(\latent)\leq0\}$ defines the robust unsafe set, with $\monitorRobust(\latent)=\QfunctionRobust(\latent,\fallbackRobust(\latent))$: the latent states from which failure is unavoidable under adverse uncertainty. At deployment, we therefore evaluate the action proposed by the task policy using the learned robust safety value, and the safety policy overrides it when the value is non-positive:
\begin{equation}\label{eq:robust-filter-rollout}
    \policy^{\text{steer}}_{\text{rob}}(\latent)
    =
    \begin{cases}
    \policyTask(\latent),
    & \text{if } \QfunctionRobust\big(\latent, \policyTask(\latent)\big) > 0, \\[0.3em]
    \fallbackRobust(\latent),
    & \text{otherwise}.
    \end{cases}
\end{equation}

\para{Runtime \#2: Robust Sample-and-Verify}
Replacing the nominal imaginations in \eqref{eq:sampling-based-steering} with pessimistic imaginations induced by the learned worst-case latent-space disturbance $\optimalDisturbance$, we evaluate $K$ candidate action sequences $\mathbf{a}^{(k)}\!\sim\!\policyTask(\latent)$ with $k=1,\cdots,K$, and select the best sequence:
\begin{equation}
\label{eq:robust-sampling-based-steering}
\begingroup
\setlength{\fboxsep}{0pt}
\policy^{\text{steer}}_{\text{rob}}(\latent)
= \arg\min_{\mathbf{a}\in\{\mathbf{a}^{(k)}\}_{k=1}^{K}}
\eqnhighlight{disturbance_orange}{
\mathbb{E}_{\mathbf{z}'\sim\latentDisturbanceDynamicsParameterized(\cdot\mid\latent,\mathbf{a})}
}
\left[\objective(\mathbf{z}')\right],
\endgroup
\end{equation}
where $\mathbf{z}'$ denotes the imagined latent trajectory autoregressively generated over the action chunk $\latent_{i+1}^{k}\sim\latentDisturbanceDynamicsParameterized(\cdot\mid\latent_i^{k},a_i^{k})$, so that evaluation of sampled actions accounts for adverse outcomes under the calibrated system uncertainty.

\section{Case Study: Dubins' Car World Model with Known Disturbances}\label{sec:dubins}

We first study our latent-space robust optimization in controlled settings where the underlying system dynamics and disturbances are known, allowing us to evaluate whether our latent-space robust optimization can recover the optimal solution directly in the latent space of world models. Throughout this section, we focus on the latent safety filter paradigm.

\subsection{Experimental Setup}

\para{Setup: 3D Dubins' Car with Disturbances} Let a discrete-time 3D Dubins' car evolve under two different types of disturbances. First, we use the \textit{naughty} dynamics introduced in Sec.~\ref{sec:setup-latent-ro}, where the system randomly flips the sign of the action, inducing a discrete, multimodal transition distribution with two possible outcomes for each action. Second, we consider continuous \textit{positional disturbances}, $d_{x,t},d_{y,t}\in[-0.3,0.3]$: \begin{equation}\label{eq:dubins-bounded} 
\state_{t+1}
    \!=\!
    \state_t
    +
    \Delta t\,[v\cos(\theta_t)+d_{x,t},\,\,v\sin(\theta_t)+d_{y,t},\,\,\action_t \,]. 
\end{equation} For both settings, the ground-truth worst-case disturbance and corresponding robust action can be computed analytically, enabling evaluation of our latent-space robust optimization.

\para{World Model}
We adopt Dreamer~\cite{Hafner2020Dream} with a Recurrent State-Space Model (RSSM)~\cite{hafner2019learning} where latent dynamics are modeled as a Gaussian distribution~\cite{hafner2019learning}. The world model is trained offline on $N_{\text{train}}=4{,}000$ observation--action trajectories, where actions and disturbances are sampled randomly for $T=100$ timesteps. We annotate each observation with the binary ground-truth failure margin $\failuremargin(\state)=-1$ if $\state\in\failure$ and $\failuremargin(\state)=1$ otherwise, and train a two-layer MLP on top of the latent state to predict the latent failure margin $\ellz(\latent)$.

\para{Latent Safety Filter Setup} We follow Sec.~\ref{sec:robust-safety-filter} to learn the latent safety filter using the pretrained world model. The latent disturbance takes the mean and covariance predicted by the nominal latent dynamics and outputs additive perturbations to both. The thresholds $\thresholdKL$ and $\thresholdOOD$ are calibrated on a held-out set of $1{,}000$ trajectories with $\calibAmbiguity=\calibOOD=0.01$.

\para{Evaluation \& Metrics} With access to system dynamics, we compute the ground-truth safety value function and optimal robust action using grid-based methods~\cite{mitchell2007toolbox}. We first evaluate the safety value $\monitor$ by its accuracy in classifying unsafe states across three state dimensions, discretized into $50$ grids per dimension, reporting balanced accuracy (B.Acc.), false positive rate (FPR), and false negative rate (FNR). We then evaluate the learned safety policy with $1{,}000$ random trajectories, comprising $500$ safe and $500$ that fail under the worst-case disturbance. We apply least-restrictive filtering \eqref{eq:robust-filter-rollout} on those action sequences starting from the initial states in a closed loop, reporting the \textit{failure rate} (Fail).

\begin{table}[t]
\centering
\caption{\textbf{3D Dubins' Car World Model: Quantitative Results.}}
\label{tab:dubins-value-classification}
\vspace{-0.1in}
\renewcommand{\arraystretch}{1.1}
\resizebox{1.0\linewidth}{!}{
\begin{tabular}{rcccccccc}
\toprule
& \multicolumn{4}{c}{\textbf{Naughty}}
& \multicolumn{4}{c}{\textbf{Positional Disturbances}} \\
\cmidrule(lr){2-5}\cmidrule(lr){6-9}
\textbf{Method}
& \textbf{B.Acc.} $\uparrow$
& \textbf{FPR} $\downarrow$
& \textbf{FNR} $\downarrow$
& \textbf{Fail} $\downarrow$
& \textbf{B.Acc.} $\uparrow$
& \textbf{FPR} $\downarrow$
& \textbf{FNR} $\downarrow$
& \textbf{Fail} $\downarrow$ \\
\midrule
Nominal
& 0.932 & 0.006 & 0.130 & 0.114
& 0.831 & 0.000 & 0.337 & 0.218 \\
WoN
& 0.959 & 0.057 & 0.026 & 0.061
& 0.911 & 0.162 & 0.017 & 0.034 \\
$\CVaR_{0.05}$
& 0.944 & 0.017 & 0.095 & 0.062
& 0.943 & 0.028 & 0.087 & 0.084 \\
$\CVaR_{0.1}$
& 0.935 & 0.012 & 0.118 & 0.061
& 0.936 & 0.018 & 0.109 & 0.101 \\
$\CVaR_{0.2}$
& 0.927 & 0.010 & 0.135 & 0.062
& 0.927 & 0.011 & 0.135 & 0.134 \\
$\CVaR_{0.5}$
& 0.915 & 0.007 & 0.163 & 0.057
& 0.910 & 0.005 & 0.175 & 0.197 \\
\rowcolor{gray!25}
\textbf{\ours}
& 0.958 & 0.037 & 0.046 & 0.038
& 0.956 & 0.023 & 0.066 & 0.021 \\
\bottomrule
\end{tabular}
} \vspace{-0.2in}
\end{table}

\begin{figure}[t]
\centering
\includegraphics[width=1.0\linewidth]{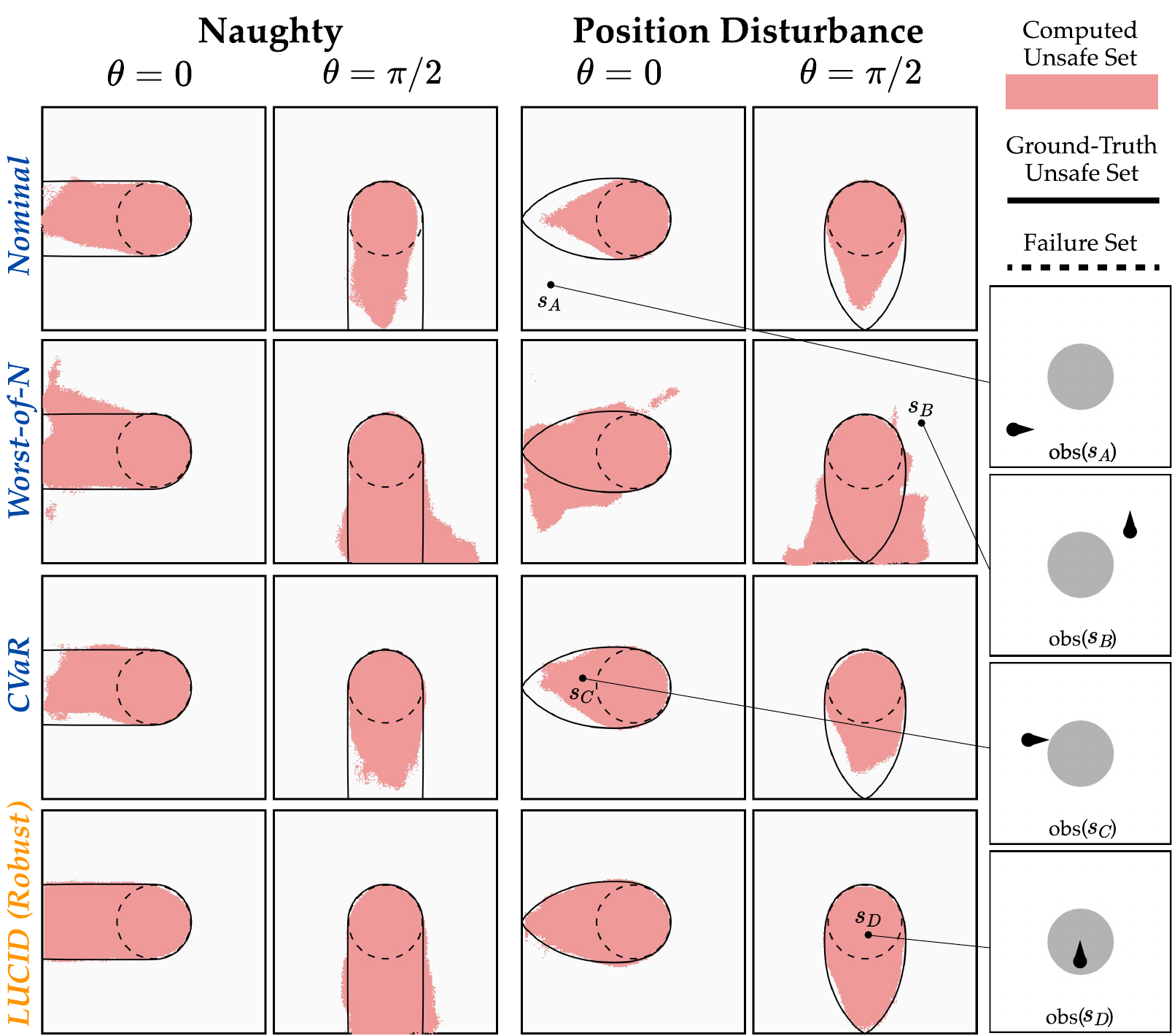}\vspace{-0.1in}
\caption{\textbf{Qualitative Results: 3D Dubins' Car.} Solid lines represent the ground-truth unsafe-set boundaries, and red regions denote the unsafe sets computed by each method. \ours closely approximates the ground-truth unsafe sets of robust optimization under two different types of disturbances. In contrast, \nominal is overly optimistic, while the sampling-based baselines such as \textit{WoN} and \textit{CVaR} inaccurately approximate unsafe sets.}
\label{fig:dubins-result}
\vspace{-0.2in}
\end{figure}

\subsection{Can \ours Make Robust Decisions in World Models?}

\para{Baselines} We compare \ours against the following latent safety-filter baselines, which differ in how they account for uncertainty in the learned latent dynamics, using the same WM. We first consider \nominal, which trains the safety filter following \eqref{eq:trad-fixed-pt-bellman} that evaluates the expected safety value:
\begin{equation}
\begingroup
\setlength{\fboxsep}{0pt}
\begin{aligned}
\monitorNominal(\latent)
\!=\!
\min\left\{
\ellz(\latent),
\max_{\action\in\actionSpace}
\eqnhighlight{nominal_blue}{\E_{\latent'\sim\dynz(\cdot\mid\latent,\action)}}
\left[\monitorNominal(\latent')\right]
\right\}
\end{aligned}
\endgroup
\end{equation}
\textit{Worst-of-$N$ (WoN)} samples $N$=$10$ states from the learned dynamics {\setlength{\fboxsep}{0pt}$\eqnhighlight{nominal_blue}{\latent'_i\!\sim\!\dynz(\cdot\!\mid\!\latent,\action)}$} and uses the least-safe sample:
\begin{equation}
\monitorWoN(\latent)
=
\min\left\{
\ellz(\latent),
\max_{\action\in\actionSpace}
\min_{i=1,\ldots,N}
\monitorWoN(\latent'_i)
\right\}.
\end{equation}
Lastly, \textit{Risk-Sensitive} (\textit{CVaR}) learns a distributional value function~\cite{Marc2017Distributional,Dabney2018Quantile} from sampled next states of learned latent dynamics and evaluates each action using the conditional value-at-risk,
$\CVaR_{\beta}$, with $\beta\in\{0.05,0.1,0.2,0.5\}$:
\begin{equation}
\monitorCVaR(\latent)
=
\min\left\{
\ellz(\latent),
\max_{\action\in\actionSpace}
\CVaR_{\beta}
\left(
\monitorCVaR(\latent')
\right)
\right\}.
\end{equation}
They are all trained using the discounted safety-value formulation in \eqref{eq:discounted-safety-value}, with the safety policy optimized to maximize the learned safety value. However, unlike \ours, these baselines account only for uncertainty captured by finite samples from the nominal learned dynamics and do not explicitly optimize a latent disturbance over the calibrated uncertainty set.

\para{Result: \ours Enables Robust Decision Making in Latent Space}
Table~\ref{tab:dubins-value-classification} shows that \ours learns a robust latent safety filter under both discrete and continuous system uncertainty, improving both the open-loop accuracy of the safety value function and closed-loop safety performance. As shown in Fig.~\ref{fig:dubins-result}, the learned safety value accurately identifies states from which the worst-case plausible disturbance can induce failure despite the best control action. In contrast, \nominal evaluates safety by averaging over nominal world-model futures and can therefore be overly optimistic: its high FNR indicates that many ground-truth unsafe states are predicted as safe, resulting in higher failure rates in closed-loop performance.

\para{Result: Finite Sampling Limits Robust Decision Making} A simple way to account for uncertain latent dynamics is to use finite samples from the nominal next-state distribution, as in \textit{WoN} or \textit{Risk-Sensitive} with distributional critics. However, such zeroth-order optimization can miss adverse but plausible outcomes, especially as the space of possible transitions grows. Accordingly, \textit{WoN} degrades more relative to \ours under continuous disturbances than under the binary \textit{naughty} disturbance. \textit{Risk-Sensitive} also depends on the CVaR risk level: increasing it toward $0.5$ emphasizes typical rather than adverse returns, degrading worst-case value estimation. These results motivate explicitly computing a worst-case latent disturbance that leads to plausibly pessimistic WM imaginations, using gradient ascent to search for adverse transitions within the calibrated uncertainty set rather than relying only on finite nominal sampling, consistent with the findings of~\cite{nishimura2023rap}.

\subsection{Does the Uncertainty Set Represent Plausible Dynamics?}

To evaluate the conformalized uncertainty set in Sec.~\ref{sec:disturbance-set} and Sec.~\ref{sec:conformal-calib}, we ablate whether it effectively includes plausible transitions without admitting implausible ones.

\para{Setup: Uncertainty Set Ablation} To test the conformalized uncertainty set, we ablate latent safety-filter training with the following setups: (i) \textit{Euclidean}, which replaces the dynamics-aware similarity metric with the isotropic distance to the mean next-state prediction, with its threshold similarly calibrated; (ii) \textit{w/o OOD}, which removes the in-distribution constraint; and (iii) \textit{Uncalibrated} KL thresholds, which use either $0.5\thresholdKL$ or $2\thresholdKL$ instead of the conformally calibrated threshold.

\begin{table}[t]
\caption{\textbf{3D Dubins' Car: Uncertainty Set Ablation.} Dyn., Conf., and OOD represent dynamics-aware similarity, conformal calibration, and in-distribution constraints, respectively.} \vspace{-0.1in}
\label{tab:dubins-ambiguity-ablation}
\centering
\renewcommand{\arraystretch}{1.2}
\resizebox{1.0\linewidth}{!}{
\begin{tabular}{r|ccc|cccccc}
\toprule
& \multicolumn{3}{c|}{\textbf{Uncertainty Set}}
& \multicolumn{3}{c}{\textbf{Naughty}}
& \multicolumn{3}{c}{\textbf{Positional Disturbances}} \\
\cmidrule(lr){2-4}\cmidrule(lr){5-7}\cmidrule(lr){8-10}
\textbf{Method}
& \textbf{Dyn.}
& \textbf{Conf.}
& \textbf{OOD}
& \textbf{B.Acc.} $\uparrow$
& \textbf{FPR} $\downarrow$
& \textbf{FNR} $\downarrow$
& \textbf{B.Acc.} $\uparrow$
& \textbf{FPR} $\downarrow$
& \textbf{FNR} $\downarrow$ \\
\midrule

\rowcolor{gray!25}
\multicolumn{1}{l|}{\textbf{\ours}}
& \cmark & \cmark & \cmark
& 0.958 & 0.037 & 0.046
& 0.956 & 0.023 & 0.066 \\

\textit{w/o OOD}
& \cmark & \cmark & \xmark
& 0.809 & 0.381 & 0.001
& 0.621 & 0.759 & 0.000 \\

$0.5\thresholdKL$
& \cmark & \xmark & \cmark
& 0.927 & 0.011 & 0.136
& 0.864 & 0.001 & 0.271 \\

$2.0\thresholdKL$
& \cmark & \xmark & \cmark
& 0.926 & 0.126 & 0.022
& 0.630 & 0.733 & 0.007 \\

\textit{Euclidean}
& \xmark & \cmark & \cmark
& 0.605 & 0.790 & 0.000
& 0.875 & 0.001 & 0.250 \\

\bottomrule
\end{tabular}
}\vspace{-0.2in}
\end{table}

\para{Result: The Conformalized Uncertainty Set Prevents Over-Pessimism}
Table~\ref{tab:dubins-ambiguity-ablation} shows that robust optimization with uncertainty-\textit{unaware} uncertainty sets can become overly pessimistic by admitting implausible latent transitions. Using a dynamics-unaware similarity metric such as Euclidean distance can include geometrically close but dynamically infeasible latent states or exclude plausible but distant ones, yielding high FPRs or FNRs in the safe set computation. Likewise, removing the in-distribution constraint (\textit{w/o OOD}) allows the latent disturbance to exploit implausible OOD states and produce unrealistic imaginations, again resulting in high FPRs. In contrast, \ours combines a dynamics-aware similarity metric with the in-distribution constraint using OOD detection to restrict the disturbance to plausible latent transitions.

\para{Result: Conformal Calibration Balances Coverage and Pessimism}
We further test whether conformal calibration selects an appropriate uncertainty-set size by ablating the calibrated KL threshold $\thresholdKL$, which controls the radius around the nominal latent dynamics. Changing the threshold to $0.5\times\thresholdKL$ makes the uncertainty set too narrow and yields a high FNR, since worst-case transitions can fall outside the set and the disturbance fails to capture adverse outcomes that may occur under the true system. Conversely, increasing the threshold to $2\times\thresholdKL$ makes the uncertainty set too broad and substantially increases the FPR, as the disturbance can exploit overly pessimistic transitions that are not plausible. These results show that conformal calibration is important for balancing coverage of plausible latent dynamics against excessive conservatism.

\subsection{Ablation: Optimistic Imagination with Latent Disturbances}

To further evaluate whether the latent disturbance can effectively perturb latent dynamics while preserving plausible world-model imaginations, we consider an \emph{optimistic} imagination ablation in the \textit{naughty} Dubins' car setting.

\para{Setup: Optimistic Latent Disturbance} By optimizing both the robot action and latent disturbance to \textit{maximize} safety, we replace the minimax problem with a max--max objective, further assessing \ours's ability to identify alternative latent transitions within the uncertainty set while preserving plausibility. Specifically, we modify the disturbance objective in \eqref{eq:robust-latent-safety-filters-disturbance-optim} to maximize the safety value while remaining within the calibrated uncertainty set. In the \textit{naughty} Dubins' car setting, this optimistic formulation corresponds to the disturbance-free system. We therefore use its ground-truth safety value without any disturbance as the evaluation reference.

\para{Result: Latent Disturbances Induce Effective and Plausible Imaginations}
Table~\ref{tab:dubins-optimistic-steering} shows that \ours learns a safety value function close to that of the disturbance-free system through optimistic imagination, despite using a world model trained on trajectories collected under disturbances. The conformalized uncertainty set is essential for preserving plausibility: Fig.~\ref{fig:dubins_ablation} shows that, without the in-distribution constraint, the disturbance can trivially generate implausible imaginations that avoid failure even from ground-truth unsafe states, leading to a high FNR. Dynamics-unaware uncertainty sets similarly yield high FNRs, while miscalibrated KL thresholds degrade the accuracy of value function computation.

\begin{table}[t]
\centering
\caption{\textbf{Ablation: Optimistic Latent Disturbance.}}
\label{tab:dubins-optimistic-steering}
\vspace{-0.1in}
\scriptsize
\renewcommand{\arraystretch}{1.0}
\begin{tabular}{>{\raggedleft\arraybackslash}p{\dimexpr0.2\linewidth-2\tabcolsep\relax}
*{4}{>{\centering\arraybackslash}p{\dimexpr0.2\linewidth-2\tabcolsep\relax}}}
\toprule
\multicolumn{1}{l}{\textbf{Method}}
& \textbf{B.Acc.} $\uparrow$
& \textbf{F1} $\uparrow$
& \textbf{FPR} $\downarrow$
& \textbf{FNR} $\downarrow$ \\
\midrule
\multicolumn{1}{l}{\textit{Nominal}}
& 0.924 & 0.748 & 0.122 & 0.031 \\
\multicolumn{1}{l}{\textit{WoN}}
& 0.966 & 0.898 & 0.037 & 0.032 \\
\multicolumn{1}{l}{$\CVaR_{0.2}$}
& 0.942 & 0.860 & 0.046 & 0.069 \\
\rowcolor{gray!25}
\multicolumn{1}{l}{\textbf{\ours}}
& 0.966 & 0.947 & 0.009 & 0.059 \\
\textit{w/o OOD}
& 0.841 & 0.810 & 0.000 & 0.318 \\
$0.5\thresholdKL$
& 0.956 & 0.861 & 0.055 & 0.034 \\
$2.0\thresholdKL$
& 0.945 & 0.939 & 0.001 & 0.109 \\
\textit{Euclidean}
& 0.940 & 0.921 & 0.008 & 0.113 \\
\bottomrule
\end{tabular}
\vspace{-0.15in}
\end{table}

\begin{figure}[t]
\centering
\includegraphics[width=1.0\linewidth]{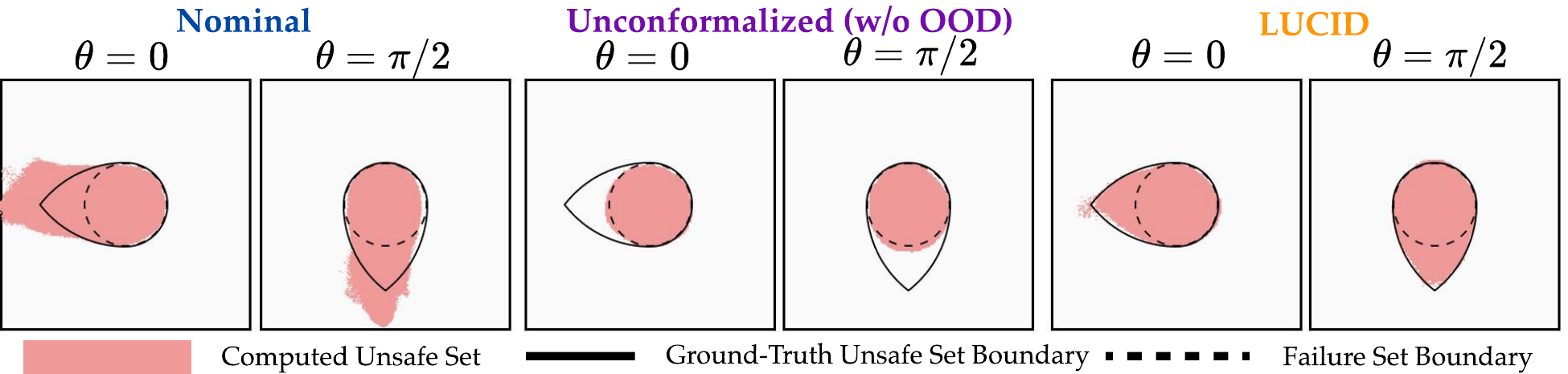}
\vspace{-0.2in}
\caption{\textbf{Qualitative: Naughty Dubins' Car with Optimistic Disturbances.} Solid lines show the ground-truth unsafe-set boundaries of the disturbance-free system, and dashed lines show the failure set. Without the in-distribution constraint, the disturbance exploits implausible latent states, trivially imagining trajectories that avoid failure from any non-failure state.}
\label{fig:dubins_ablation}
\vspace{-0.25in}
\end{figure}
\section{Simulation: Scaling Latent-Space Robust Optimization to Vision-Based Manipulation} \label{sec:manipulation-sim}

We scale our latent-space robust optimization to a vision-based manipulation task, once again focusing on the latent safety filtering paradigm. We use IsaacLab~\cite{mittal2025isaac}, which allows us to control system parameters, such as friction coefficients, that induce uncertainty in the system dynamics.

\subsection{Experimental Setup: Vision-Based Block Pouring}

\para{Setup: Block Pouring} A Franka manipulator must pick up and tilt an orange block to slide a green block placed on top onto a blue block without falling off, as shown in Fig.~\ref{fig:block-pouring-value}. Observations consist of two tabletop $3\times128\times128$ RGB camera images and $7$-D joint-position proprioception. Actions are $6$-DoF end-effector delta poses with a discrete gripper command.

\para{Disturbance} To introduce uncertainty in the system dynamics, we randomize physics parameters including (i) the mass, (ii) friction coefficients, and (iii) restitution of the green block, without providing these parameter values to the robot.

\para{World Model} We adopt DreamerV3~\cite{hafner2023dreamerv3} with discrete latent states modeled by categorical latent dynamics, and train it on approximately $3{,}000$ trajectories containing both successes and failures. The failure-margin function is trained on top of the latent states using state-level ground-truth failure labels.

\para{Latent Safety Filter Setup} In this section, we focus on latent safety filtering for policy steering using the least-restrictive filter in \eqref{eq:robust-filter-rollout}, which safeguards the task policy by intervening with the learned safety policy whenever the safety value of the task-policy action becomes non-positive. We follow \cite{seo2025uncertainty} to learn the latent safety filter, and the latent disturbance takes the categorical logits predicted by the latent dynamics model and outputs additive perturbations. We calibrate the uncertainty set using $200$ held-out calibration trajectories.

\para{Task Policies} We evaluate the learned latent safety filter with three task policies $\policyTask$: (i) \textit{DreamerV3}~\cite{hafner2023dreamerv3}, a visuomotor policy trained with dense task rewards independently from the world model used for the safety filter, (ii) \textit{Diffusion Policy}~\cite{chi2023diffusionpolicy}, an imitation-learning policy trained on $500$ successful trajectories, and (iii) replaying successful human teleoperation trajectories. For each policy, we roll out $2{,}000$ trajectories with randomized initial states and physics parameters.

\begin{table}[t]
\centering
\caption{\textbf{Quantitative Results: Vision-Based Block Pouring.}}
\label{tab:block-pouring-filtering}
\vspace{-0.1in}
\renewcommand{\arraystretch}{1.0}
\resizebox{1.0\linewidth}{!}{
\begin{tabular}{crcccc}
\toprule
\textbf{$\policyTask$}
& \textbf{Safety Filter}
& \textbf{Success} $\uparrow$
& \textbf{Failure} $\downarrow$
& \textbf{Safety Gain} $\uparrow$
& \textbf{Cond. Failure} $\downarrow$ \\
\midrule

\multirow{6}{*}{DreamerV3}
& No Filter
& 0.44 & 0.54
& -- & -- \\

& \nominal
& 0.45 & 0.54
& -0.12 $\pm$ 0.21
& 0.82 $\pm$ 0.38 \\

& \textit{WoN}
& 0.46 & 0.50
& 0.01 $\pm$ 0.15
& 0.50 $\pm$ 0.50 \\

& \textit{CVaR}$_{\text{0.1}}$
& 0.39 & 0.48
& -0.08 $\pm$ 0.16
& 0.48 $\pm$ 0.50 \\

& \textit{UnConf.}
& 0.00 & 0.43
& 0.08 $\pm$ 0.04
& 0.43 $\pm$ 0.50 \\

\rowcolor{gray!25}
& \textbf{\ours}
& 0.73 & 0.19
& 0.22 $\pm$ 0.21
& 0.20 $\pm$ 0.40 \\

\midrule

\multirow{6}{*}{\shortstack{Diffusion\\Policy}}
& No Filter
& 0.55 & 0.42
& -- & -- \\

& \nominal
& 0.56 & 0.40
& 0.12 $\pm$ 0.33
& 0.71 $\pm$ 0.46 \\

& \textit{WoN}\
& 0.46 & 0.48
& 0.03 $\pm$ 0.16
& 0.48 $\pm$ 0.50 \\

& \textit{CVaR}$_{\text{0.1}}$
& 0.13 & 0.19
& -0.02 $\pm$ 0.10
& 0.19 $\pm$ 0.39 \\

& \textit{UnConf.}
& 0.00 & 0.11
& 0.09 $\pm$ 0.03
& 0.11 $\pm$ 0.32 \\

\rowcolor{gray!25}
& \textbf{\ours}
& 0.63 & 0.24
& 0.18 $\pm$ 0.19
& 0.27 $\pm$ 0.44 \\

\bottomrule
\end{tabular}
}
\vspace{-0.1in}
\end{table}

\begin{figure}[t]
\centering
\includegraphics[width=1.0\linewidth]{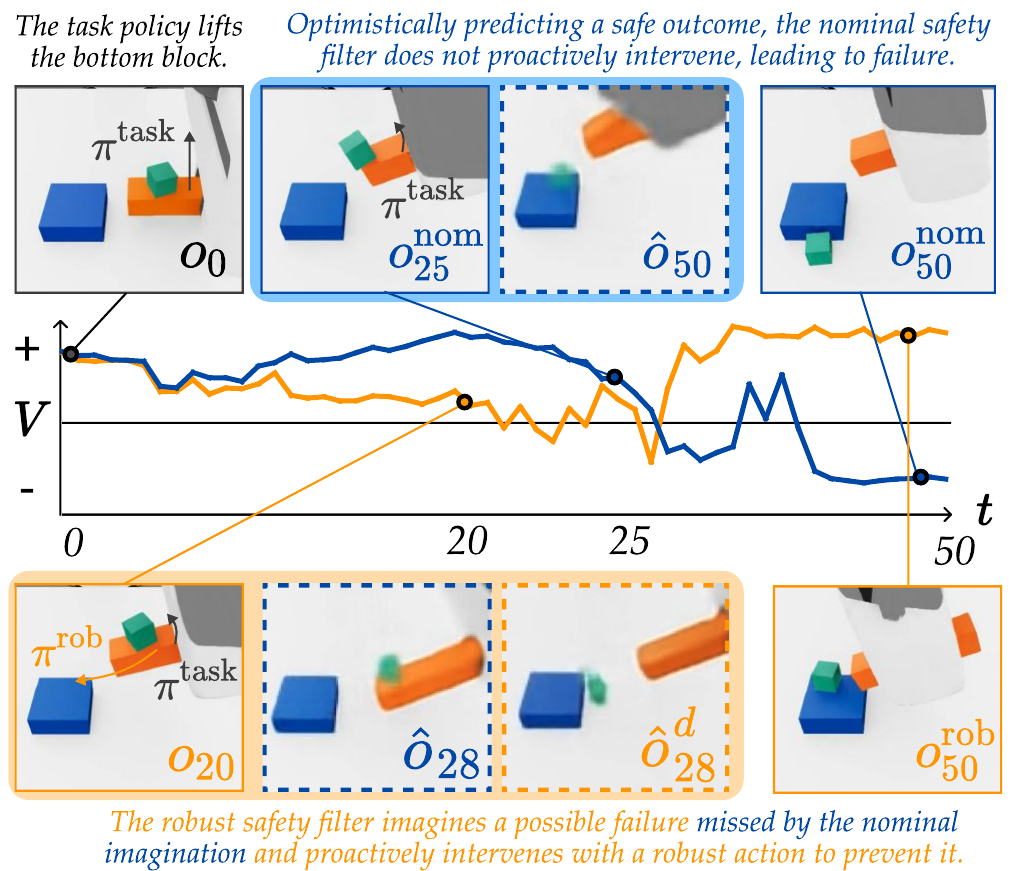}
\vspace{-0.25in}
\caption{\textbf{Safety Value Function: \textcolor{nominal_blue}{\nominal} vs. \textcolor{disturbance_orange}{\ours}.} Safeguarding the same $\policyTask$ from the same initial states, the robust safety filter proactively intervenes based on a plausibly pessimistic imagination (\textcolor{disturbance_orange}{$\hat{o}^{d}_{28}$}) in which $\policyTask$ can lead to failure, while the nominal imagination (\textcolor{nominal_blue}{$\hat{o}_{28}$}) predicts a non-failure outcome. The resulting robust action prevents failure. In contrast, the nominal latent safety filter overestimates safety and intervenes too late based on an optimistic imagination (\textcolor{nominal_blue}{$\hat{o}_{50}$}), leading to failure under an adverse realization.}
\label{fig:block-pouring-value}
\vspace{-0.2in}
\end{figure}

\para{Baselines} Following Sec.~\ref{sec:dubins}, we compare against latent safety filters trained using the same world model and dataset but accounting for dynamics uncertainty in different ways: \nominal, \textit{Worst-of-$N$}, and the risk-sensitive baseline (\textit{CVaR}). We also include \textit{Unconformalized latent disturbance (UnConf.)}, an ablation of \ours that uses an uncertainty set without the in-distribution constraint.

\para{Evaluation Metrics}
Since the ground-truth solutions are unavailable, we focus on the closed-loop performance of the policy. Success is completing a task without a timeout or failure. To further assess whether the learned filter is robust, we additionally measure: (i) realized safety gain (\textit{Safety Gain}): the increase in the learned safety value after each filter intervention, which measures whether the executed safety action robustly leads to a safer realized next state; and (ii) conditional failure rate (\textit{Cond. Failure}): the fraction of trajectories that fail among those in which the filter intervenes at least once.

\subsection{Can \ours Robustly Prevent Failures Under Uncertainty?}

\begin{table}[t]
\centering
\caption{\textbf{Results: Trajectory Replay Across $20$ Different Physics.}}
\label{tab:controlled-uncertainty}
\vspace{-0.1in}
\scriptsize
\renewcommand{\arraystretch}{1.0}
\resizebox{1.0\linewidth}{!}{
\begin{tabular}{rccccc}
\toprule
\textbf{Safety Filter}
& \textbf{Success} $\uparrow$
& \textbf{Failure} $\downarrow$
& \textbf{Safety Gain} $\uparrow$
& \textbf{Cond. Failure} $\downarrow$
& \textbf{Robust Rate} $\uparrow$ \\
\midrule
No Filter
& 0.33
& 0.47
& --
& --
& 0.10 \\

\nominal
& 0.31
& 0.40
& 0.17 $\pm$ 0.2
& 0.46 $\pm$ 0.5
& 0.11 \\

\textit{WoN}
& 0.35
& 0.31
& -0.04 $\pm$ 0.2
& 0.32 $\pm$ 0.4
& 0.07 \\

\textit{CVaR}$_{0.1}$
& 0.30
& 0.31
& -0.11 $\pm$ 0.2
& 0.32 $\pm$ 0.4
& 0.10 \\

\textit{UnConf.}
& 0.05
& 0.06
& 0.10 $\pm$ 0.03
& 0.06 $\pm$ 0.23
& 0.90 \\

\rowcolor{gray!25}
\ours
& 0.34
& 0.11
& 0.21 $\pm$ 0.15
& 0.11 $\pm$ 0.31
& 0.77 \\
\bottomrule
\end{tabular}
} \vspace{-0.05in}
\end{table}

\begin{figure}[t]
    \centering
    \includegraphics[width=1.0\linewidth]{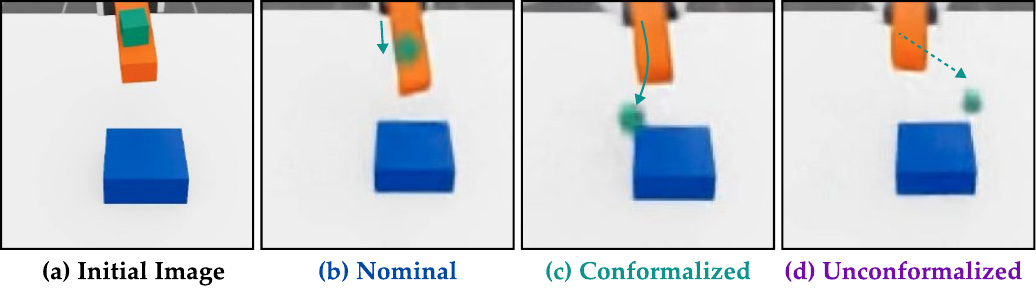}
    \vspace{-0.25in}
    \caption{\textbf{Block Pouring: World Model Imaginations.} (a) Initial state: the orange block is tilted while $\policyTask$ remains still. (b) The nominal imagination predicts a non-failure outcome. (c) With the conformalized uncertainty set, the latent disturbance imagines a plausible adverse transition in which the green block slides and falls. (d) Without the conformalized uncertainty set, the latent disturbance induces an implausible transition in which the green block falls in an infeasible manner, resulting in an overly pessimistic imagination.}\label{fig:sim-ambiguity}
    \vspace{-0.1in}
\end{figure}

\para{Result: \ours Robustly Safeguards Task Policies}
Table~\ref{tab:block-pouring-filtering} shows the closed-loop safety-filtering results under randomized physical parameters. \ours achieves lower failure rates while maintaining higher success rates, demonstrating robustness to dynamics uncertainty without becoming overly conservative. In contrast, latent disturbances without the in-distribution constraint (\textit{UnConf.}) become overly pessimistic, resulting in low success rates as its imagination trivially drifts toward implausible failure states, as shown in Fig.~\ref{fig:sim-ambiguity}.

Baselines that rely on sampling from the nominal latent dynamics (\nominal, \textit{WoN}, and \textit{CVaR}) are less effective, with a larger gap to \ours than in the Dubins' car experiments in Sec.~\ref{sec:dubins}. Compared to the Dubins' car experiments, the visual manipulation task involves higher-dimensional system uncertainty and more diverse plausible next states, making zeroth-order sampling less likely to capture meaningful worst-case outcomes within a limited sampling budget. Risk-sensitive distributional critics (\textit{CVaR}) are likewise either ineffective or overly conservative depending on the CVaR level, since extreme tails in high-dimensional latent spaces can include implausibly pessimistic states even under random sampling. In contrast, \ours explicitly optimizes against worst-case \emph{plausible} latent dynamics within the calibrated uncertainty set.

\para{Result: \ours Does Not Overestimate Safety} 
Fig.~\ref{fig:block-pouring-value} compares the nominal and robust safety values along an identical action sequence as a task policy. While the nominal safety value evaluates future safety under the average outcomes of actions and can therefore overestimate safety when adverse outcomes occur, the robust safety monitor evaluates safety under calibrated worst-case plausible outcomes and intervenes more preemptively when failure is possible, using robust actions whose plausible outcomes avoid failure. As shown in Table~\ref{tab:block-pouring-filtering}, \ours achieves a lower conditional failure rate than the nominal filter when filtering is activated, while intervening more frequently but only when necessary, remaining less pessimistic than \textit{UnConf.}, which doesn't use the in-distribution constraint for the uncertainty set construction. Moreover, \ours yields larger positive realized safety gains, indicating that its safety actions move the system toward safer realized next states, whereas the nominal filter exhibits smaller or even negative gains under dynamics uncertainty.


\subsection{Ablation: Robustness Under Controlled Uncertainty}

To further evaluate the robustness of safety filtering specifically to system uncertainty, independent of stochasticity in the task policy, we vary only the physics parameters of the system while replaying exactly the same task-policy action sequences.

\para{Setup: Replaying Safe Trajectories} We evaluate the robustness to system disturbances by replaying $100$ successful teleoperation trajectories from the same initial states and actions, each under $20$ different physics parameter settings. Here, we also report robust rate, defined as the fraction of replay trajectories that remain non-failure across all $20$ conditions.

\para{Result: \ours Robustly Prevents Failures Across Dynamics Variations}
Table~\ref{tab:controlled-uncertainty} shows that \ours keeps replaying $77$ of $100$ teleoperation trajectories failure-free across all $20$ physics settings, showing robustness to system disturbances. It also achieves the lowest overall and conditional failure rates among methods that retain nonzero task success; \textit{UnConf.} attains lower failure rates only by being overly conservative, resulting in a near-zero success rate. This indicates that when the filter intervenes, \ours proposes safety actions that remain effective across variations in system dynamics. Its positive realized safety gain further shows that these interventions consistently move the system toward safer realized states.

\subsection{Ablation: Robustness Under Partial Observability}

While we have so far considered uncertainty in future transitions arising from explicit system disturbances, partial observability can also induce uncertainty even when the underlying dynamics are deterministic and disturbance-free. For example, small differences in the underlying angle of the orange block or position of the green block may be difficult to distinguish from pixel observations, yet can lead to different future outcomes under the same action. We therefore ask how our latent-space robust optimization performs when uncertainty arises from the learned world model itself, due to partial observability or model approximation.

\begin{table}[t]
\centering
\caption{\textbf{Ablation: Deterministic Block-Pouring with Fixed Physics}}
\label{tab:deterministic-block-pouring}
\vspace{-0.05in}
\small
\renewcommand{\arraystretch}{1.0}
\resizebox{1.0\linewidth}{!}{
\begin{tabular}{crcccc}
\toprule
$\policyTask$
& \textbf{Safety Filter}
& \textbf{Success} $\uparrow$
& \textbf{Failure} $\downarrow$
& \textbf{Safety Gain} $\uparrow$
& \textbf{Cond. Failure} $\downarrow$ \\
\midrule

\multirow{5}{*}{DreamerV3}
& No Filter
& 0.43 & 0.56
& -- & -- \\

& \nominal
& 0.60 & 0.39
& 0.21 $\pm$ 0.24
& 0.40 $\pm$ 0.48 \\

& \textit{WoN}
& 0.34 & 0.54
& 0.10 $\pm$ 0.20
& 0.54 $\pm$ 0.49 \\

& \textit{CVaR}$_{0.1}$
& 0.00 & 0.34
& -0.05 $\pm$ 0.05
& 0.34 $\pm$ 0.47 \\

& \textit{UnConf.}
& 0.00 & 0.00
& 0.29 $\pm$ 0.08
& 0.00 $\pm$ 0.06 \\

\rowcolor{gray!25}
& \ours
& 0.81 & 0.18
& 0.27 $\pm$ 0.22
& 0.19 $\pm$ 0.39 \\

\midrule

\multirow{5}{*}{\shortstack{Diffusion\\Policy}}
& No Filter
& 0.47 & 0.51
& -- & -- \\

& \nominal
& 0.57 & 0.35
& 0.15 $\pm$ 0.16
& 0.25 $\pm$ 0.48 \\

& \textit{WoN}
& 0.28 & 0.66
& -0.00 $\pm$ 0.20
& 0.66 $\pm$ 0.47 \\

& \textit{CVaR}$_{0.1}$
& 0.00 & 0.19
& -0.05 $\pm$ 0.05
& 0.19 $\pm$ 0.39 \\

& \textit{UnConf.}
& 0.00 & 0.00
& 0.21 $\pm$ 0.06
& 0.00 $\pm$ 0.06 \\

\rowcolor{gray!25}
& \ours
& 0.66 & 0.25
& 0.16 $\pm$ 0.16
& 0.27 $\pm$ 0.44 \\

\bottomrule
\end{tabular}
} \vspace{-0.2in}
\end{table}

\para{Setup: Deterministic Block Pouring} We use the same block-pouring task, but fix the physical parameters to remove explicit disturbances from the underlying system dynamics. Thus, uncertainty arises from partial observability and approximation when mapping high-dimensional observations into the learned latent space. We collect approximately $3{,}000$ trajectories and train the world model, latent safety filter, and task policies following the same procedure. This setting isolates uncertainty in future transitions arising from partial observability and model approximation, allowing us to evaluate whether robust latent optimization remains beneficial even when the underlying system is deterministic.

\para{Result: Robust Optimization Improves Safety under Partial Observability}
Table~\ref{tab:deterministic-block-pouring} shows that robust latent-space optimization improves safety even when the underlying system dynamics are deterministic. In this setting, uncertainty in future transitions arises from partial observability and approximation error in the learned world model, rather than from explicit system disturbances. The nominal safety filter is more effective than in the stochastic environment, but \ours further reduces failures and increases success by mitigating uncertainty. These results suggest that robust latent-space optimization can improve decision-making under different sources of transition uncertainty, provided that the realized test-time transition is contained within the calibrated uncertainty set.

\section{Hardware: Robust Steering During Contact-Rich Visuomotor Policy Execution}\label{sec:egg}

Finally, we scale \ours to a real-world visual manipulation task using a Franka Research 3. The robot must serve a sunny-side-up fried egg from a spatula onto a plate without dropping it on the table or flipping it sunny-side down. The egg's dynamics are uncertain due to unobserved physical properties such as friction between the egg and spatula.

\subsection{Experimental Setup: Serving Sunny-Side-Up Fried Eggs}

\para{Setup} Observations consist of two tabletop $3\times192\times256$ RGB camera images and 7-D joint-position proprioception. Actions are 6-D end-effector delta poses with a discrete gripper command, executed at $15$ Hz. 

\para{Disturbance} We introduce an explicit source of unobserved disturbance by varying the spatula surface to mimic changes in friction caused by oil and surface conditions in household kitchens. We use three surface conditions: (i) no tape, (ii) scotch tape, and (iii) heavy-duty tape (see Fig.~\ref{fig:egg-replay}).

\para{World Model} We employ an RSSM with Gaussian latent dynamics~\cite{hafner2019learning} learned from frozen DINOv3~\cite{simeoni2025dinov3} embeddings, and construct the latent dynamics model using a Transformer with an $8$-step history to predict the next latent-state distribution. We additionally train the model with a DINO token reconstruction objective alongside pixel reconstruction (see Appendix~\ref{sec:appendix-experiments} for details). We collect $900$ teleoperated trajectories with frame-level failure annotations, which are used to learn a latent failure margin over the learned latent space.

\para{Latent Safety Filter} We follow \cite{seo2025uncertainty} to learn the latent safety filter. The latent disturbance predicts additive perturbations to the mean and covariance of the Gaussian latent dynamics, and the uncertainty set is calibrated using $50$ held-out trajectories.

\para{Task Policies} We consider three task policies that we seek to steer at runtime: (i) a Diffusion Policy~\cite{chi2023diffusionpolicy} trained from scratch using $300$ successful trajectories; (ii) a vision-language-action (VLA) policy fine-tuned from $\pi_{0.5}$~\cite{intelligence2025pi} using the same trajectories; and (iii) a teleoperation from a human.

\begin{figure}[t]
\centering
\includegraphics[width=1.0\linewidth]{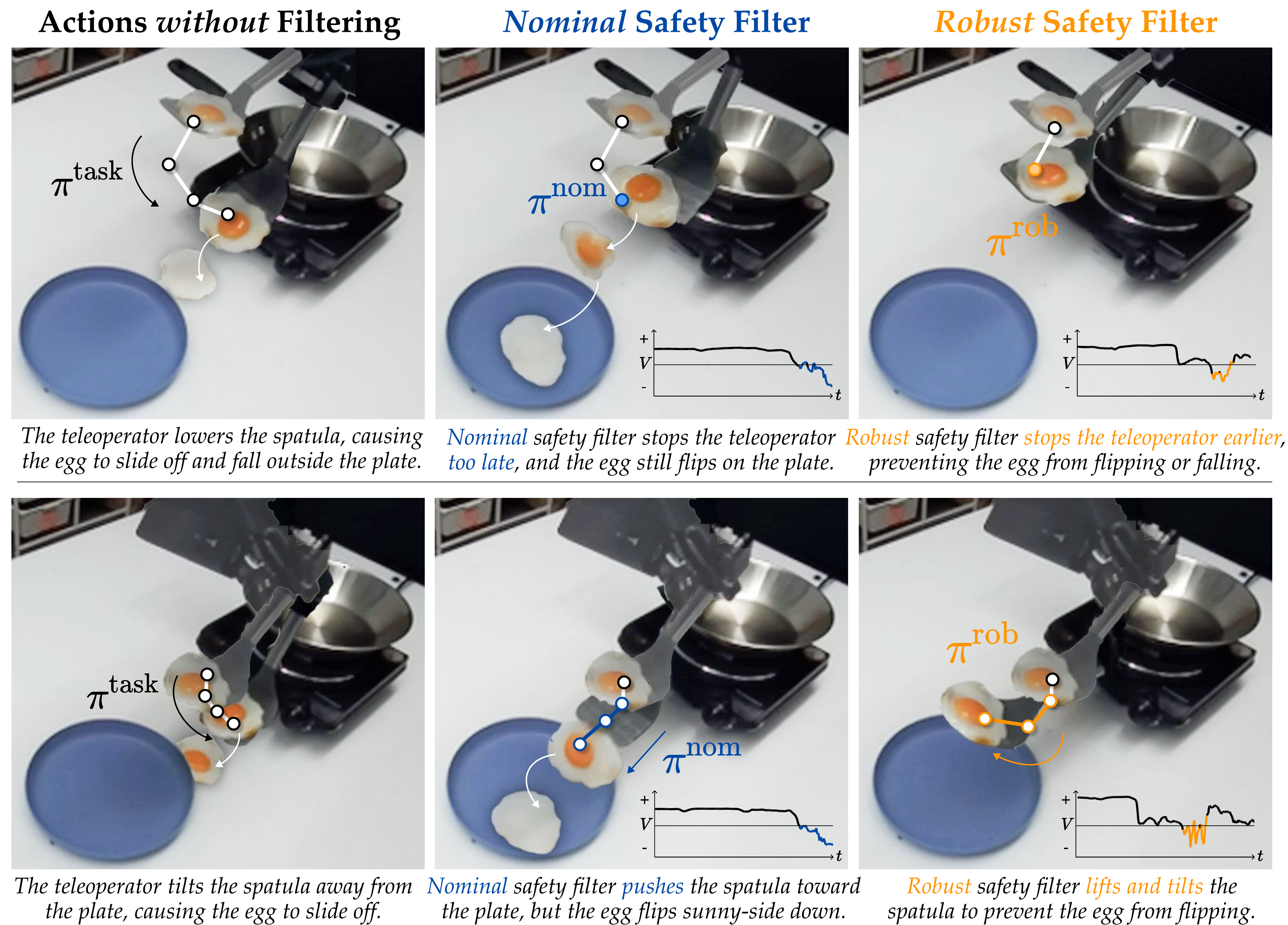}\vspace{-0.1in}
\caption{\textbf{Qualitative Results: Preventing Sunny-Side-Down Egg.} \textcolor{disturbance_orange}{\ours} preemptively identifies teleoperator actions that may cause the egg to fall or flip and intervenes with robust actions that remain safe under worst-case dynamics. In contrast, the \textcolor{nominal_blue}{\nominal} safety filter relies on overly optimistic safety values and non-robust actions, leading to sunny-side-down outcomes.}
\label{fig:egg_qualitative}
\vspace{-0.05in}
\end{figure}

\begin{figure}[t]
\centering
\includegraphics[width=1.0\linewidth]{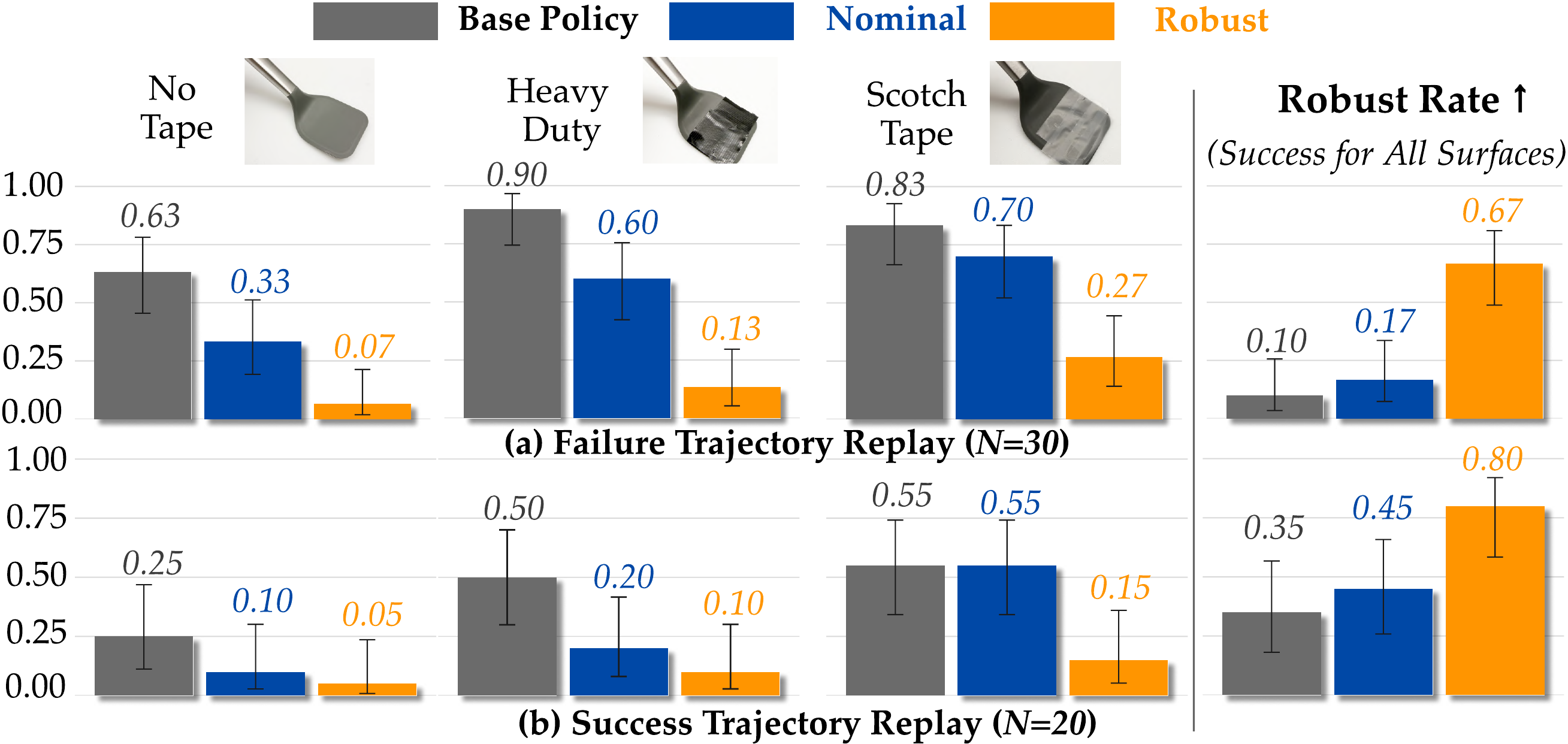}
\vspace{-0.3in}
\caption{\textbf{Filtering the Same $\policyTask$ Across Different Surfaces.} (\textit{Left}) Failure rates under each spatula surface condition. (\textit{Right}) Robust rate, measuring the fraction of trajectories that remain safe across all surface conditions. \textcolor{disturbance_orange}{\ours} consistently minimizes failures and achieves a higher robust rate, whereas the \textcolor{nominal_blue}{\nominal} safety filter prevents failures only under certain surface conditions.}
\label{fig:egg-replay}
\vspace{-0.1in}
\end{figure}

\subsection{Robust Latent Safety Filtering}

We compare a \nominal latent safety filter, trained using the expected safety value \eqref{eq:trad-fixed-pt-bellman}, against \ours trained with \eqref{eq:robust-fixed-pt-bellman}.

\para{Result: Robustness Across Surface Disturbances} We first evaluate robustness to disturbances induced by the spatula surface condition. We replay $20$ success and $30$ failure trajectories under all three surface conditions while safeguarding them. Fig.~\ref{fig:egg-replay} shows that \ours consistently achieves lower failure rates and a higher robust rate, defined as the fraction of replay trajectories that remain safe across all surface conditions. In contrast, the \nominal safety filter is effective only under certain surface conditions and achieves a lower robust rate.

\para{Result: Robustly Safeguarding $\policyTask$} We roll out Diffusion Policy~\cite{chi2023diffusionpolicy} and $\pi_{0.5}$~\cite{intelligence2025pi} for $20$ trials each, safeguarded by either the \nominal safety filter or \ours, with Scotch tape applied to the spatula. Fig.~\ref{fig:egg_quantitative} shows that \ours achieves a larger improvement in success rate than the \textit{Nominal} filter, demonstrating more robustness against uncertain dynamics.

Fig.~\ref{fig:egg_qualitative} shows qualitative results safeguarding the teleoperator. \ours preemptively overrides unsafe actions, whereas the \nominal safety filter may fail to intervene in time due to overly optimistic imaginations. Moreover, \ours intervenes with robust fallback actions whose plausible outcomes remain safe, while the \nominal filter may select actions that appear safe under nominal dynamics but fail under adverse realizations.

\begin{figure}[t]
\centering
\includegraphics[width=1.0\linewidth]{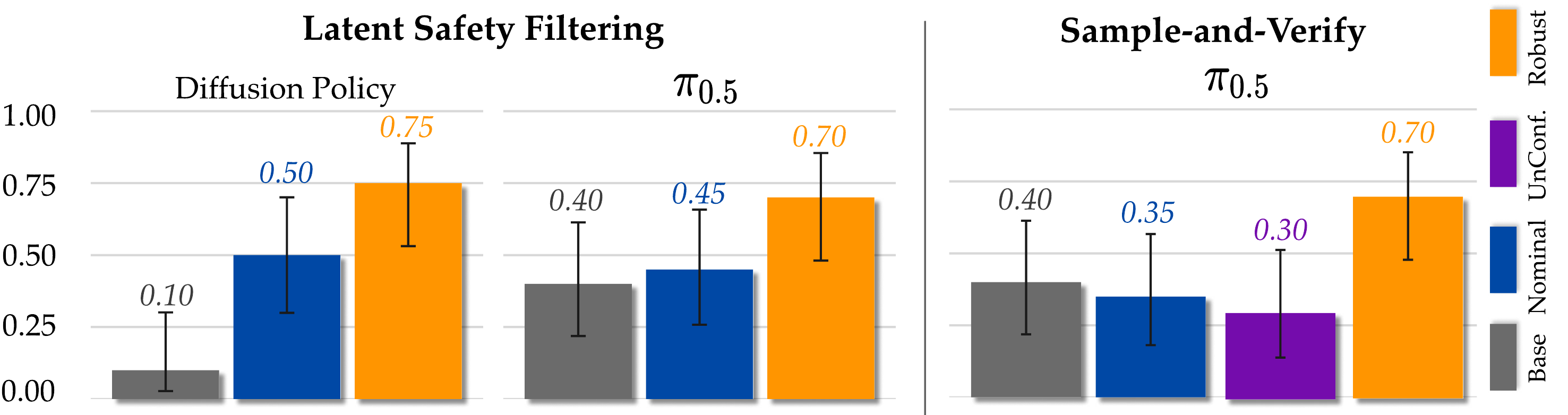} \vspace{-0.3in}
\caption{\textbf{Quantitative Results: Success Rates.} (\textit{Left}) Safety filtering over $20$ rollouts for Diffusion Policy and $\pi_{0.5}$. (\textit{Right}) Steering $\pi_{0.5}$ with sample-and-verify action selection, using world-model imaginations to evaluate candidate action chunks. In both settings, \ours robustly improves success rates.}
\label{fig:egg_quantitative}
\vspace{-0.2in}
\end{figure}

\subsection{Robust Sample-and-Verify with Latent Disturbances}

We test the sampling-and-verify policy steering using WM imagination \eqref{eq:robust-sampling-based-steering} on fine-tuned $\pi_{0.5}$~\cite{intelligence2025pi}, evaluating whether the learned latent disturbance enables robust action selection.

\para{Setup} We sample $8$ candidate action chunks from the policy, each containing $16$ future actions. We evaluate the outcome of each candidate using WM imaginations and execute the action chunk with the best predicted outcome. We compare three imagination schemes: (i) \nominal, using the learned latent dynamics; (ii) \ours, using the learned latent disturbance; and (iii) \textit{Unconformalized (UnConf.)}, using a latent disturbance without the in-distribution constraint, as illustrated in Fig.~\ref{fig:egg_steering}.

\para{Objective Function ($\objective$)} To evaluate action outcomes, we use a cost function that combines the learned failure margin $\ellz(\latent)$, evaluated on predicted latent states, with VLM-based task-success and progress scores from \textsc{Robometer}~\cite{liang2026robometer}, evaluated on decoded images. The VLM reward is averaged over each action chunk to provide a dense signal.

\para{Result: \ours Enables Robust Policy Steering}
Fig.~\ref{fig:egg_quantitative} reports sample-and-verify style policy steering results over $20$ trials for each method, showing that \ours improves policy success. Fig.~\ref{fig:egg_steering} illustrates that \ours selects actions that remain favorable even under plausible worst-case futures, enabling robust policy steering. In contrast, \nominal does not improve over the base policy without steering, imagining optimistic futures for risky actions. Latent disturbance using an unconformalized uncertainty set instead trivially imagines implausible failures even for safe actions, which weakens discrimination among candidate actions for policy steering.

\begin{figure}[t]
\centering
\includegraphics[width=1.0\linewidth]{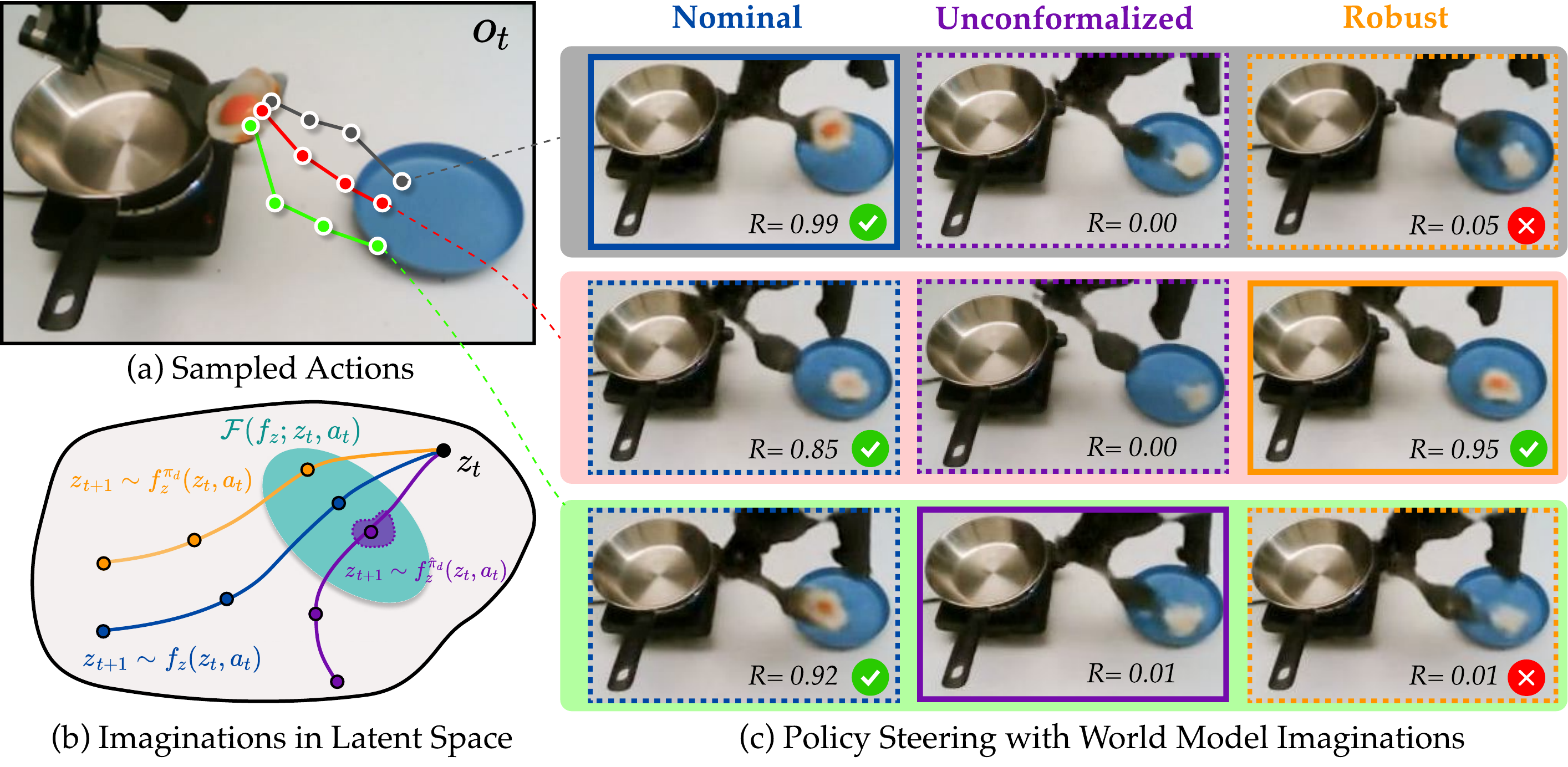} \vspace{-0.2in}
\caption{\textbf{Qualitative: Policy Steering with World Model.} (a) Initial observation and sampled candidate actions. (b) For each candidate, \textcolor{nominal_blue}{\nominal} imagines outcomes using \textcolor{nominal_blue}{$\dynz$}, \textcolor{disturbance_orange}{\ours} computes worst-case \textcolor{disturbance_orange}{$\dynz^{\latentDisturbance}$} within the \textcolor{teal}{calibrated uncertainty set $\ambiguity$}, while \textit{\textcolor{violet}{Unconformalized}} \textcolor{violet}{$\dynz^{\hat{\latentDisturbance}}$} can exploit out-of-distribution latent states. (c) Imagined outcomes for candidate actions. \ours selects an action that remains successful under the plausible worst-case, whereas the other schemes lose discriminability among candidate actions.}
\label{fig:egg_steering}
\vspace{-0.1in}
\end{figure}

\section{Limitations}

\para{Balancing Plausibility}
We optimize our latent disturbance to generate adverse outcomes while regularizing it to stay close to the learned dynamics and remain in distribution. However, these constraints provide only \textit{proxies} for plausibility, so the resulting uncertainty set can still become overly conservative or fail to capture plausible but critical outcomes. Moreover, because the constraints are enforced through soft penalties, the optimized disturbance may still produce implausible imaginations, particularly when the underlying world model is inaccurate~\cite{seo2025uncertainty}. Conformal calibration provides a marginal coverage guarantee over the calibration distribution rather than an input-conditional guarantee~\cite{angelopoulos2023conformal}. As disturbance optimization steers WM imaginations away from this distribution, the coverage guarantee may further weaken under distribution shift. More structured world models~\cite{wang2026temporal}, adaptive or input-dependent calibration schemes, and hallucination detection for world-model imaginations~\cite{hansen2026hallucination} could help ensure that pessimistic imaginations remain physically meaningful.

\para{World Model Fidelity}
Our current implementation trains task-specific latent WMs from offline interaction data, requiring substantial data collection, and latent disturbance optimization cannot recover outcomes that the WM has not learned to represent. This underscores the need for diverse robot interaction data, including failures, rare outcomes, and variations in physical conditions~\cite{balaji2026oopsieverse}. In addition, our current formulation assumes a probabilistic latent WM with an explicit transition distribution, which is used to characterize predictive uncertainty and define the latent disturbance. It is therefore not directly applicable to deterministic WMs or diffusion-based video WMs, where uncertainty is represented through a different generative process; complementary approaches can instead steer diffusion-based WM imaginations toward adverse yet plausible outcomes~\cite{seo2026stressdream}. Our framework also assumes a meaningful failure margin or objective function; inaccurate specifications can misdirect disturbance optimization and policy steering. More general task and safety specifications, for example those derived from vision-language models, could reduce this dependence on task-specific objectives~\cite{liang2026robometer}.

\para{Game-Theoretic Optimization}
Solving the resulting dynamic game remains challenging in high-dimensional action (e.g., $7$-D) and latent disturbance (e.g., $1$K-D) spaces. Although our disturbance parameterization and adversarial optimization make the problem more tractable, the learned disturbance explores only a restricted class of latent dynamics, and the optimization does not guarantee convergence to a global saddle point~\cite{wang2024magics}. An insufficiently optimized disturbance may miss adverse outcomes, while an insufficiently optimized robot policy may produce suboptimal robust actions. Developing more reliable game-theoretic optimization methods and characterizing the effects of function approximation and optimization error remain important future work.

\section{Conclusion}
In this work, we propose a framework for robust decision-making in the learned latent space of world models. We model latent-space disturbances as adverse yet plausible perturbations to learned latent dynamics, enabling robust optimization directly in latent spaces learned from high-dimensional observations without requiring disturbances to be explicitly represented in a physically interpretable state space. To characterize which latent dynamics constitute plausible disturbances, we construct an uncertainty set using a dynamics-aware similarity metric that captures predictive uncertainty in the learned dynamics, together with out-of-distribution detection to exclude implausible latent states. We calibrate this uncertainty set using conformal prediction, allowing the robot to reason about worst-case outcomes while preventing the resulting WM imaginations from becoming overly pessimistic. We instantiate this framework for two forms of robust policy steering: latent safety filtering and sample-and-verify steering of a generative control policy. Controlled experiments show that the proposed formulation closely approximates ground-truth robust safety values and reduces failures under unobserved system disturbances, while ablations demonstrate the importance of calibration and in-distribution constraints for preserving the plausibility of imaginations induced by latent disturbances. In simulated and real-robot vision-based manipulation, our framework improves policy success rates through robust runtime steering.

\ifanonymous\else
\section*{Acknowledgment}
This work was supported in part by Toyota Research Institute and the DARPA Young Faculty Award (YFA). This article solely reflects the opinions and conclusions of its authors, and not TRI (nor any other Toyota entity), nor DARPA, nor the U.S. Government.
\fi

\addtolength{\textheight}{0cm}

\bibliographystyle{IEEEtran}
\bibliography{mybib}

@article{mitchell2005time,
  title={A time-dependent Hamilton-Jacobi formulation of reachable sets for continuous dynamic games},
  author={Mitchell, Ian M and Bayen, Alexandre M and Tomlin, Claire J},
  journal={IEEE Transactions on Automatic Control},
  year={2005}
}

@inproceedings{bansal2021deepreach,
  title={Deepreach: A deep learning approach to high-dimensional reachability},
  author={Bansal, Somil and Tomlin, Claire J},
  booktitle={IEEE International Conference on Robotics and Automation (ICRA)},
  pages={1817--1824},
  year={2021}
}

@inproceedings{bansal2017hamilton,
  title={Hamilton-jacobi reachability: A brief overview and recent advances},
  author={Bansal, Somil and Chen, Mo and Herbert, Sylvia and Tomlin, Claire J},
  booktitle={IEEE Conference on Decision and Control (CDC)},
  pages={2242--2253},
  year={2017}
}

@article{hsu2023safety,
  title={The safety filter: A unified view of safety-critical control in autonomous systems},
  author={Hsu, Kai-Chieh and Hu, Haimin and Fisac, Jaime F},
  journal={Annual Review of Control, Robotics, and Autonomous Systems},
  volume={7},
  year={2023}
}

@article{wabersich2023data,
  title={Data-driven safety filters: Hamilton-jacobi reachability, control barrier functions, and predictive methods for uncertain systems},
  author={Wabersich, Kim P and Taylor, Andrew J and Choi, Jason J and Sreenath, Koushil and Tomlin, Claire J and Ames, Aaron D and Zeilinger, Melanie N},
  journal={IEEE Control Systems Magazine},
  volume={43},
  number={5},
  pages={137--177},
  year={2023}
}

@inproceedings{bastani2021safe,
  title={Safe reinforcement learning with nonlinear dynamics via model predictive shielding},
  author={Bastani, Osbert},
  booktitle={American Control Conference (ACC)},
  pages={3488--3494},
  year={2021}
}

@inproceedings{li2020robust,
  title={Robust model predictive shielding for safe reinforcement learning with stochastic dynamics},
  author={Li, Shuo and Bastani, Osbert},
  booktitle={IEEE International Conference on Robotics and Automation (ICRA)},
  pages={7166--7172},
  year={2020}
}

@article{mitchell2007toolbox,
  title={A toolbox of level set methods},
  author={Mitchell, Ian M and others},
  journal={UBC Department of Computer Science Technical Report TR-2007-11},
  year={2007}
}

@inproceedings{fisac2019bridging,
  title={Bridging hamilton-jacobi safety analysis and reinforcement learning},
  author={Fisac, Jaime F and Lugovoy, Neil F and Rubies-Royo, Vicen{\c{c}} and Ghosh, Shromona and Tomlin, Claire J},
  booktitle={IEEE International Conference on Robotics and Automation (ICRA)},
  pages={8550--8556},
  year={2019}
}

@inproceedings{AkametaluGP,
  author={Akametalu, Anayo K. and Fisac, Jaime F. and Gillula, Jeremy H. and Kaynama, Shahab and Zeilinger, Melanie N. and Tomlin, Claire J.},
  booktitle={IEEE Conference on Decision and Control (CDC)}, 
  title={Reachability-based safe learning with Gaussian processes}, 
  year={2014}}

@inproceedings{agrawal2026anysafe,
  author    = {Agrawal, Sankalp and Seo, Junwon and Nakamura, Kensuke and Tian, Ran and Bajcsy, Andrea},
  title     = {AnySafe: Adapting Latent Safety Filters at Runtime via Safety Constraint Parameterization in the Latent Space},
  booktitle = {IEEE International Conference on Robotics and Automation (ICRA)},
  year      = {2026}
}

@inproceedings{nakamura2026howto,
  title = 	 {How to Train Your Latent Control Barrier Function: Smooth Safety Filtering Under Hard-to-Model Constraints},
  author =       {Nakamura, Kensuke and Bishop, Arun L and Man, Steven and Johnson, Aaron M. and Manchester, Zachary and Bajcsy, Andrea},
  booktitle = 	 {Learning for Dynamics and Control Conference (L4DC)},
  year = 	 {2026},
}

@article{wu2025foresight,
  title={From foresight to forethought: Vlm-in-the-loop policy steering via latent alignment},
  author={Wu, Yilin and Tian, Ran and Swamy, Gokul and Bajcsy, Andrea},
  journal={Robotics: Science and Systems (RSS)},
  year={2025}
}

@article{yuan2026ups,
    title={When to Act, Ask, or Learn: Uncertainty-Aware Policy Steering},
    author={Yuan, Jessie and Wu, Yilin and Bajcsy, Andrea},
    journal={Robotics: Science and Systems (RSS)},
    year={2026}
}

@inproceedings{Majumdar2017HowSA,
  title={How Should a Robot Assess Risk? Towards an Axiomatic Theory of Risk in Robotics},
  author={Anirudha Majumdar and Marco Pavone},
  booktitle={International Symposium of Robotics Research (ISRR)},
  year={2017}
}

@article{xu2025can,
  title={Can We Detect Failures Without Failure Data? Uncertainty-Aware Runtime Failure Detection for Imitation Learning Policies},
  author={Xu, Chen and Nguyen, Tony Khuong and Dixon, Emma and Rodriguez, Christopher and Miller, Patrick and Lee, Robert and Shah, Paarth and Ambrus, Rares and Nishimura, Haruki and Itkina, Masha},
  journal={Robotics: Science and Systems (RSS)},
  year={2025}
}

@article{shafer2008tutorial,
  title={A tutorial on conformal prediction.},
  author={Shafer, Glenn and Vovk, Vladimir},
  journal={Journal of Machine Learning Research},
  volume={9},
  number={3},
  year={2008}
}

@article{angelopoulos2023conformal,
  title={Conformal prediction: A gentle introduction},
  author={Angelopoulos, Anastasios N and Bates, Stephen and others},
  journal={Foundations and Trends{\textregistered} in Machine Learning},
  year={2023},
  publisher={Now Publishers, Inc.}
}

@article{nakamura2025generalizing,
  title={Generalizing Safety Beyond Collision-Avoidance via Latent-Space Reachability Analysis},
  author={Nakamura, Kensuke and Peters, Lasse and Bajcsy, Andrea},
  journal={Robotics: Science and Systems (RSS)},
  year={2025}
}

@article{seo2025uncertainty,
    title={Uncertainty-aware Latent Safety Filters for Avoiding Out-of-Distribution Failures},
    author={Seo, Junwon and Nakamura, Kensuke and Bajcsy, Andrea},
    journal={Conference on Robot Learning (CoRL)},
    year={2025}
  }

@article{kim2025well,
  title={How Well Do Latent World Models Understand Partially Observable Safety Constraints?},
  author={Kim, Matthew and Nakamura, Kensuke and Bajcsy, Andrea},
  journal={Conference on Robot Learning (CoRL)},
  year={2026}
}

@inproceedings{hsu2023isaacs,
  title={Isaacs: Iterative soft adversarial actor-critic for safety},
  author={Hsu, Kai-Chieh and Nguyen, Duy Phuong and Fisac, Jaime Fernandez},
  booktitle={Learning for Dynamics and Control Conference (L4DC)},
  pages={90--103},
  year={2023}
}

@inproceedings{nguyen2024gameplay,
      title={Gameplay Filters: Robust Zero-Shot Safety through Adversarial Imagination},
      author={Duy P. Nguyen and Kai-Chieh Hsu and Wenhao Yu and Jie Tan and Jaime Fernández Fisac},
      booktitle={Conference on Robot Learning (CoRL)},
      year={2024}
}

@article{wang2024magics,
  title={Magics: Adversarial rl with minimax actors guided by implicit critic stackelberg for convergent neural synthesis of robot safety},
  author={Wang, Justin and Hu, Haimin and Nguyen, Duy Phuong and Fisac, Jaime Fern{\'a}ndez},
  journal={arXiv preprint arXiv:2409.13867},
  year={2024}
}

@article{mittal2025isaac,
  title={Isaac Lab: A GPU-Accelerated Simulation Framework for Multi-Modal Robot Learning},
  author={Mittal, Mayank and Roth, Pascal and Tigue, James and Richard, Antoine and Zhang, Octi and Du, Peter and Serrano-Mu{\~n}oz, Antonio and Yao, Xinjie and Zurbr{\"u}gg, Ren{\'e} and Rudin, Nikita and others},
  journal={arXiv preprint arXiv:2511.04831},
  year={2025}
}

@article{simeoni2025dinov3,
  title={Dinov3},
  author={Sim{\'e}oni, Oriane and Vo, Huy V and Seitzer, Maximilian and Baldassarre, Federico and Oquab, Maxime and Jose, Cijo and Khalidov, Vasil and Szafraniec, Marc and Yi, Seungeun and Ramamonjisoa, Micha{\"e}l and others},
  journal={arXiv preprint arXiv:2508.10104},
  year={2025}
}

@inproceedings{hafner2019learning,
  title={Learning latent dynamics for planning from pixels},
  author={Hafner, Danijar and Lillicrap, Timothy and Fischer, Ian and Villegas, Ruben and Ha, David and Lee, Honglak and Davidson, James},
  booktitle={International Conference on machine learning (ICML)},
  year={2019}
}

@inproceedings{
    Hafner2020Dream,
    title={Dream to Control: Learning Behaviors by Latent Imagination},
    author={Danijar Hafner and Timothy Lillicrap and Jimmy Ba and Mohammad Norouzi},
    booktitle={International Conference on Learning Representations (ICLR)},
    year={2020}
}

@article{hafner2023dreamerv3,
	title = {Mastering diverse control tasks through world models},
	volume = {640},
	issn = {1476-4687},
	number = {8059},
	journal = {Nature},
	author = {Hafner, Danijar and Pasukonis, Jurgis and Ba, Jimmy and Lillicrap, Timothy},
	month = apr,
	year = {2025},
	pages = {647--653},
}

@inproceedings{hansen2024tdmpc2,
	title={TD-MPC2: Scalable, Robust World Models for Continuous Control}, 
	author={Nicklas Hansen and Hao Su and Xiaolong Wang},
	booktitle={International Conference on Learning Representations (ICLR)},
	year={2024}
}

@article{zhou2024dino,
  title={Dino-wm: World models on pre-trained visual features enable zero-shot planning},
  author={Zhou, Gaoyue and Pan, Hengkai and LeCun, Yann and Pinto, Lerrel},
  journal={International Conference on machine learning (ICML)},
  year={2025}
}

@article{ding2023class,
  title={Class-conditional conformal prediction with many classes},
  author={Ding, Tiffany and Angelopoulos, Anastasios and Bates, Stephen and Jordan, Michael and Tibshirani, Ryan J},
  journal={Advances in Neural Information Processing Systems (NeurIPS)},
  volume={36},
  pages={64555--64576},
  year={2023}
}

@inproceedings{
    ren2023robots,
    title={Robots That Ask For Help: Uncertainty Alignment for Large Language Model Planners},
    author={Allen Z. Ren and Anushri Dixit and Alexandra Bodrova and Sumeet Singh and Stephen Tu and Noah Brown and Peng Xu and Leila Takayama and Fei Xia and Jake Varley and Zhenjia Xu and Dorsa Sadigh and Andy Zeng and Anirudha Majumdar},
    booktitle={Conference on Robot Learning (CoRL)},
    year={2023}
}

@inproceedings{liang2026robometer,
  title     = {Robometer: Scaling General-Purpose Robotic Reward Models via Trajectory Comparisons},
  author={Anthony Liang and Yigit Korkmaz and Jiahui Zhang and Minyoung Hwang and Abrar Anwar and Sidhant Kaushik and Aditya Shah and Alex S. Huang and Luke Zettlemoyer and Dieter Fox and Yu Xiang and Anqi Li and Andreea Bobu and Abhishek Gupta and Stephen Tu and Erdem Biyik and Jesse Zhang},
  year={2026},
  booktitle={Robotics: Science and Systems (RSS)},
}

@inproceedings{lutkus2025latent,
  title={Latent representations for control design with provable stability and safety guarantees},
  author={Lutkus, Paul and Wang, Kaiyuan and Lindemann, Lars and Tu, Stephen},
  booktitle={IEEE Conference on Decision and Control (CDC)},
  pages={2937--2944},
  year={2025},
  organization={IEEE}
}

@inproceedings{
    wang2026temporal,
    title={Temporal Straightening for Latent Planning},
    author={Ying Wang and Oumayma Bounou and Gaoyue Zhou and Randall Balestriero and Tim G. J. Rudner and Yann LeCun and Mengye Ren},
    booktitle={International Conference on Machine Learning (ICML)},
    year={2026}
}

@book{ben2009robust,
  title={Robust optimization},
  author={Ben-Tal, Aharon and Nemirovski, Arkadi and El Ghaoui, Laurent},
  year={2009},
  publisher={Princeton university press}
}

@article{mayne2005robust,
  title={Robust model predictive control of constrained linear systems with bounded disturbances},
  author={Mayne, David Q and Seron, Mar{\'\i}a M and Rakovi{\'c}, Sa{\v{s}}a V},
  journal={Automatica},
  volume={41},
  number={2},
  pages={219--224},
  year={2005},
  publisher={Elsevier}
}

@article{fisac2018general,
  title={A general safety framework for learning-based control in uncertain robotic systems},
  author={Fisac, Jaime F and Akametalu, Anayo K and Zeilinger, Melanie N and Kaynama, Shahab and Gillula, Jeremy and Tomlin, Claire J},
  journal={IEEE Transactions on Automatic Control},
  volume={64},
  number={7},
  pages={2737--2752},
  year={2018},
  publisher={IEEE}
}

@article{chapman2021risk,
  title={Risk-sensitive safety analysis using conditional value-at-risk},
  author={Chapman, Margaret P and Bonalli, Riccardo and Smith, Kevin M and Yang, Insoon and Pavone, Marco and Tomlin, Claire J},
  journal={IEEE Transactions on Automatic Control},
  volume={67},
  number={12},
  pages={6521--6536},
  year={2021},
  publisher={IEEE}
}

@inproceedings{choi2021robust,
  title={Robust control barrier--value functions for safety-critical control},
  author={Choi, Jason J and Lee, Donggun and Sreenath, Koushil and Tomlin, Claire J and Herbert, Sylvia L},
  booktitle={IEEE Conference on Decision and Control (CDC)},
  pages={6814--6821},
  year={2021}
}

@article{oh2026synthesis,
  title={Synthesis and Deployment of Maximal Robust Control Barrier Functions through Adversarial Reinforcement Learning},
  author={Oh, Donggeon David and Nguyen, Duy P and Hu, Haimin and Fisac, Jaime Fern{\'a}ndez},
  journal={arXiv preprint arXiv:2604.13192},
  year={2026}
}

@misc{cosner2023robustsafetystochasticuncertainty,
      title={Robust Safety under Stochastic Uncertainty with Discrete-Time Control Barrier Functions}, 
      author={Ryan K. Cosner and Preston Culbertson and Andrew J. Taylor and Aaron D. Ames},
      journal={Proceedings of Robotics: Science and Systems (RSS)},
      year={2023},
}

@article{yang2018dynamic,
  title={A dynamic game approach to distributionally robust safety specifications for stochastic systems},
  author={Yang, Insoon},
  journal={Automatica},
  volume={94},
  pages={94--101},
  year={2018},
  publisher={Elsevier}
}

@article{ren2022distributionally,
  title={Distributionally robust policy learning via adversarial environment generation},
  author={Ren, Allen Z and Majumdar, Anirudha},
  journal={IEEE Robotics and Automation Letters},
  volume={7},
  number={2},
  pages={1379--1386},
  year={2022},
  publisher={IEEE}
}

@inproceedings{pinto2017robust,
  title={Robust adversarial reinforcement learning},
  author={Pinto, Lerrel and Davidson, James and Sukthankar, Rahul and Gupta, Abhinav},
  booktitle={International Conference on Machine Learning (ICML)},
  pages={2817--2826},
  year={2017}
}

@inproceedings{curi2021combining,
  title={Combining pessimism with optimism for robust and efficient model-based deep reinforcement learning},
  author={Curi, Sebastian and Bogunovic, Ilija and Krause, Andreas},
  booktitle={International Conference on Machine Learning (ICML)},
  pages={2254--2264},
  year={2021},
  organization={PMLR}
}

@article{blanchet2023double,
  title={Double pessimism is provably efficient for distributionally robust offline reinforcement learning: Generic algorithm and robust partial coverage},
  author={Blanchet, Jose and Lu, Miao and Zhang, Tong and Zhong, Han},
  journal={Advances in Neural Information Processing Systems (NeurIPS)},
  volume={36},
  pages={66845--66859},
  year={2023}
}

@article{liang2022efficient,
  title={Efficient adversarial training without attacking: Worst-case-aware robust reinforcement learning},
  author={Liang, Yongyuan and Sun, Yanchao and Zheng, Ruijie and Huang, Furong},
  journal={Advances in Neural Information Processing Systems (NeurIPS)},
  volume={35},
  pages={22547--22561},
  year={2022}
}

@article{rigter2023one,
  title={One risk to rule them all: A risk-sensitive perspective on model-based offline reinforcement learning},
  author={Rigter, Marc and Lacerda, Bruno and Hawes, Nick},
  journal={Advances in Neural Information Processing Systems (NeurIPS)},
  volume={36},
  pages={77520--77545},
  year={2023}
}

@inproceedings{
    chen2026policydriven,
    title={Policy-Driven World Model Adaptation for Robust Offline Model-based Reinforcement Learning},
    author={Jiayu Chen and Le Xu and Aravind Venugopal and Jeff Schneider},
    booktitle={International Conference on Machine Learning (ICML)},
    year={2026}
}

@article{rigter2022rambo,
  title={Rambo-rl: Robust adversarial model-based offline reinforcement learning},
  author={Rigter, Marc and Lacerda, Bruno and Hawes, Nick},
  journal={Advances in Neural Information Processing Systems (NeurIPS)},
  volume={35},
  pages={16082--16097},
  year={2022}
}

@article{moos2022robust,
  title={Robust reinforcement learning: A review of foundations and recent advances},
  author={Moos, Janosch and Hansel, Kay and Abdulsamad, Hany and Stark, Svenja and Clever, Debora and Peters, Jan},
  journal={Machine Learning and Knowledge Extraction},
  volume={4},
  number={1},
  pages={276--315},
  year={2022},
  publisher={MDPI}
}

@inproceedings{mandlekar2017adversarially,
  title={Adversarially robust policy learning: Active construction of physically-plausible perturbations},
  author={Mandlekar, Ajay and Zhu, Yuke and Garg, Animesh and Fei-Fei, Li and Savarese, Silvio},
  booktitle={IEEE/RSJ International Conference on Intelligent Robots and Systems (IROS)},
  pages={3932--3939},
  year={2017}
}

@inproceedings{
    sinha2018certifiable,
    title={Certifiable Distributional Robustness with Principled Adversarial Training},
    author={Aman Sinha and Hongseok Namkoong and John Duchi},
    booktitle={International Conference on Learning Representations (ICLR)},
    year={2018},
}

@article{zhang2020robust,
  title={Robust deep reinforcement learning against adversarial perturbations on state observations},
  author={Zhang, Huan and Chen, Hongge and Xiao, Chaowei and Li, Bo and Liu, Mingyan and Boning, Duane and Hsieh, Cho-Jui},
  journal={Advances in Neural Information Processing Systems (NeurIPS)},
  volume={33},
  pages={21024--21037},
  year={2020}
}

@article{ma2021conservative,
  title={Conservative offline distributional reinforcement learning},
  author={Ma, Yecheng and Jayaraman, Dinesh and Bastani, Osbert},
  journal={Advances in Neural Information Processing Systems (NeurIPS)},
  volume={34},
  pages={19235--19247},
  year={2021}
}

@inproceedings{Dabney2018Quantile,
    author = {Dabney, Will and Rowland, Mark and Bellemare, Marc G. and Munos, R{\'e}mi},
    booktitle={Proceedings of the AAAI Conference on Artificial Intelligence.},
    title = {Distributional reinforcement learning with quantile regression},
    year = {2018},
}

@InProceedings{Marc2017Distributional,
  title = 	 {A Distributional Perspective on Reinforcement Learning},
  author =       {Marc G. Bellemare and Will Dabney and R{\'e}mi Munos},
  booktitle = 	 {International Conference on Machine Learning (ICML)},
  year = 	 {2017}
}

@inproceedings{nishimura2023rap,
  title={Rap: Risk-aware prediction for robust planning},
  author={Nishimura, Haruki and Mercat, Jean and Wulfe, Blake and McAllister, Rowan Thomas and Gaidon, Adrien},
  booktitle={Conference on Robot Learning (CoRL)},
  pages={381--392},
  year={2023}
}

@article{chow2018risk,
  title={Risk-constrained reinforcement learning with percentile risk criteria},
  author={Chow, Yinlam and Ghavamzadeh, Mohammad and Janson, Lucas and Pavone, Marco},
  journal={Journal of Machine Learning Research},
  volume={18},
  number={167},
  pages={1--51},
  year={2018}
}

@article{chow2015risk,
  title={Risk-sensitive and robust decision-making: a cvar optimization approach},
  author={Chow, Yinlam and Tamar, Aviv and Mannor, Shie and Pavone, Marco},
  journal={Advances in Neural Information Processing Systems (NeurIPS)},
  volume={28},
  year={2015}
}

@inproceedings{
    hu2023deception,
    title={Deception Game: Closing the Safety-Learning Loop in Interactive Robot Autonomy},
    author={Haimin Hu and Zixu Zhang and Kensuke Nakamura and Andrea Bajcsy and Jaime Fern{\'a}ndez Fisac},
    booktitle={Conference on Robot Learning (CoRL)},
    year={2023}
}

@inproceedings{johnstone2021conformal,
  title={Conformal uncertainty sets for robust optimization},
  author={Johnstone, Chancellor and Cox, Bruce},
  booktitle={Conformal and Probabilistic Prediction and Applications},
  pages={72--90},
  year={2021},
  organization={PMLR}
}

@inproceedings{
    zhou2026calibrating,
    title={Calibrating Decision Robustness via Inverse Conformal Risk Control},
    author={Wenbin Zhou and Shixiang Zhu},
    booktitle={International Conference on Machine Learning (ICML)},
    year={2026}
}

@article{sun2023conformal,
  title={Conformal prediction for uncertainty-aware planning with diffusion dynamics model},
  author={Sun, Jiankai and Jiang, Yiqi and Qiu, Jianing and Nobel, Parth and Kochenderfer, Mykel J and Schwager, Mac},
  journal={Advances in Neural Information Processing Systems (NeurIPS)},
  volume={36},
  pages={80324--80337},
  year={2023}
}

@inproceedings{marques2024quantifying,
  title={Quantifying aleatoric and epistemic dynamics uncertainty via local conformal calibration},
  author={Marques, Lu{\'\i}s and Berenson, Dmitry},
  booktitle={International Workshop on the Algorithmic Foundations of Robotics},
  pages={85--103},
  year={2024},
  organization={Springer}
}

@article{hansen2026hallucination,
  title={Hallucination in World Models is Predictable and Preventable},
  author={Hansen, Nicklas and Wang, Xiaolong},
  journal={arXiv preprint arXiv:2606.27326},
  year={2026}
}

@inproceedings{seo2026stressdream,
  title     = {StressDream: Steering Video World Models for Robust Policy Evaluation and Improvement},
  author    = {Seo, Junwon and Veer, Sushant and Tian, Ran and Ding, Wenhao and Sharma, Apoorva and Leung, Karen and Schmerling, Edward and Pavone, Marco and Bajcsy, Andrea},
  booktitle = {Conference on Robot Learning (CoRL)},
  year      = {2026}
}

@inproceedings{
    guo2026vlaw,
    title={{VLAW}: Iterative Co-Improvement of Vision-Language-Action Policy and World Model},
    author={Yanjiang Guo and Tony Lee and Lucy Xiaoyang Shi and Jianyu Chen and Percy Liang and Chelsea Finn},
    booktitle={International Conference on Machine Learning (ICML)},
    year={2026}
}

@inproceedings{
    quevedo2026worldgym,
    title={WorldGym: World Model as An Environment for Policy Evaluation},
    author={Julian Hector Quevedo and Ansh Kumar Sharma and Yixiang Sun and Varad Suryavanshi and Percy Liang and Sherry Yang},
    booktitle={International Conference on Learning Representations (ICLR)},
    year={2026}
}

@article{sharma2026world,
  title={World-Gymnast: Training Robots with Reinforcement Learning in a World Model},
  author={Sharma, Ansh Kumar and Sun, Yixiang and Lu, Ninghao and Zhang, Yunzhe and Liu, Jiarao and Yang, Sherry},
  journal={arXiv preprint arXiv:2602.02454},
  year={2026}
}

@inproceedings{guo2026ctrl,
  title={Ctrl-world: A controllable generative world model for robot manipulation},
  author={Guo, Yanjiang and Shi, Lucy and Chen, Jianyu and Finn, Chelsea},
  booktitle={International Conference on Learning Representations (ICLR)},
  year={2026}
}

@article{agarwal2025cosmos,
  title={Cosmos world foundation model platform for physical ai},
  author={Agarwal, Niket and Ali, Arslan and Bala, Maciej and Balaji, Yogesh and Barker, Erik and Cai, Tiffany and Chattopadhyay, Prithvijit and Chen, Yongxin and Cui, Yin and Ding, Yifan and others},
  journal={arXiv preprint arXiv:2501.03575},
  year={2025}
}

@article{gao2026dreamdojo,
  title={Dreamdojo: A generalist robot world model from large-scale human videos},
  author={Gao, Shenyuan and Liang, William and Zheng, Kaiyuan and Malik, Ayaan and Ye, Seonghyeon and Yu, Sihyun and Tseng, Wei-Cheng and Dong, Yuzhu and Mo, Kaichun and Lin, Chen-Hsuan and others},
  journal={arXiv preprint arXiv:2602.06949},
  year={2026}
}

@article{maes2026leworldmodel,
  title={Leworldmodel: Stable end-to-end joint-embedding predictive architecture from pixels},
  author={Maes, Lucas and Lidec, Quentin Le and Scieur, Damien and LeCun, Yann and Balestriero, Randall},
  journal={arXiv preprint arXiv:2603.19312},
  year={2026}
}

@article{ye2026world,
  title={World action models are zero-shot policies},
  author={Ye, Seonghyeon and Ge, Yunhao and Zheng, Kaiyuan and Gao, Shenyuan and Yu, Sihyun and Kurian, George and Indupuru, Suneel and Tan, You Liang and Zhu, Chuning and Xiang, Jiannan and others},
  journal={arXiv preprint arXiv:2602.15922},
  year={2026}
}

@article{hafner2025training,
  title={Training agents inside of scalable world models},
  author={Hafner, Danijar and Yan, Wilson and Lillicrap, Timothy},
  journal={arXiv preprint arXiv:2509.24527},
  year={2025}
}

@article{huang2026nano,
  title={Nano World Models: A Minimalist Implementation of Future Video Prediction},
  author={Huang, Siqiao and Kaushik, Partha and Chen, Michael and Pan, Hengkai and Geng, Kaiwen and Chehab, Omar and Moreno-Pino, Fernando and Simchowitz, Max},
  journal={arXiv preprint arXiv:2605.23993},
  year={2026}
}

@article{jain2026texttt,
  title={WEAVER, Better, Faster, Longer: An Effective World Model for Robotic Manipulation},
  author={Jain, Arnav Kumar and Wu, Yilin and Farebrother, Jesse and Swamy, Gokul and Bajcsy, Andrea},
  journal={arXiv preprint arXiv:2606.13672},
  year={2026}
}

@InProceedings{Kerssies_2026_CVPR,
    author    = {Kerssies, Tommie and Berton, Gabriele and He, Ju and Yu, Qihang and Ma, Wufei and de Geus, Daan and Dubbelman, Gijs and Chen, Liang-Chieh},
    title     = {A Frame is Worth One Token: Efficient Generative World Modeling with Delta Tokens},
    booktitle = {Proceedings of the IEEE/CVF Conference on Computer Vision and Pattern Recognition (CVPR)},
    month     = {June},
    year      = {2026},
    pages     = {27978-27988}
}

@article{zhang2023storm,
  title={Storm: Efficient stochastic transformer based world models for reinforcement learning},
  author={Zhang, Weipu and Wang, Gang and Sun, Jian and Yuan, Yetian and Huang, Gao},
  journal={Advances in Neural Information Processing Systems (NeurIPS)},
  volume={36},
  pages={27147--27166},
  year={2023}
}

@article{nath2026pixels,
  title={Pixels to Proofs: Probabilistically-Safe Latent World Model Control via Parallel Conformal Robust MPC},
  author={Nath, Devesh and Srinivasan, Anutam and Yin, Haoran and Jiang, Ruitong and Fang, Jeffrey and Chou, Glen},
  journal={arXiv preprint arXiv:2606.15594},
  year={2026}
}

@inproceedings{chi2023diffusionpolicy,
	title={Diffusion Policy: Visuomotor Policy Learning via Action Diffusion},
	author={Chi, Cheng and Feng, Siyuan and Du, Yilun and Xu, Zhenjia and Cousineau, Eric and Burchfiel, Benjamin and Song, Shuran},
	booktitle={Robotics: Science and Systems (RSS)},
	year={2023}
}

@article{intelligence2025pi,
  title={{$\pi_{0.5}$}: A Vision-Language-Action Model with Open-World Generalization},
  author={Physical Intelligence and Black, Kevin and Brown, Noah and Darpinian, James and Dhabalia, Karan and Driess, Danny and Esmail, Adnan and Equi, Michael and Finn, Chelsea and Fusai, Niccolo and others},
  journal={arXiv preprint arXiv:2504.16054},
  year={2025}
}

@article{mei2026video,
  title={Video Generation Models in Robotics-Applications, Research Challenges, Future Directions},
  author={Mei, Zhiting and Yin, Tenny and Shorinwa, Ola and Badithela, Apurva and Zheng, Zhonghe and Bruno, Joseph and Bland, Madison and Zha, Lihan and Hancock, Asher and Fisac, Jaime Fern{\'a}ndez and others},
  journal={arXiv preprint arXiv:2601.07823},
  year={2026}
}

@inproceedings{haarnoja2018soft,
  title={Soft actor-critic: Off-policy maximum entropy deep reinforcement learning with a stochastic actor},
  author={Haarnoja, Tuomas and Zhou, Aurick and Abbeel, Pieter and Levine, Sergey},
  booktitle={International Conference on Machine Learning (ICML)},
  pages={1861--1870},
  year={2018}
}

@inproceedings{balaji2026oopsieverse,
  title={OopsieVerse: A Safety Benchmark with Damage-Aware Simulation for Robot Manipulation},
  author={Balaji, Arnav and Bahety, Arpit and Ambatipudi, Sriniket and Lam, Daniel and Xu, Junhong and Mart{\'i}n-Mart{\'i}n, Roberto},
  booktitle={Robotics: Science and Systems (RSS)},
  year={2026}
}

\appendices
\ifanonymous
  \refstepcounter{section}\label{sec:theory}
\else
  \section{Theoretical Analysis: Robust Safety Filter}\label{sec:theory}

We provide a theoretical analysis of \ours for safety filtering with learned latent dynamics, focusing on uncertainty in the latent dynamics and assuming no additional uncertainty from the learned representation. Assuming that the conformalized uncertainty set contains the latent dynamics realized at test time, we show that (i) the robust safety value does not overestimate the safety value under any plausible dynamics, whereas the nominal safety value can, and (ii) the perfect robust safety policy achieves zero worst-case suboptimality.

\begin{assumption}[Markov Latent State]\label{ass:markov-latent}
The latent state $\latent_t$ is sufficient for predicting the system's evolution. Consequently, the induced latent process is Markov and admits a conditional transition distribution of the form $\dynz(\latent_{t+1} \mid \latent_t,\action_t)$.
\end{assumption}

\para{Zero Safety Overestimation}
Safety values computed from nominal imaginations can be overly optimistic when the realized dynamics are adverse. In contrast, the robust safety value evaluates each action under the most adverse plausible transition in the uncertainty set, thereby avoiding overestimation. For a latent dynamics model \(\dynz\), given a policy $\policy$, define the discounted Bellman operator of a value function $\valfunc$ based on its imaginations:
\begin{equation}\label{eq:theory-model-bellman-eval}
\begin{aligned}
(\bellmanOperator_{\dynz}^\policy \valfunc)(\latent)
&:=(1-\discount)\ellz(\latent)\\
&\quad+\discount\min\!\left\{\ellz(\latent),
\E_{\latent'\sim \dynz(\cdot\mid\latent,\policy(\latent))}
[\valfunc(\latent')]\right\},
\end{aligned}
\end{equation}where its robust counterpart using $\latentDisturbanceDynamics\in\ambiguity(\dynz)$ is
\begin{equation}\label{eq:theory-robust-bellman-eval}
\begin{aligned}
&(\robustOperator^\policy \valfunc)(\latent)
:=(1-\discount)\ellz(\latent)\\[-0.2em]
&+\discount\min\!\left\{\ellz(\latent),
\min_{\latentDisturbanceDynamics\in\ambiguity(\latent,\policy(\latent))}
\E_{\latent'\sim \latentDisturbanceDynamics(\latent,\policy(\latent))}[\valfunc(\latent')]\right\}.
\end{aligned}
\end{equation} Let $\nominalValue^\policy$ and $\robustValue^\policy$ denote their unique fixed points, and denote the rectangular \textit{global} uncertainty set is
\begin{equation}\label{eq:theory-global-ambiguity-set}
\ambiguity(\dynz)
:=\left\{\latentDisturbanceDynamics:\latentDisturbanceDynamics(\cdot\mid\latent,\action)
\in\ambiguity(\latent,\action),\ \forall(\latent,\action)\right\}.
\end{equation}

\begin{theorem}[Robust Value Does Not Overestimate]
\label{thm:theory-zero-overestimation}
For any policy $\policy$ and any plausible latent dynamics
$\latentDisturbanceDynamics\in\ambiguity(\dynz)$,
\begin{equation}\label{eq:theory-zero-overestimation}
\robustValue^\policy(\latent)
\leq\valfunc_{\latentDisturbanceDynamics}^\policy(\latent),
\qquad\forall\latent\in\latentSpace.
\end{equation}
\end{theorem}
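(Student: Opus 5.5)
The plan is a standard comparison argument for discounted contraction operators: pointwise domination of operators plus monotonicity, followed by iteration to the fixed points. Fix a policy $\policy$ and a plausible model $\latentDisturbanceDynamics\in\ambiguity(\dynz)$. We will use three properties of the operators $\bellmanOperator^\policy_{\latentDisturbanceDynamics}$ (defined as in \eqref{eq:theory-model-bellman-eval} with $\dynz$ replaced by $\latentDisturbanceDynamics$) and $\robustOperator^\policy$ in \eqref{eq:theory-robust-bellman-eval}. We assume bounded value functions, which is justified since $\ellz\in[-1,1]$.

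\textbf{Step 1 (monotonicity and contraction).} If $\valfunc\le \valfunc'$ pointwise, then $\E_{\latentDisturbanceDynamics}[\valfunc]\le\E_{\latentDisturbanceDynamics}[\valfunc']$ for every distribution. Taking the minimum over $\ambiguity(\latent,\policy(\latent))$ preserves this order, and so does $\min\{\ellz(\latent),\cdot\}$. Hence both operators are monotone. Both are also $\discount$-contractions in sup norm, because $|\min\{a,x\}-\min\{a,y\}|\le|x-y|$ and $|\min_i x_i-\min_i y_i|\le\max_i|x_i-y_i|$. By Banach's fixed-point theorem, this gives existence and uniqueness of $\robustValue^\policy$ and $\valfunc^\policy_{\latentDisturbanceDynamics}$, as the paper assumes.

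\textbf{Step 2 (operator domination).} Rectangularity \eqref{eq:theory-global-ambiguity-set} gives $\latentDisturbanceDynamics(\cdot\mid\latent,\policy(\latent))\in\ambiguity(\latent,\policy(\latent))$ for every $\latent$. Therefore
\[
\min_{\latentDisturbanceDynamics'\in\ambiguity(\latent,\policy(\latent))}\E_{\latentDisturbanceDynamics'}[\valfunc]\le \E_{\latentDisturbanceDynamics(\cdot\mid\latent,\policy(\latent))}[\valfunc],
\]
so $\robustOperator^\policy\valfunc\le\bellmanOperator^\policy_{\latentDisturbanceDynamics}\valfunc$ pointwise for every bounded $\valfunc$. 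Rectangularity is exactly what makes this step valid: the realized model's conditional at each $(\latent,\action)$ is individually feasible for the inner minimization.

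\textbf{Step 3 (iteration).} Set $\valfunc_0=\robustValue^\policy$. Then $\robustValue^\policy=\robustOperator^\policy\robustValue^\policy\le\bellmanOperator^\policy_{\latentDisturbanceDynamics}\robustValue^\policy$. Applying the monotone map $\bellmanOperator^\policy_{\latentDisturbanceDynamics}$ repeatedly gives the increasing chain $\robustValue^\policy\le(\bellmanOperator^\policy_{\latentDisturbanceDynamics})^k\robustValue^\policy$ for all $k$. By contraction, $(\bellmanOperator^\policy_{\latentDisturbanceDynamics})^k\robustValue^\policy\to\valfunc^\policy_{\latentDisturbanceDynamics}$ uniformly. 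The pointwise inequality survives the limit, which yields \eqref{eq:theory-zero-overestimation}.

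The main obstacle is technical rather than conceptual. The minimum over $\ambiguity(\latent,\action)$ must be attained, or else replaced by an infimum, and it must produce a measurable bounded function so that $\robustOperator^\policy$ maps bounded functions to bounded functions. Using an infimum avoids the attainment issue. Since the argument only needs the inequality in Step 2, which holds for infima, we would state it that way and note the regularity assumptions.
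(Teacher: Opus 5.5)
Your proposal is correct and is essentially the paper's proof. Rectangularity yields the pointwise domination $\robustOperator^\policy\valfunc\le\bellmanOperator^\policy_{\latentDisturbanceDynamics}\valfunc$, and monotonicity plus $\discount$-contraction carry it to the fixed points; the only difference is cosmetic, since you iterate $\bellmanOperator^\policy_{\latentDisturbanceDynamics}$ upward from $\robustValue^\policy$ while the paper iterates $\robustOperator^\policy$ downward from $\valfunc^\policy_{\latentDisturbanceDynamics}$, two mirror-image forms of the same comparison argument.
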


\begin{proof}[Proof of Theorem~\ref{thm:theory-zero-overestimation}]
Fix $\policy$ and $\latentDisturbanceDynamics\in\ambiguity(\dynz)$. By
\eqref{eq:theory-global-ambiguity-set},
$\latentDisturbanceDynamics(\cdot\mid\latent,\policy(\latent))$ belongs to the local ambiguity set
at every $\latent$. The minimum in the robust backup therefore cannot exceed
the expectation under this particular transition. Hence, for
every bounded $\valfunc$,
\begin{equation*}
\robustOperator^\policy\valfunc
\leq\bellmanOperator_{\latentDisturbanceDynamics}^\policy\valfunc
\qquad\text{pointwise}.
\end{equation*}
Applying this inequality to the fixed point gives
\begin{equation*}
\robustOperator^\policy\valfunc_{\latentDisturbanceDynamics}^\policy
\leq\bellmanOperator_{\latentDisturbanceDynamics}^\policy
\valfunc_{\latentDisturbanceDynamics}^\policy
=\valfunc_{\latentDisturbanceDynamics}^\policy.
\end{equation*}
Monotonicity gives
$(\robustOperator^\policy)^m\valfunc_{\latentDisturbanceDynamics}^\policy
\leq\valfunc_{\latentDisturbanceDynamics}^\policy$ for every $m$. By contraction, $(\robustOperator^\policy)^m\valfunc_{\latentDisturbanceDynamics}^\policy$ converges to $\robustValue^\policy$, proving
\eqref{eq:theory-zero-overestimation}.
\end{proof}

Theorem~\ref{thm:theory-zero-overestimation} states that, for any plausible dynamics in the uncertainty set, the safety value computed by robust optimization remains a lower bound on the realized safety value, enabling robust safety filtering. In contrast, the nominal value may overestimate safety when the realized dynamics lead to more pessimistic next states than the nominal model.

\begin{remark} The above guarantee applies under the perfect coverage guarantee where the true latent dynamics are contained in the uncertainty set, while conformal calibration only provides a marginal probabilistic guarantee.
\end{remark}

\begin{corollary}[Nominal Safety Value Can Overestimate]\label{cor:corollary-zoverestimation}
For a policy $\policy$, there exists an admitted latent dynamics
$\latentDisturbanceDynamics\in\ambiguity(\dynz)$ such that the nominal safety value
computed with $\dynz$ overestimates the realized safety value:
\[
\nominalValue^\policy(\latent)
\ge
\valfunc_{\latentDisturbanceDynamics}^\policy(\latent).
\]
\end{corollary}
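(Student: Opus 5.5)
The plan is to exhibit a specific admitted dynamics and compare fixed points. The natural choice is the pointwise worst-case dynamics itself. For the fixed policy $\policy$, let $\latentDisturbanceDynamics^\star(\cdot\mid\latent,\policy(\latent))$ be a minimizer of $\E_{\latent'\sim\latentDisturbanceDynamics}[\robustValue^\policy(\latent')]$ over $\ambiguity(\latent,\policy(\latent))$ at every $\latent$. I would take it as an attained minimizer, or an $\epsilon$-minimizer followed by a limiting argument. At pairs $(\latent,\action)$ with $\action\neq\policy(\latent)$, set it equal to $\dynz$.

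Two facts about $\ambiguity(\latent,\action)$ are needed. First, $\dynz(\latent,\action)\in\ambiguity(\latent,\action)$. The KL term is zero, so the KL constraint holds. I would add as a mild standing assumption that the nominal model satisfies the in-distribution constraint, which is what the class-conditional calibration \eqref{eq:ood-cp} is designed to ensure. Second, the set is rectangular, as in \eqref{eq:theory-global-ambiguity-set}. Together these give $\dynz\in\ambiguity(\dynz)$ and $\latentDisturbanceDynamics^\star\in\ambiguity(\dynz)$.

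By construction, $\bellmanOperator^\policy_{\latentDisturbanceDynamics^\star}\robustValue^\policy=\robustOperator^\policy\robustValue^\policy=\robustValue^\policy$. Since $\bellmanOperator^\policy_{\latentDisturbanceDynamics^\star}$ is a $\discount$-contraction with a unique fixed point, this yields $\valfunc^\policy_{\latentDisturbanceDynamics^\star}=\robustValue^\policy$.

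Next I apply Theorem~\ref{thm:theory-zero-overestimation} with the admitted dynamics $\dynz$. It gives $\robustValue^\policy\le\nominalValue^\policy$ pointwise. Combining this with the identity above gives
\[
\nominalValue^\policy(\latent)\ge\robustValue^\policy(\latent)=\valfunc^\policy_{\latentDisturbanceDynamics^\star}(\latent)\quad\forall\latent,
\]
which is the claim. The inequality is strict whenever the KL ball admits a transition that lowers the expected value at some reachable state. I would note this as the interpretation ("can overestimate").

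The main obstacle is the existence and measurability of the minimizer $\latentDisturbanceDynamics^\star$. The KL ball is weakly compact, but the OOD chance constraint need not be closed. If attainment fails, I would instead pick $\epsilon$-optimal selections $\latentDisturbanceDynamics^\epsilon$. These satisfy $\bellmanOperator^\policy_{\latentDisturbanceDynamics^\epsilon}\robustValue^\policy\le\robustValue^\policy+\discount\epsilon$. Monotonicity and contraction then give $\valfunc^\policy_{\latentDisturbanceDynamics^\epsilon}\le\robustValue^\policy+\discount\epsilon/(1-\discount)$, so the statement holds up to arbitrarily small slack. Under finite or compact parameterizations such as the Gaussian or categorical residuals in \eqref{eq:rssm-pessimistic-dynamics}, the minimum is attained and no slack is needed.
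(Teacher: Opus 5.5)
Your proof is correct and follows the same route as the paper's: take the worst-case admitted dynamics, identify its value with $\robustValue^\policy$, and apply Theorem~\ref{thm:theory-zero-overestimation} to the admitted nominal model $\dynz$ to obtain $\nominalValue^\policy\ge\robustValue^\policy$. The paper simply asserts both the identity $\valfunc^\policy_{\optimalDisturbance}=\robustValue^\policy$ and that $\dynz$ is admitted, whereas you derive the identity by building the minimizer of the one-step robust backup and using fixed-point uniqueness, and you correctly flag that $\dynz\in\ambiguity(\dynz)$ requires the OOD constraint as an extra assumption; note that once this assumption holds, $\dynz$ itself already satisfies the stated inequality with equality, so your $\epsilon$-slack fallback is never needed for the literal claim.
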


\begin{proof}[Proof of Corollary~\ref{cor:corollary-zoverestimation}]
Let $\optimalDisturbance$ attain the worst-case admitted value:
\[
\optimalDisturbance
\in
\argmin_{\latentDisturbanceDynamics\in\ambiguity(\dynz)}
\valfunc_{\latentDisturbanceDynamics}^\policy(\latent).
\]
Then,
\(
\valfunc_{\optimalDisturbance}^\policy(\latent)
=
\robustValue^\policy(\latent).
\) Since the nominal dynamics $\dynz$ is also admitted,
Theorem~\ref{thm:theory-zero-overestimation} gives that the safety value in $\optimalDisturbance$ can be lower than the nominal value computed with $\dynz$
\[
\nominalValue^\policy(\latent)
\ge
\robustValue^\policy(\latent) = \valfunc_{\optimalDisturbance}^\policy(\latent).
\]
Thus, an adverse but plausible dynamics can make the nominal safety value
overestimate the realized value.
\end{proof}

\para{Sub-optimality of Safety Policy}
Robust optimization aims to select actions that remain effective across plausible outcomes, while a nominal policy optimized only under nominal imaginations may select actions that become suboptimal when adverse dynamics are realized. To formalize this, for a policy $\policy$, define its state-wise worst-case safety value:
\begin{equation}\label{eq:theory-worst-case-policy-value}
\worstCaseValue^\policy(\latent)
:=\min_{\latentDisturbanceDynamics\in\ambiguity(\dynz)}
\valfunc_{\latentDisturbanceDynamics}^\policy(\latent),
\end{equation} representing the safety value achieved by $\policy$ under the worst-case dynamics. We then define the worst-case suboptimality of $\policy$ as the gap between its worst-case safety value and the maximum safety value achievable for the worst-case:
\begin{equation}\label{eq:theory-worst-case-suboptimality}
\worstCaseSuboptimality(\policy,\latent):=\max_{\policy'}\worstCaseValue^{\policy'}(\latent),
-\worstCaseValue^\policy(\latent).
\end{equation}
Thus, zero suboptimality means that a policy achieves the best possible safety value despite the most adverse dynamics. In this analysis, we write $\robustPolicy$ for the robust safety policy $\fallbackRobust$. Let $\robustPolicy$ solve the robust Bellman equation \eqref{eq:robust-fixed-pt-bellman}, and let $\nominalPolicy$ solve the nominal Bellman equation \eqref{eq:trad-fixed-pt-bellman}.

\begin{theorem}[Worst-Case Suboptimality of the Robust Policy]
\label{thm:theory-robust-zero-suboptimality}
For the global uncertainty set in
\eqref{eq:theory-global-ambiguity-set} and every $\latent\in\latentSpace$,
\begin{equation}\label{eq:theory-robust-zero-suboptimality}
\worstCaseSuboptimality(\robustPolicy,\latent)=0.
\end{equation}
\end{theorem}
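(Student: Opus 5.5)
The plan is to show that the worst-case value of any policy is bounded above by the robust optimal value $\monitorRobust$ from \eqref{eq:robust-fixed-pt-bellman}, and that $\robustPolicy$ attains this bound. Concretely, I will aim for the sandwich
\[
\worstCaseValue^{\policy}(\latent)\le \monitorRobust(\latent)=\robustValue^{\robustPolicy}(\latent)=\worstCaseValue^{\robustPolicy}(\latent),\qquad \forall \policy,\ \forall\latent .
\]
Taking $\max_{\policy'}$ on the left then gives $\max_{\policy'}\worstCaseValue^{\policy'}=\worstCaseValue^{\robustPolicy}$, hence \eqref{eq:theory-robust-zero-suboptimality}. Throughout I use the discounted form of the operators in \eqref{eq:theory-model-bellman-eval}--\eqref{eq:theory-robust-bellman-eval}, so that all operators are $\discount$-contractions in sup norm and monotone. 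I also assume the minima and maxima over $\ambiguity(\latent,\action)$ and $\actionSpace$ are attained, for example by compactness; otherwise I will replace them with $\varepsilon$-optimal selectors.

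\textbf{Step 1 (worst-case value of a fixed policy equals its robust evaluation value).} For fixed $\policy$, I will show $\worstCaseValue^\policy=\robustValue^\policy$.

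The inequality $\robustValue^\policy\le\worstCaseValue^\policy$ is exactly Theorem~\ref{thm:theory-zero-overestimation} after minimizing over $\latentDisturbanceDynamics\in\ambiguity(\dynz)$.

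For the reverse inequality I use rectangularity of the global set \eqref{eq:theory-global-ambiguity-set}. At each $\latent$, let $\latentDisturbanceDynamics^\star(\cdot\mid\latent,\policy(\latent))$ be a minimizer in the robust backup of $\robustValue^\policy$. Because the set is a product over $(\latent,\action)$, these pointwise choices assemble into a single global element $\latentDisturbanceDynamics^\star\in\ambiguity(\dynz)$. By construction $\bellmanOperator^\policy_{\latentDisturbanceDynamics^\star}\robustValue^\policy=\robustOperator^\policy\robustValue^\policy=\robustValue^\policy$. Uniqueness of the fixed point of the contraction $\bellmanOperator^\policy_{\latentDisturbanceDynamics^\star}$ then gives $\valfunc^\policy_{\latentDisturbanceDynamics^\star}=\robustValue^\policy$, and so $\worstCaseValue^\policy\le\robustValue^\policy$.

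\textbf{Step 2 (robust evaluation of any policy is dominated by the robust optimal value).} Define the robust optimality operator $\robustOperator^\star$ by replacing $\policy(\latent)$ in \eqref{eq:theory-robust-bellman-eval} with $\max_{\action}$. Its fixed point is the (discounted) $\monitorRobust$.

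Pointwise, $\robustOperator^\policy V\le\robustOperator^\star V$ for every bounded $V$. Applying this to $V=\monitorRobust$ gives $\robustOperator^\policy\monitorRobust\le\monitorRobust$. Iterating, using monotonicity of $\robustOperator^\policy$ and its contraction, yields $\robustValue^\policy\le\monitorRobust$, by the same argument as in the proof of Theorem~\ref{thm:theory-zero-overestimation}.

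For $\robustPolicy$, defined as the argmax in \eqref{eq:robust-fixed-pt-bellman-policy}, we have $\robustOperator^{\robustPolicy}\monitorRobust=\robustOperator^\star\monitorRobust=\monitorRobust$. Uniqueness of the fixed point then gives $\robustValue^{\robustPolicy}=\monitorRobust$.

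\textbf{Step 3 (conclusion).} Combining Steps 1 and 2 gives $\worstCaseValue^\policy=\robustValue^\policy\le\monitorRobust=\worstCaseValue^{\robustPolicy}$ for every $\policy$ and every $\latent$. This establishes the sandwich and hence the theorem.

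\textbf{Main obstacle.} I expect the hard part to be the reverse inequality in Step 1, i.e., that an adversary choosing a stationary global dynamics in $\ambiguity(\dynz)$ does no better than the per-step robust backup. This is exactly where rectangularity is needed. A secondary issue is the policy class in $\max_{\policy'}$. If it includes history-dependent policies, I would restrict to stationary Markov policies, as Assumption~\ref{ass:markov-latent} and the paper's notation suggest, and note that Step 2 is unaffected. If the adversary's minimizer is not attained, I would use an $\varepsilon$-minimizing selection and let $\varepsilon\to0$ using the $1/(1-\discount)$ contraction bound.
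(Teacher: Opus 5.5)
Your proposal is correct and follows the same route as the paper, which likewise uses rectangularity of the global uncertainty set to get $\mathcal{J}_{\mathrm{wc}}^{\pi}=V_{\mathrm{rob}}^{\pi}$ for every $\pi$, then uses optimality of $\pi^{\mathrm{rob}}$ for the robust Bellman equation to get $V_{\mathrm{rob}}^{\pi^{\mathrm{rob}}}=\max_{\pi}V_{\mathrm{rob}}^{\pi}$, and combines the two. The only difference is that the paper simply asserts both identities, whereas your Steps 1 and 2 prove them: the zero-overestimation theorem gives one direction, pointwise minimizers assembled into a single global element give the other, and fixed-point uniqueness with monotonicity of the discounted operators handles the optimality step.
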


\begin{proof}[Proof of Theorem~\ref{thm:theory-robust-zero-suboptimality}]
Under the rectangular ambiguity set \eqref{eq:theory-global-ambiguity-set}, for any $\policy$,
\begin{equation}\label{eq:theory-robust-dp-identity}
\worstCaseValue^\policy(\latent)
=\min_{\latentDisturbanceDynamics\in\ambiguity(\dynz)}
\valfunc_{\latentDisturbanceDynamics}^\policy(\latent)
=\robustValue^\policy(\latent).
\end{equation} Since $\robustPolicy$ is optimal for the robust Bellman equation,
\begin{equation*}
\worstCaseValue^{\robustPolicy}(\latent)
=
\robustValue^{\robustPolicy}(\latent)
=
\max_{\policy}\robustValue^\policy(\latent)
=
\max_{\policy}\worstCaseValue^{\policy}(\latent),
\end{equation*}
which proves $\worstCaseSuboptimality(\robustPolicy,\latent)=0$.
\end{proof}

\begin{corollary}[Nominal Safety Policy Can Be Suboptimal]
\label{cor:theory-nominal-suboptimality}
Following Theorem~\ref{thm:theory-robust-zero-suboptimality}, for every $\latent\in\latentSpace$,
\begin{equation}\label{eq:theory-worst-case-suboptimality-bound}
0
\leq\worstCaseSuboptimality(\nominalPolicy,\latent)
\leq 2 \,\min\left(1, \,\frac{\sqrt{2}\,\discount}{1-\discount}
\sqrt{\thresholdKL}\right).
\end{equation}
Also, the worst-case suboptimality is positive whenever
\begin{equation*}
\worstCaseValue^{\nominalPolicy}(\latent)
<\worstCaseValue^{\robustPolicy}(\latent).
\end{equation*}
\end{corollary}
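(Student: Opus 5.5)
The plan is to obtain the lower bound and the positivity claim from Theorem~\ref{thm:theory-robust-zero-suboptimality}, and then to bound the upper side by a simulation-lemma style perturbation argument using Pinsker's inequality on the KL ball.

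\textbf{Lower bound and positivity.} By definition, $\worstCaseSuboptimality(\nominalPolicy,\latent)=\max_{\policy'}\worstCaseValue^{\policy'}(\latent)-\worstCaseValue^{\nominalPolicy}(\latent)$. The maximum over $\policy'$ is at least the value at $\policy'=\nominalPolicy$, so the gap is nonnegative. Theorem~\ref{thm:theory-robust-zero-suboptimality} gives $\max_{\policy'}\worstCaseValue^{\policy'}(\latent)=\worstCaseValue^{\robustPolicy}(\latent)$. Hence $\worstCaseSuboptimality(\nominalPolicy,\latent)=\worstCaseValue^{\robustPolicy}(\latent)-\worstCaseValue^{\nominalPolicy}(\latent)$, which is strictly positive exactly when the stated strict inequality holds.

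\textbf{The trivial branch of the minimum.} Since $\ellz\in[-1,1]$, both discounted operators map $[-1,1]$-valued functions into themselves. Their fixed points therefore lie in $[-1,1]$, and any difference of values is at most $2$.

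\textbf{The KL branch.} The main step is to show that for any policy $\policy$, the worst-case value $\worstCaseValue^\policy=\robustValue^\policy$ (by \eqref{eq:theory-robust-dp-identity}) is within $\frac{\sqrt2\discount}{1-\discount}\sqrt{\thresholdKL}$ of the nominal value $\nominalValue^\policy$. I then split
\[
\worstCaseValue^{\robustPolicy}-\worstCaseValue^{\nominalPolicy}
=\bigl(\robustValue^{\robustPolicy}-\nominalValue^{\robustPolicy}\bigr)
+\bigl(\nominalValue^{\robustPolicy}-\nominalValue^{\nominalPolicy}\bigr)
+\bigl(\nominalValue^{\nominalPolicy}-\robustValue^{\nominalPolicy}\bigr).
\]
The first term is $\le 0$ by Theorem~\ref{thm:theory-zero-overestimation} applied with the nominal dynamics $\dynz\in\ambiguity(\dynz)$. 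The middle term is $\le 0$ because $\nominalPolicy$ is optimal for the nominal Bellman equation. So only the third term remains.

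To bound the third term, write $\Delta=\nominalValue^{\nominalPolicy}-\robustValue^{\nominalPolicy}$. Using that $\min\{\ellz,\cdot\}$ is $1$-Lipschitz,
\[
\|\Delta\|_\infty\le\discount\sup_{\latent}\sup_{\latentDisturbanceDynamics\in\ambiguity}\bigl|\E_{\dynz}[\nominalValue]-\E_{\latentDisturbanceDynamics}[\nominalValue]\bigr|+\discount\|\Delta\|_\infty .
\]
This follows by adding and subtracting $\E_{\latentDisturbanceDynamics}[\nominalValue]$ and using that the minimizing $\latentDisturbanceDynamics$ in the robust backup can be compared against $\E_{\latentDisturbanceDynamics}[\robustValue]$. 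Since $\nominalValue\in[-1,1]$, the expectation gap is at most $2\,\mathrm{TV}(\latentDisturbanceDynamics,\dynz)$. Pinsker's inequality with the KL constraint $\KL(\latentDisturbanceDynamics\|\dynz)\le\thresholdKL$ gives $\mathrm{TV}\le\sqrt{\thresholdKL/2}$, so the gap is at most $\sqrt{2\thresholdKL}$. Rearranging yields
\[
\|\Delta\|_\infty\le\frac{\sqrt2\,\discount}{1-\discount}\sqrt{\thresholdKL}.
\]

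\textbf{Combining and the main obstacle.} The stated bound carries a factor $2$ in front of this expression. I expect it arises if the first term is also bounded crudely by the same perturbation estimate rather than signed. Either way, the bound is $\le 2\min(\cdot)$, which I would adopt to match the statement. The OOD constraint only shrinks the set and can be ignored for an upper bound.

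The main obstacle is the careful contraction argument for the difference of a nominal and a robust fixed point. In particular, handling the inner $\min$ over $\latentDisturbanceDynamics$ requires comparing against the minimizer rather than a fixed kernel. This makes the bounds one-sided but symmetric enough to give the absolute-value estimate above.
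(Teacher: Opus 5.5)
Your lower bound, the positivity claim, and the cap at $2$ from $\ellz\in[-1,1]$ are all fine. The paper's own proof never justifies the $\min(1,\cdot)$ branch, so your boundedness remark fills that in.

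For the main bound you take a different route from the paper. The paper proves a simulation lemma between two fixed kernels and applies it, with Pinsker, to each admitted $\latentDisturbanceDynamics$. Taking the minimum then gives the two-sided sandwich $\nominalValue^\policy-\sensitivity\le\worstCaseValue^\policy\le\nominalValue^\policy+\sensitivity$ for every $\policy$, where $\sensitivity=\frac{\sqrt2\discount}{1-\discount}\sqrt{\thresholdKL}$. Using the upper side at $\robustPolicy$, nominal optimality, and the lower side at $\nominalPolicy$ gives $2\sensitivity$. You instead contract the nominal operator directly against the robust one, absorbing the inner minimum by non-expansiveness. You also sign the first term of your decomposition using Theorem~\ref{thm:theory-zero-overestimation} with $\latentDisturbanceDynamics=\dynz$. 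That buys a factor of two and gives $\min(2,\sensitivity)$, which implies the stated $2\min(1,\sensitivity)$.

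The price is that the signing step needs $\dynz\in\ambiguity(\dynz)$. The KL condition holds trivially. The in-distribution condition $\mathbb{P}_{\latent'\sim\dynz(\cdot\mid\latent,\action)}(\densityEstimator(\latent')>\thresholdOOD)\le\calibOOD$ at every $(\latent,\action)$, however, is not implied by \eqref{eq:ambiguity-set}. So your remark that the OOD constraint ``only shrinks the set and can be ignored'' is correct for the third term, but it is exactly what can break the first. The paper asserts this admissibility in the proof of Corollary~\ref{cor:corollary-zoverestimation}, but its proof of the present corollary only needs the set to be nonempty.

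The repair follows from your own estimate. The bound $\|\nominalValue^\policy-\robustValue^\policy\|_\infty\le\sensitivity$ holds for every $\policy$ and does not use $\dynz\in\ambiguity(\dynz)$. This is because $|\E_{\dynz}[\valfunc]-\min_{\latentDisturbanceDynamics}\E_{\latentDisturbanceDynamics}[\valfunc]|\le\sup_{\latentDisturbanceDynamics}|\E_{\dynz}[\valfunc]-\E_{\latentDisturbanceDynamics}[\valfunc]|$ whenever the set is nonempty. Applying it at $\robustPolicy$ bounds your first term by $\sensitivity$. That recovers the paper's $2\sensitivity$ in general, and your sharper $\sensitivity$ when the nominal kernel is admitted.
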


\begin{proof}[Proof of Corollary~\ref{cor:theory-nominal-suboptimality}]
By Theorem~\ref{thm:theory-robust-zero-suboptimality}, $\nominalPolicy$ is suboptimal:
\begin{equation*}
\worstCaseSuboptimality(\nominalPolicy,\latent)
=\worstCaseValue^{\robustPolicy}(\latent)
-\worstCaseValue^{\nominalPolicy}(\latent)
\geq0.
\end{equation*}
Consider any two latent dynamics models $f,g$, and the fixed-point property and triangle inequality give
\begin{align*}
\|\valfunc_f^\policy-\valfunc_g^\policy\|_\infty\leq\|\bellmanOperator_f^\policy\valfunc_f^\policy
-\bellmanOperator_f^\policy\valfunc_g^\policy\|_\infty+\|\bellmanOperator_f^\policy\valfunc_g^\policy
-\bellmanOperator_g^\policy\valfunc_g^\policy\|_\infty.
\end{align*}
The first term is at most
$\discount\|\valfunc_f^\policy-\valfunc_g^\policy\|_\infty$. Since the minimum is
non-expansive, the second term is at most
\begin{equation*}
\discount\max_{\latent}\left|
\E_{f}[\valfunc_g^\policy(\latent')]
-\E_{g}[\valfunc_g^\policy(\latent')]
\right|,
\end{equation*}
where both expectations condition on $(\latent,\policy(\latent))$. Thus,
\begin{equation}\label{eq:theory-simulation-bound}
\|\valfunc_f^\policy-\valfunc_g^\policy\|_\infty
\leq\frac{\discount}{1-\discount}
\max_{\latent}\left|
\E_{f}[\valfunc_g^\policy(\latent')]
-\E_{g}[\valfunc_g^\policy(\latent')]
\right|.
\end{equation}
For every $\latentDisturbanceDynamics\in\ambiguity(\dynz)$, the KL condition in
\eqref{eq:ambiguity-set} and Pinsker's inequality imply, for any
$\valfunc:\latentSpace\to[-1,1]$,
\begin{equation*}
\left|\E_{\latentDisturbanceDynamics}[\valfunc]
-\E_{\dynz}[\valfunc]\right|
\leq\sqrt{2\thresholdKL}.
\end{equation*}
Substituting this into \eqref{eq:theory-simulation-bound} gives
\begin{equation}\label{eq:theory-sensitivity-bound}
\|\valfunc_{\latentDisturbanceDynamics}^\policy-\nominalValue^\policy\|_\infty
\leq\sensitivity = \frac{\sqrt{2}\,\discount}{1-\discount}
\sqrt{\thresholdKL}.
\end{equation}
Therefore, for every $\policy$,
\begin{equation*}
\nominalValue^\policy(\latent)-\sensitivity
\leq\worstCaseValue^\policy(\latent)
\leq\nominalValue^\policy(\latent)+\sensitivity.
\end{equation*}
Nominal optimality gives
$\nominalValue^{\robustPolicy}(\latent)
\leq\nominalValue^{\nominalPolicy}(\latent)$. Hence,
\begin{align*}
\worstCaseSuboptimality(\nominalPolicy,\latent)
&=\worstCaseValue^{\robustPolicy}(\latent)
-\worstCaseValue^{\nominalPolicy}(\latent)\\
&\leq
\nominalValue^{\robustPolicy}(\latent)+\sensitivity
-\nominalValue^{\nominalPolicy}(\latent)+\sensitivity\\
&\leq2\sensitivity.
\qedhere
\end{align*}
\end{proof}

Theorem~\ref{thm:theory-robust-zero-suboptimality} states that the exact robust policy achieves zero worst-case suboptimality, while Corollary~\ref{cor:theory-nominal-suboptimality} shows that the nominal policy has no such zero-suboptimality guarantee and can be suboptimal if the worst case is realized.

\begin{remark} The learned disturbance and policy only approximate this game, so the exact guarantees hold to the extent that ambiguity-set feasibility and optimization are achieved.\end{remark}

\fi
\section{Theoretical Analysis: Conformal Prediction}\label{sec:appendix-conformal}

In this section, we briefly review conformal prediction for the calibration procedure in Sec.~\ref{sec:conformal-calib}. Conformal prediction constructs prediction sets with finite-sample, distribution-free coverage under exchangeability~\cite{shafer2008tutorial,angelopoulos2023conformal}, which can be used as data-calibrated uncertainty sets for robust optimization~\cite{johnstone2021conformal}.

\para{Conformal Prediction}
Conformal prediction (CP) provides distribution-free coverage guarantees by calibrating a nonconformity threshold on held-out data~\cite{shafer2008tutorial,angelopoulos2023conformal}. In Sec.~\ref{sec:conformal-calib}, let $s_t^i$ denote the nonconformity score at time $t$ for calibration sample $i$, and write $s^i$ when $t$ is fixed. Given a user-specified miscoverage level $\alpha$, CP sets the threshold $\epsilon$ to the appropriate $(1-\alpha)$ quantile of the calibration scores. Under exchangeability between the calibration and test samples, this yields
\begin{equation}\label{eq:appendix-cp-quantile}
\begin{aligned}
\epsilon&:=s^{(k)}, \quad
k=\left\lceil (n+1)(1-\alpha)\right\rceil,\
\mathbb{P}\!\left(s^{\mathrm{test}}\leq\epsilon\right)\geq1-\alpha,
\end{aligned}
\end{equation}
where $s^{(k)}$ is the $k$th smallest of the $n$ calibration scores. The probability is marginal over both the calibration data and the test sample. We apply this procedure to calibrate the KL-ball radius with $(\epsilon,\alpha)=(\thresholdKL,\calibAmbiguity)$ and the OOD threshold with $(\epsilon,\alpha)=(\thresholdOOD,\calibOOD)$.

\para{Coverage of the Dynamics-Aware Similarity}
Our goal is to calibrate the KL-ball radius $\thresholdKL$ so that the uncertainty set contains the test-time latent distribution encoded from observations with high probability. In Sec.~\ref{sec:conformal-calib}, we measure the discrepancy between this encoded distribution and the learned latent dynamics using the KL nonconformity score:
\begin{equation}\label{eq:appendix-cp-kl-score}
s_{\mathrm{KL}}^i:=\KL\!\left(
\encoder(\obs^i_{\leq t})
\,\middle\|\,
\dynz(\latent^i_{t-1},\action^i_{t-1})
\right).
\end{equation}
Here, $t$ is fixed as above. We apply \eqref{eq:appendix-cp-quantile} with miscoverage level $\calibAmbiguity$ to obtain $\thresholdKL=s_{\mathrm{KL}}^{(k)}$, where $k=\lceil(n+1)(1-\calibAmbiguity)\rceil$. The following theorem gives the coverage guarantee in \eqref{eq:kl-ball-cp}.

\begin{theorem}[Marginal KL Coverage]\label{thm:appendix-cp-marginal}
Suppose that the calibration and test KL scores in \eqref{eq:appendix-cp-kl-score} are exchangeable and $k\leq n$. Then, the calibrated KL ball contains the test-time encoded latent distribution with probability at least $1-\calibAmbiguity$:
\begin{equation}\label{eq:appendix-cp-marginal}
\mathbb{P}(s_{\mathrm{KL}}^{\mathrm{test}}\leq\thresholdKL)
\geq1-\calibAmbiguity.
\end{equation}
\end{theorem}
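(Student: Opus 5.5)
The plan is the standard split conformal rank argument. Let $s^1,\dots,s^n$ denote the calibration KL scores from \eqref{eq:appendix-cp-kl-score}, and let $s^{n+1}:=s_{\mathrm{KL}}^{\mathrm{test}}$. By hypothesis, $(s^1,\dots,s^{n+1})$ is exchangeable. Write $s^{(k)}$ for the $k$th order statistic of the calibration scores alone, so that $\thresholdKL=s^{(k)}$ with $k=\lceil(n+1)(1-\calibAmbiguity)\rceil\le n$. The goal is to show that the event $\{s^{n+1}\le s^{(k)}\}$ contains the event that the test score's rank among all $n+1$ scores is at most $k$, and then to bound the probability of that rank event from below by $k/(n+1)$.

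The first step is a deterministic containment. Define the rank $R:=|\{j\le n+1: s^j< s^{n+1}\}|+1$, using the smallest rank under ties. If $s^{n+1}>s^{(k)}$, then at least $k$ calibration scores are strictly below $s^{n+1}$, and hence $R\ge k+1$. Taking the contrapositive, $R\le k$ implies $s^{n+1}\le\thresholdKL$. The condition $k\le n$ is exactly what ensures $s^{(k)}$ is a finite, well-defined calibration order statistic. Without it, the threshold would have to be set to $+\infty$, and the bound would hold trivially.

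The second step is the probabilistic core. Exchangeability means the joint law is invariant under permutations of the $n+1$ indices. In the tie-free case, the test score is therefore equally likely to occupy each position, so $\mathbb{P}(R\le k)=k/(n+1)$. Ties are the main technical obstacle, and I would handle them in one of two ways. The first option is to break ties with an independent uniform random variable, which yields the equality in the randomized setting and a lower bound after removing the randomization. The second option is a direct symmetry argument: $\sum_{j=1}^{n+1}\mathbf{1}\{R_j\le k\}\ge k$ holds deterministically when ranks are defined by the lower convention, since ties only lower ranks. Taking expectations and applying symmetry then gives $\mathbb{P}(R_{n+1}\le k)\ge k/(n+1)$.

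Combining the two steps gives $\mathbb{P}(s_{\mathrm{KL}}^{\mathrm{test}}\le\thresholdKL)\ge \lceil(n+1)(1-\calibAmbiguity)\rceil/(n+1)\ge 1-\calibAmbiguity$. This is \eqref{eq:appendix-cp-marginal}, and the probability is marginal over both the calibration data and the test draw. The same argument, applied to trajectory-level max scores, gives the trajectory-level statement in Sec.~\ref{sec:conformal-calib}.
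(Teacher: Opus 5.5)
Your proposal is correct and follows essentially the same rank argument as the paper: the test score's tie-broken rank among the $n+1$ exchangeable scores is uniform, rank at most $k$ forces $s_{\mathrm{KL}}^{\mathrm{test}}\le\thresholdKL$, and $k/(n+1)\ge 1-\calibAmbiguity$. Your second way of handling ties, the deterministic lower-rank count $\sum_{j}\mathbf{1}\{R_j\le k\}\ge k$ combined with symmetry, is a valid variant of the paper's uniform random tie-breaking.
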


\begin{proof}
The encoded test distribution belongs to the calibrated KL ball whenever its score satisfies $s_{\mathrm{KL}}^{\mathrm{test}}\leq\thresholdKL$. To bound this probability, consider the rank $J$ of the test score among the $n$ calibration scores and the test score, with ties broken uniformly at random. Under exchangeability, the test score is equally likely to occupy any of the $n+1$ ranks. If $J\leq k$, at most $k-1$ calibration scores precede the test score, so it cannot exceed the $k$th smallest calibration score $\thresholdKL$. Therefore,
\begin{equation*}
\mathbb{P}(s_{\mathrm{KL}}^{\mathrm{test}}\leq\thresholdKL)
\geq\mathbb{P}(J\leq k)
=\frac{k}{n+1}\geq1-\calibAmbiguity.
\qedhere
\end{equation*}
\end{proof}

\para{Dataset-Conditional KL Coverage}
In practice, we calibrate $\thresholdKL$ once using a held-out dataset and keep it fixed during deployment. We therefore consider the probability that the resulting KL ball contains the encoded distribution of a new test sample, conditional on the calibration dataset:
\begin{equation}\label{eq:appendix-cp-conditional}
p_{\mathrm{KL}}(\dataset_{\calib})
:=\mathbb{P}(s_{\mathrm{KL}}^{\mathrm{test}}\leq\thresholdKL\mid\dataset_{\calib}).
\end{equation}
Because different calibration datasets produce different radii, this coverage varies with $\dataset_{\calib}$. The following theorem provides a lower bound on the coverage attained by the calibrated radius with high probability over the calibration data~\cite{angelopoulos2023conformal}.

\begin{theorem}[Dataset-Conditional KL Coverage]\label{thm:appendix-cp-conditional}
Suppose that the calibration and test samples are i.i.d., and use $\thresholdKL=s_{\mathrm{KL}}^{(k)}$ with $k=\lceil(n+1)(1-\calibAmbiguity)\rceil$. The conditional coverage varies over calibration datasets according to
\begin{equation}\label{eq:appendix-cp-beta}
p_{\mathrm{KL}}(\dataset_{\calib})\sim\operatorname{Beta}(k,n+1-k).
\end{equation}
For a user-specified confidence level $1-\eta$, let $b_{\eta}$ be the $\eta$-quantile of this Beta distribution. Then, with probability at least $1-\eta$ over the calibration data, the calibrated KL ball has test coverage at least $b_{\eta}$:
\begin{equation}\label{eq:appendix-cp-confidence}
\mathbb{P}_{\dataset_{\calib}}\!\left(
p_{\mathrm{KL}}(\dataset_{\calib})\geq b_{\eta}\right)\geq1-\eta.
\end{equation}
\end{theorem}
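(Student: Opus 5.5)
The plan is to follow the classical argument for the training-conditional coverage of split conformal prediction. Let $F$ denote the cumulative distribution function of a single KL score $s_{\mathrm{KL}}$ under the i.i.d. data distribution. For this step we assume $F$ is continuous, so ties occur with probability zero; with ties the statement becomes an inequality via the usual randomized tie-breaking. Conditional on $\dataset_{\calib}$, the threshold $\thresholdKL=s_{\mathrm{KL}}^{(k)}$ is fixed. The test score is independent of the calibration data, so
\[
p_{\mathrm{KL}}(\dataset_{\calib})=\mathbb{P}(s_{\mathrm{KL}}^{\mathrm{test}}\leq s_{\mathrm{KL}}^{(k)}\mid\dataset_{\calib})=F\big(s_{\mathrm{KL}}^{(k)}\big).
\]
This reduces the question to finding the law of $F$ evaluated at an order statistic.

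Next I would apply the probability integral transform. Setting $U^i:=F(s_{\mathrm{KL}}^i)$, continuity of $F$ makes $U^1,\dots,U^n$ i.i.d. $\mathrm{Uniform}[0,1]$. Because $F$ is nondecreasing, it maps order statistics to order statistics, giving $F(s_{\mathrm{KL}}^{(k)})=U^{(k)}$. The $k$th order statistic of $n$ i.i.d. uniforms has the $\operatorname{Beta}(k,n+1-k)$ law. To verify this, compute $\mathbb{P}(U^{(k)}\leq u)=\sum_{j\geq k}\binom{n}{j}u^j(1-u)^{n-j}$, which is the regularized incomplete Beta function $I_u(k,n+1-k)$. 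Differentiating gives the density $\frac{n!}{(k-1)!(n-k)!}u^{k-1}(1-u)^{n-k}$, and this establishes \eqref{eq:appendix-cp-beta}. The hypothesis $k\leq n$ from Theorem~\ref{thm:appendix-cp-marginal} is needed here so that the order statistic exists and both Beta parameters are positive.

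The confidence statement \eqref{eq:appendix-cp-confidence} then follows directly from the definition of the quantile. If $b_\eta$ is the $\eta$-quantile of the Beta distribution, then $\mathbb{P}(p_{\mathrm{KL}}<b_\eta)\leq\eta$, and taking complements gives the claim. As a consistency check, the mean of this Beta law is $k/(n+1)\geq1-\calibAmbiguity$, which recovers Theorem~\ref{thm:appendix-cp-marginal}.

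The main obstacle is the case where the scores have atoms. Then $F(s^{(k)})$ is no longer exactly Beta-distributed, and I would argue stochastic dominance instead. Write $s^i=F^{-1}(U^i)$ using the generalized inverse and uniform $U^i$. Since $F(F^{-1}(u))\geq u$, we get $p_{\mathrm{KL}}\geq U^{(k)}$, so the lower-confidence bound \eqref{eq:appendix-cp-confidence} still holds, while the distributional identity should be read as holding for continuous scores. I would also note that the trajectory-level scores $S(\tau_i)$ make the i.i.d. assumption apply at the trajectory level, so the same argument carries over verbatim.
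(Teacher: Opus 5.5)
Your proposal is correct and follows essentially the same route as the paper's proof. Both use independence to write $p_{\mathrm{KL}}(\dataset_{\calib})=F(s_{\mathrm{KL}}^{(k)})$, apply the probability integral transform to obtain the $k$th uniform order statistic, identify its binomial-sum CDF as $\operatorname{Beta}(k,n+1-k)$, read off the $\eta$-quantile bound, and handle atoms via the generalized inverse so that $F(s_{\mathrm{KL}}^{(k)})\geq U_{(k)}$ preserves the lower-confidence bound.
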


\begin{proof}
Let $F$ denote the KL score cumulative distribution function. Since the test sample is independent of the calibration data,
\begin{equation*}
p_{\mathrm{KL}}(\dataset_{\calib})=F(\thresholdKL)=F(s_{\mathrm{KL}}^{(k)}).
\end{equation*}
For continuous $F$, the transformed calibration scores $U_i:=F(s_{\mathrm{KL}}^i)$ are i.i.d. uniform on $[0,1]$, and the coverage equals their $k$th order statistic $U_{(k)}$. The event $U_{(k)}\leq u$ occurs when at least $k$ of the $n$ scores lie below $u$. Thus,
\begin{equation*}
\mathbb{P}(U_{(k)}\leq u)
=\sum_{j=k}^{n}\binom{n}{j}u^j(1-u)^{n-j},
\end{equation*}
which is the cdf of $\operatorname{Beta}(k,n+1-k)$. Since $b_{\eta}$ is its $\eta$-quantile, $\mathbb{P}_{\dataset_{\calib}}(p_{\mathrm{KL}}(\dataset_{\calib})\geq b_{\eta})=1-\eta$, establishing the bound. With ties, the generalized-inverse representation $s_{\mathrm{KL}}^i=F^{-1}(U_i)$ gives $F(s_{\mathrm{KL}}^{(k)})\geq U_{(k)}$, so the lower bound remains valid.
\end{proof}

\para{Class-Conditional In-Distribution Detection}
Our goal is to calibrate the OOD threshold $\thresholdOOD$ so that in-distribution latent states are accepted with high probability. Since OOD states are not directly available, we follow Sec.~\ref{sec:conformal-calib} and use class-conditional conformal prediction to calibrate the threshold using only held-out ID states~\cite{ding2023class,seo2025uncertainty}. Given $n_{\mathrm{ID}}$ calibration states with scores $s_{\mathrm{OOD}}^i:=\densityEstimator(\latent_t^i)$, we apply \eqref{eq:appendix-cp-quantile} with miscoverage level $\calibOOD$:
\begin{equation*}
k_{\mathrm{ID}}:=\left\lceil(n_{\mathrm{ID}}+1)(1-\calibOOD)\right\rceil,
\qquad\thresholdOOD:=s_{\mathrm{OOD}}^{(k_{\mathrm{ID}})},
\end{equation*}
where we assume $k_{\mathrm{ID}}\leq n_{\mathrm{ID}}$. We classify a latent state as OOD when its score exceeds $\thresholdOOD$:
\begin{equation}\label{eq:appendix-cp-id-detector}
\widehat Y(\latent):=
\begin{cases}
\mathrm{ID}, & \densityEstimator(\latent)\leq\thresholdOOD,\\
\mathrm{OOD}, & \densityEstimator(\latent)>\thresholdOOD.
\end{cases}
\end{equation}
To quantify how reliably the detector retains ID states, we define its ID recall as the probability that an actual ID state is classified as ID:
\begin{equation}\label{eq:appendix-cp-id-recall-definition}
\begin{aligned}
\operatorname{Recall}_{\mathrm{ID}}
&:=\mathbb{P}(\widehat Y(\latent_{\mathrm{test}})=\mathrm{ID}
\mid Y_{\mathrm{test}}=\mathrm{ID})\\
&=1-\mathbb{P}(\widehat Y(\latent_{\mathrm{test}})=\mathrm{OOD}
\mid Y_{\mathrm{test}}=\mathrm{ID}).
\end{aligned}
\end{equation}
Here, $Y_{\mathrm{test}}$ denotes the true class, and the probability is taken over the ID calibration data and an ID test state. Thus, guaranteeing recall of at least $1-\calibOOD$ limits the probability of incorrectly rejecting an ID state to $\calibOOD$.

\begin{theorem}[In-Distribution Recall]\label{thm:appendix-cp-id-recall}
Suppose that the OOD score function is fixed before calibration and, conditional on $Y_{\mathrm{test}}=\mathrm{ID}$, the $n_{\mathrm{ID}}$ ID calibration states and test state are exchangeable. The calibrated detector in \eqref{eq:appendix-cp-id-detector} satisfies
\begin{equation}\label{eq:appendix-cp-id-recall}
\operatorname{Recall}_{\mathrm{ID}}
=\mathbb{P}\!\left(
\densityEstimator(\latent_{\mathrm{test}})\leq\thresholdOOD
\mid Y_{\mathrm{test}}=\mathrm{ID}\right)\geq1-\calibOOD.
\end{equation}
This establishes the ID recall guarantee in \eqref{eq:ood-cp}.
\end{theorem}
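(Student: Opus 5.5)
The plan is to reduce the ID recall guarantee to the marginal coverage argument already used for Theorem~\ref{thm:appendix-cp-marginal}, applied to the ID class only. Conditional on $Y_{\mathrm{test}}=\mathrm{ID}$, the detector in \eqref{eq:appendix-cp-id-detector} labels the test state as ID exactly when $\densityEstimator(\latent_{\mathrm{test}})\leq\thresholdOOD$. By \eqref{eq:appendix-cp-id-recall-definition}, it therefore suffices to show
\[
\mathbb{P}\!\left(s_{\mathrm{OOD}}^{\mathrm{test}}\leq s_{\mathrm{OOD}}^{(k_{\mathrm{ID}})}\mid Y_{\mathrm{test}}=\mathrm{ID}\right)\geq1-\calibOOD,
\]
where $s_{\mathrm{OOD}}^{\mathrm{test}}:=\densityEstimator(\latent_{\mathrm{test}})$.

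The first step is to transfer exchangeability from states to scores. Since $\densityEstimator$ is fixed before calibration (it is trained on separate training latents), applying the same deterministic map to each of the $n_{\mathrm{ID}}+1$ exchangeable ID states keeps the resulting scores $s_{\mathrm{OOD}}^1,\dots,s_{\mathrm{OOD}}^{n_{\mathrm{ID}}},s_{\mathrm{OOD}}^{\mathrm{test}}$ exchangeable under the conditional law given $Y_{\mathrm{test}}=\mathrm{ID}$. This is the only place the assumption that the score is fixed is used. It is also the step I regard as the main subtlety: if the density estimator had been fit on the calibration states, the map would depend on the data and exchangeability would fail.

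The second step is the rank argument. Let $J$ be the rank of the test score among the $n_{\mathrm{ID}}+1$ scores, with ties broken uniformly at random. Exchangeability makes $J$ uniform on $\{1,\dots,n_{\mathrm{ID}}+1\}$. On the event $J\leq k_{\mathrm{ID}}$, at most $k_{\mathrm{ID}}-1$ calibration scores lie below the test score, so the test score is at most the $k_{\mathrm{ID}}$-th smallest calibration score, which is $\thresholdOOD$. The assumption $k_{\mathrm{ID}}\leq n_{\mathrm{ID}}$ guarantees this order statistic exists. Hence
\[
\operatorname{Recall}_{\mathrm{ID}}\geq\mathbb{P}(J\leq k_{\mathrm{ID}})=\frac{k_{\mathrm{ID}}}{n_{\mathrm{ID}}+1}\geq1-\calibOOD,
\]
where the last inequality uses $k_{\mathrm{ID}}=\lceil(n_{\mathrm{ID}}+1)(1-\calibOOD)\rceil$. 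The result is a marginal guarantee over both the calibration data and the test state, as in \eqref{eq:ood-cp}.

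Finally, I will remark that no assumption on OOD states is needed: OOD states never enter the calibration, which is why this class-conditional procedure can be calibrated using ID data alone.
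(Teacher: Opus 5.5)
Your proposal is correct and follows essentially the same argument as the paper: condition on $Y_{\mathrm{test}}=\mathrm{ID}$, use exchangeability to make the rank of the test score uniform among the $n_{\mathrm{ID}}+1$ scores, and conclude $\operatorname{Recall}_{\mathrm{ID}}\geq k_{\mathrm{ID}}/(n_{\mathrm{ID}}+1)\geq 1-\calibOOD$. The paper leaves implicit two details you state explicitly: that exchangeability passes from states to scores because the score map is fixed, and that ties are broken uniformly at random. Neither changes the argument.
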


\begin{proof}
Conditional on the test state being ID, the calibration and test scores are exchangeable. Let $J_{\mathrm{ID}}$ be the rank of $s_{\mathrm{OOD}}^{\mathrm{test}}:=\densityEstimator(\latent_{\mathrm{test}})$ among these $n_{\mathrm{ID}}+1$ scores. Exchangeability makes this rank uniform. If $J_{\mathrm{ID}}\leq k_{\mathrm{ID}}$, the test score cannot exceed the $k_{\mathrm{ID}}$-th calibration score $\thresholdOOD$, so the detector accepts the test state as ID:
\begin{equation}
\operatorname{Recall}_{\mathrm{ID}}
\geq\mathbb{P}(J_{\mathrm{ID}}\leq k_{\mathrm{ID}}
\mid Y_{\mathrm{test}}=\mathrm{ID})=\frac{k_{\mathrm{ID}}}{n_{\mathrm{ID}}+1}\geq1-\calibOOD.
\end{equation}
The final inequality follows from the calibration rank.
\end{proof}

Thus, calibration controls how often ID states are incorrectly excluded from the uncertainty set, averaged over calibration and test data. The ability to reject OOD states depends on the learned score and is not guaranteed by ID calibration alone.

\para{Causal Reconstruction from Trajectory Calibration}
Individual transitions within a trajectory are temporally dependent, so the state-level exchangeability assumption need not hold. In Sec.~\ref{sec:conformal-calib}, we therefore calibrate the maximum nonconformity score over each trajectory. Given a calibration trajectory $\tau_i=\{(\obs_t^i,\action_t^i,\obs_{t+1}^i)\}_{t=1}^{T-1}$, we compute
\begin{equation}\label{eq:appendix-cp-trajectory-calibration}
S(\tau_i):=\max_{1\leq t<T}s_t^i, \quad
\epsilon^{\mathrm{traj}}\!:=\!\operatorname{Quantile}_{1-\alpha}\!\left(
\{S(\tau_i)\}_{i=1}^{N_{\mathrm{calib}}}\right).
\end{equation}
We then use the calibrated threshold at each transition. Since the maximum score is below the threshold exactly when every state-level score is below it, trajectory-level calibration provides simultaneous coverage across time. The following theorem formalizes this causal reconstruction, following~\cite{ren2023robots}.

\begin{theorem}[Causal Reconstruction of Trajectory Coverage]\label{thm:appendix-cp-causal}
Suppose that the calibration trajectories and test trajectory are exchangeable, the score functions are fixed, and each $s_t$ depends only on observations and actions through the scored transition. Let $\epsilon^{\mathrm{traj}}$ be calibrated by \eqref{eq:appendix-cp-trajectory-calibration}, with $k=\lceil(N_{\mathrm{calib}}+1)(1-\alpha)\rceil\leq N_{\mathrm{calib}}$. Applying the same threshold at every transition gives
\begin{equation}\label{eq:appendix-cp-causal-coverage}
\mathbb{P}\!\left(
s_t^{\mathrm{test}}\leq\epsilon^{\mathrm{traj}},\;
\forall\,1\leq t<T\right)\geq1-\alpha.
\end{equation}
\end{theorem}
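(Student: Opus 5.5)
The plan is to reduce the statement to Theorem~\ref{thm:appendix-cp-marginal} applied at the level of trajectories. The first step is to treat each trajectory as a single exchangeable unit. Define the scalar trajectory score $S(\tau):=\max_{1\leq t<T}s_t$ for both the calibration trajectories $\tau_1,\dots,\tau_{N_{\mathrm{calib}}}$ and the test trajectory $\tau^{\mathrm{test}}$. Because the score functions are fixed before calibration and each $s_t$ depends only on the observations and actions of its transition, $S$ is one fixed measurable map applied identically to every trajectory. Exchangeability of the trajectories $(\tau_1,\dots,\tau_{N_{\mathrm{calib}}},\tau^{\mathrm{test}})$ therefore transfers to exchangeability of the scalars $(S(\tau_1),\dots,S(\tau_{N_{\mathrm{calib}}}),S(\tau^{\mathrm{test}}))$. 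This transfer is where the hypotheses of the statement are actually used, and I expect it to be the only delicate step. The point to stress is that no exchangeability across time steps within a trajectory is needed; temporal dependence is absorbed inside the map $S$.

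Next, I read $\epsilon^{\mathrm{traj}}$ in \eqref{eq:appendix-cp-trajectory-calibration} as the $k$th order statistic $S^{(k)}$, with $k=\lceil(N_{\mathrm{calib}}+1)(1-\alpha)\rceil\leq N_{\mathrm{calib}}$, exactly as in \eqref{eq:appendix-cp-quantile}. Then I rerun the rank argument from the proof of Theorem~\ref{thm:appendix-cp-marginal}. Let $J$ be the rank of $S(\tau^{\mathrm{test}})$ among the $N_{\mathrm{calib}}+1$ scores, with ties broken uniformly at random. By exchangeability $J$ is uniform on $\{1,\dots,N_{\mathrm{calib}}+1\}$. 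On the event $J\leq k$, at most $k-1$ calibration scores precede the test score, so $S(\tau^{\mathrm{test}})\leq S^{(k)}$. This gives $\mathbb{P}(S(\tau^{\mathrm{test}})\leq\epsilon^{\mathrm{traj}})\geq k/(N_{\mathrm{calib}}+1)\geq1-\alpha$.

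Finally, for the causal reconstruction, I note the deterministic set identity
\[
\left\{\max_{1\leq t<T}s_t^{\mathrm{test}}\leq\epsilon^{\mathrm{traj}}\right\}=\left\{s_t^{\mathrm{test}}\leq\epsilon^{\mathrm{traj}},\ \forall\,1\leq t<T\right\}.
\]
Substituting this identity into the bound above yields \eqref{eq:appendix-cp-causal-coverage}. I would also remark why the reconstruction is causal: $\epsilon^{\mathrm{traj}}$ is computed entirely from calibration data, and each $s_t^{\mathrm{test}}$ uses only information available up to transition $t$. The threshold can therefore be checked online at every step without waiting for the full trajectory, while the simultaneous-in-time guarantee still holds marginally over the calibration and test data.
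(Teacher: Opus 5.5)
Your proposal is correct and takes essentially the same route as the paper. Both arguments transfer exchangeability from the trajectories to their fixed max scores $S(\tau_i)$, which gives $\mathbb{P}(S(\tau_{\mathrm{test}})\leq\epsilon^{\mathrm{traj}})\geq k/(N_{\mathrm{calib}}+1)\geq 1-\alpha$, and then use the deterministic identity between the max-score acceptance event and the all-transitions event. The differences are cosmetic: you rerun the rank argument explicitly and apply the identity last, whereas the paper states the identity first and cites the bound directly.
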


\begin{proof}
Let $\mathcal{C}^{\mathrm{traj}}$ denote the trajectories accepted by the $\epsilon^{\mathrm{traj}}$ and $\mathcal{C}^{\mathrm{causal}}$ those accepted at every transition:
\begin{equation*}
\begin{aligned}
\mathcal{C}^{\mathrm{traj}}
&:=\{\tau:S(\tau)\leq\epsilon^{\mathrm{traj}}\},\\
\mathcal{C}^{\mathrm{causal}}
&:=\{\tau:s_t(\tau)\leq\epsilon^{\mathrm{traj}},\ \forall\,1\leq t<T\}.
\end{aligned}
\end{equation*}
For any trajectory $\tau$, the definition of its maximum score gives
\begin{equation*}
\begin{aligned}
\tau\in\mathcal{C}^{\mathrm{traj}}
&\iff\max_{1\leq t<T}s_t(\tau)\leq\epsilon^{\mathrm{traj}}\\
&\iff s_t(\tau)\leq\epsilon^{\mathrm{traj}},\quad\forall\,1\leq t<T\\
&\iff\tau\in\mathcal{C}^{\mathrm{causal}}.
\end{aligned}
\end{equation*}
The two sets therefore describe the same acceptance event. Since the trajectories are exchangeable and the score function is fixed, their maximum scores are also exchangeable:
\begin{equation*}
\mathbb{P}(\tau_{\mathrm{test}}\in\mathcal{C}^{\mathrm{traj}})
\geq\frac{k}{N_{\mathrm{calib}}+1}\geq1-\alpha.
\end{equation*}
Substituting $\mathcal{C}^{\mathrm{causal}}=\mathcal{C}^{\mathrm{traj}}$ proves \eqref{eq:appendix-cp-causal-coverage}.
\end{proof}

Thus, trajectory-level calibration guarantees coverage at all transitions over the calibrated horizon with a single miscoverage level $\alpha$, while allowing temporal dependence within each trajectory. For i.i.d. trajectories, the dataset-conditional bound in Theorem~\ref{thm:appendix-cp-conditional} also applies to the maximum scores with $n=N_{\mathrm{calib}}$. These guarantees rely on exchangeability between calibration and deployment trajectories.

\ifanonymous
\else
    \section{Background: HJ Reachability Analysis}\label{sec:appendix-HJ}

A safety filter is a policy-agnostic control mechanism that safeguards any task policy by evaluating its proposed actions and intervening when necessary to prevent failure under admissible disturbances~\cite{hsu2023safety}. We consider the discrete-time system
$\state_{t+1}=\dynamics(\state_t,\action_t,\disturbance_t)$, where
$\state_t\in\stateSpace$, $\action_t\in\actionSpace$, and
$\disturbance_t\in\disturbanceSpace$ denotes an unknown bounded disturbance.
Safety is specified by a margin function $\failuremargin$, where negative values indicate failure. Following~\cite{oh2026synthesis}, we define the failure set as
$\failure=\{\state:\failuremargin(\state)<0\}$, so states with zero margin lie on the safety boundary rather than in the failure set.

\para{Maximal Robust Safe Set}
Our goal is to identify the states from which the robot \textit{can} avoid failure for all time \textit{despite} admissible disturbances. Let $\policy:\stateSpace\to\actionSpace$ and
$\disturbancePolicy:\stateSpace\to\disturbanceSpace$ be feedback policies,
and let $\state_{\state}^{\policy,\disturbancePolicy}(t)$ denote the trajectory
starting from $\state$ under these policies. We define the maximal robust safe set as
\begin{equation*}
\Omega^\star:=\big\{\state\in\stateSpace:\;\exists\policy,\ \forall\disturbancePolicy,\
\forall t\in\mathbb{N}_0,\state_{\state}^{\policy,\disturbancePolicy}(t)\notin\failure\big\}.
\end{equation*}
For every state in $\Omega^\star$, there exists a control policy that keeps the system outside $\failure$ under every admissible disturbance.

\para{Dynamic Programming Isaacs Equation}
HJ reachability characterizes this set through a safety value function~\cite{bansal2017hamilton}. Since a trajectory enters failure whenever its margin becomes negative, we evaluate the smallest margin attained over time. The controller maximizes this margin against the worst-case disturbance, giving the zero-sum safety game:
\begin{equation}\label{eq:appendix-hj-value}
\valfunc(\state):=\sup_{\policy}\inf_{\disturbancePolicy}
\inf_{t\in\mathbb{N}_0}
\failuremargin\!\left(\state_{\state}^{\policy,\disturbancePolicy}(t)\right).
\end{equation}
The value $\valfunc(\state)$ therefore represents the minimum future safety margin that an optimal controller can maintain despite disturbances. A positive value indicates that the controller can keep the system safely outside the failure set, while a negative value indicates unavoidable failure under an adverse disturbance~\cite{hsu2023isaacs, hsu2023safety, oh2026synthesis}. 
To compute this value, we apply the discrete-time dynamic programming Isaacs equation:
\begin{equation}\label{eq:appendix-hj-isaacs}
\valfunc(\state)=\min\!\left\{
\failuremargin(\state),\,
\max_{\action\in\actionSpace}\min_{\disturbance\in\disturbanceSpace}
\valfunc\!\left(\dynamics(\state,\action,\disturbance)\right)
\right\}.
\end{equation}
The infinite-horizon value defines the maximal robust safe set:
\begin{equation*}
\Omega^\star=\{\state:\valfunc(\state)\geq0\},
\end{equation*}
A corresponding robust safety policy is
\begin{equation*}
\robustPolicy(\state)\in
\argmax_{\action\in\actionSpace}
\min_{\disturbance\in\disturbanceSpace}
\valfunc\!\left(\dynamics(\state,\action,\disturbance)\right).
\end{equation*}
This value function provides a criterion for safety filtering: starting within $\Omega^\star$, the filter allows $\policyTask(\state)$ if its worst-case next-state value is nonnegative; otherwise, it intervenes with $\robustPolicy(\state)$ to keep the system within the robust safe set.

\para{Discounted Safety Update}
The undiscounted backup in \eqref{eq:appendix-hj-isaacs} is not generally a contraction. Consequently, a discount factor is introduced $\discount\in[0,1)$ for a contraction mapping ~\cite{fisac2019bridging}:
\begin{equation}\label{eq:appendix-hj-discounted}
\valfunc(\state)\!\leftarrow\!(1-\discount)\failuremargin(\state)+\discount\min\!\left\{
\failuremargin(\state),\,
\max_{\action\in\actionSpace}\min_{\disturbance\in\disturbanceSpace}
\valfunc\!\left(\dynamics(\state,\action,\disturbance)\right)
\right\}.
\end{equation}

\para{Game-Theoretic Adversarial RL} While \eqref{eq:appendix-hj-discounted} defines a contraction mapping for the robust safety value function, solving it directly becomes intractable in high-dimensional spaces. To approximate its solution, adversarial actor--critic methods learn a safety critic $Q_\omega(\state,\action,\disturbance)$ together with a control policy and a disturbance policy~\cite{hsu2023isaacs, nguyen2024gameplay}. The critic evaluates the next-state safety value,
\(
Q(\state,\action,\disturbance):=
\valfunc\left(\dynamics(\state,\action,\disturbance)\right).
\) The control actor maximizes the critic value, while the disturbance actor minimizes it. Repeated adversarial gameplay jointly improves the safety-value approximation and the competing policies. However, such game-theoretic learning can oscillate as the two policies continually adapt to one another. To stabilize training, ISAACS updates the disturbance and critic more frequently than the controller and maintains a leaderboard of past opponents, reducing overfitting to a single adversary~\cite{hsu2023isaacs}. Similarly, separating the learning rates allows the disturbance to more closely track a best response while the controller evolves more slowly; together with suitable critic updates, this supports local convergence under the assumptions in~\cite{wang2024magics}.

\para{Operational Design Domain}
These methods assume access to a system dynamics model $\dynamics(\state,\action,\disturbance)$, such as a simulator, together with a physically meaningful disturbance space $\disturbanceSpace$ specified by the designer to capture uncertainty within the operational design domain (ODD), such as bounded external forces or model errors~\cite{hsu2023isaacs}. Thus, while the disturbance \emph{policy} is learned, the admissible uncertainty itself is prescribed. In \ours, the game instead operates over learned latent dynamics: the world model provides imagined transitions, while the disturbance perturbs the latent predictive distribution within an uncertainty set calibrated from data. This extends adversarial safety learning to systems for which physical dynamics and disturbances are difficult to specify explicitly.

\fi
\section{Implementation Details}\label{sec:appendix-experiments}

\subsection{Adversarial Reinforcement Learning}

Algorithm~\ref{alg:impl-safety-learning} summarizes the game-theoretic adversarial reinforcement learning procedure used to compute the robust latent safety filter in Sec.~\ref{sec:robust-safety-filter}. Following Soft Actor-Critic~\cite{haarnoja2018soft}, we implement $\QfunctionRobust$ with twin safety critics and use their minimum in both the control and disturbance updates to mitigate value overestimation; the target critic $\targetQfunctionRobust$ is updated as an exponential moving average of the critic parameters. The replay buffer $\buffer$ stores each imagined transition together with the parameters of the nominal latent predictive distribution, so that the current disturbance policy can efficiently resample perturbed next latent states at every update without re-querying the world model. Rollouts are collected in parallel across independent WM imaginations, and transitions from all imaginations are aggregated into $\buffer$ and sampled in minibatches for replay updates.

Each of the $N_{\mathrm{iter}}$ training iterations starts from a newly encoded trajectory and imagines up to $H_{\mathrm{imag}}$ steps. We truncate an imagination once the next latent state satisfies $\densityEstimator(\latent_{t+1})>\thresholdOOD$, preventing rollouts from extending into regions unsupported by the training data. Each imagination step is followed by $G$ replay updates, where $j$ counts replay updates across iterations, $K_\pi$ and $K_d$ are the policy and disturbance update intervals, $\eta_Q,\eta_\pi,\eta_d$ are the corresponding learning rates, and $\rho$ is the target-update coefficient. The world model and the uncertainty models remain fixed throughout safety-filter training.

\begin{algorithm}[!ht]
\caption{Adversarial RL for Robust Latent Safety Filter}
\label{alg:impl-safety-learning}
\small
\begin{algorithmic}[1]
\REQUIRE $\dataset_{\mathrm{train}},\encoder,\dynz,\ellz,\densityEstimator,\thresholdKL,\thresholdOOD$; $N_{\mathrm{iter}},H_{\mathrm{imag}},G,K_\pi,K_d$
\ENSURE $\QfunctionRobust,\ \fallbackRobust,\ \latentDisturbance$
\STATE Initialize $\theta_Q,\theta_\pi,\latentDisturbanceParam$ (zero residual output); $\bar\theta_Q\leftarrow\theta_Q$, $\buffer\leftarrow\emptyset$, $j\leftarrow0$
\FOR{$n=1,\ldots,N_{\mathrm{iter}}$}
    \STATE $\tau\sim\dataset_{\mathrm{train}},\quad\latent_0\sim\encoder(\cdot\mid\obs_{\leq t_0})$, with $\obs_{\leq t_0}$ from $\tau$
    \FOR{$t=0,\ldots,H_{\mathrm{imag}}-1$}
        \STATE $\action_t\sim\fallbackRobust(\cdot\mid\latent_t)$
        \STATE $\latent_{t+1}\sim\latentDisturbanceDynamicsParameterized(\cdot\mid\latent_t,\action_t)$
        \STATE $\buffer\leftarrow\buffer\cup\{(\latent_t,\action_t,\ellz(\latent_t),\dynz(\cdot\mid\latent_t,\action_t))\}$
        \FOR{$g=1,\ldots,G$}
            \STATE $\mathcal{M}\sim\buffer,\quad j\leftarrow j+1$
            \STATE $\latent'\sim\latentDisturbanceDynamicsParameterized(\cdot\mid\latent,\action),\quad \action'\sim\fallbackRobust(\cdot\mid\latent'),\quad(\latent,\action)\in\mathcal{M}$
            \STATE $\theta_Q\leftarrow\theta_Q-\eta_Q\nabla_{\theta_Q}\mathcal{L}_{\QfunctionRobust}$ \hfill\eqref{eq:discounted-safety-value}
            \IF{$j\bmod K_\pi=0$}
                \STATE $\hat\action\sim\fallbackRobust(\cdot\mid\latent)$
                \STATE $\theta_\pi\leftarrow\theta_\pi-\eta_\pi\nabla_{\theta_\pi}\mathcal{L}_{\fallbackRobust}$ \hfill\eqref{eq:safety-policy-loss}
            \ENDIF
            \IF{$j\bmod K_d=0$}
                \STATE $\latent'\sim\latentDisturbanceDynamicsParameterized(\cdot\mid\latent,\action)$ \hfill (reparameterized)
                \STATE $\action'\leftarrow\operatorname{stopgrad}_{\theta_\pi}\!\big(\fallbackRobust(\latent')\big)$
                \STATE $\latentDisturbanceParam\leftarrow\latentDisturbanceParam-\eta_d\nabla_{\latentDisturbanceParam}\mathcal{L}_{\latentDisturbance}$ \hfill\eqref{eq:disturbance-loss}
            \ENDIF
            \STATE $\bar\theta_Q\leftarrow(1-\rho)\bar\theta_Q+\rho\theta_Q$
        \ENDFOR
        \IF{$\densityEstimator(\latent_{t+1})>\thresholdOOD$}
            \STATE \textbf{break} \hfill $\triangleright$ truncate OOD imagination
        \ENDIF
    \ENDFOR
\ENDFOR
\end{algorithmic}
\end{algorithm}

\subsection{Latent Safety Filter Training}

Because the world model is trained offline, actions proposed during safety filter training may be poorly represented in the training data for the current latent state. Such out-of-distribution (OOD) state--action pairs can induce hallucinated transitions, allowing the safety policy to exploit model errors and overestimate safety. Following UNISafe~\cite{seo2025uncertainty}, we incorporate epistemic uncertainty into the safety margin to discourage these unreliable imaginations. Augmenting the task failure margin with this uncertainty criterion encourages the learned safety policy to select actions whose predicted outcomes are both safe and supported by the offline data.

\subsection{Disturbance Parameterization}

In Sec.~\ref{sec:latent-robust-optim}, we parameterize the latent disturbance $\latentDisturbance(\latent,\action)$ as an additive residual to the nominal distribution parameters, enabling optimization over plausible next-state distributions through \eqref{eq:disturbance-residual}--\eqref{eq:rssm-pessimistic-dynamics}. We implement this network as an MLP that takes features of the nominal next-state prediction: the deterministic features $h$ and prior parameters $(\mu,\sigma)$ for Gaussian dynamics, or $h$ alone for categorical dynamics. Thus, the current latent state and action condition the disturbance through the nominal prediction. The residual changes only the stochastic prior, preserving the deterministic features.

\begin{table}[!ht]
\caption{Residual parameterization of $\latentDisturbance$ in \eqref{eq:disturbance-residual}.}
\label{tab:impl-disturbance-architecture}
\centering\footnotesize
\setlength{\tabcolsep}{4pt}
\renewcommand{\arraystretch}{1.15}
\begin{tabular}{@{}lcc@{}}
\toprule
\textbf{Component} & \textbf{Gaussian} & \textbf{Categorical}\\
\midrule
Input & $[h,\mu,\sigma]$ & $h$\\
Input dimension & $512+32+32$ & $512$\\
Hidden layers & \multicolumn{2}{c}{$4$ layers, $512$ units each}\\
Normalization / activation & \multicolumn{2}{c}{LayerNorm / ReLU}\\
Output residual & $(\Delta\mu,\Delta\sigma)$ & $\Delta\parameter$\\
Output dimension & $32+32$ & $32\times32$\\
Output transform & $(\tanh,\,0.5\tanh)$ & $15\tanh$\\
Prior update & \shortstack{$\mu+\Delta\mu$\\$\sigma+\Delta\sigma$} & $\parameter+\Delta\parameter$\\
Output initialization & \multicolumn{2}{c}{Zero weights and bias}\\
\bottomrule
\end{tabular}
\vspace{-0.05in}
\end{table}

Table~\ref{tab:impl-disturbance-architecture} summarizes the parameterization. For Gaussian dynamics, we perturb the standard deviation $\sigma$, and the covariance in \eqref{eq:rssm-pessimistic-dynamics} is obtained by squaring these standard deviations. For categorical dynamics, we add the residual to the prior logits $\parameter$. Zero initialization of the output layer makes the initial residual zero, and the KL and OOD penalties constrain the learned perturbation during optimization.

\subsection{3D Dubins' Car with Disturbances}\label{sec:appendix-dubins}

\para{Optimal Solution of Naughty Dubins' Car} In this section, we derive the optimal robust control and worst-case disturbance for the \textit{naughty} Dubins' car in \eqref{eq:naughty-dynamics}:
\begin{equation*}
\dot p_x=v\cos\theta,\qquad
\dot p_y=v\sin\theta,\qquad
\dot\theta=\delta\action,
\end{equation*}
Let $\valfuncGt(\state)$ be the optimal robust safety value for these physical dynamics. Define its gradient components as
\begin{equation*}
g_x:=\frac{\partial\valfuncGt}{\partial p_x},\qquad
g_y:=\frac{\partial\valfuncGt}{\partial p_y},\qquad
g_\theta:=\frac{\partial\valfuncGt}{\partial\theta}.
\end{equation*}
By the chain rule, the Hamiltonian gives its instantaneous change along the dynamics:
\begin{equation*}
H(\state,\action,\delta):=\nabla_{\state}\valfuncGt(\state)^\top\dot\state=v(g_x\cos\theta+g_y\sin\theta)+g_\theta\delta\action.
\end{equation*}
The action maximizes this, while the disturbance minimizes it after observing the action. The optimization reduces to
\begin{equation*}
\max_{|\action|\leq1.25}\min_{\delta\in\{-1,1\}}
g_\theta\delta\action
=\max_{|\action|\leq1.25}-|g_\theta\action|=0.
\end{equation*}
Thus, a Hamiltonian-optimal robust action is $\mathbf{\action^\star=0}$, and a worst-case response to any proposed action is
\begin{equation*}
\delta^\star=
\begin{cases}
-1,&g_\theta\action>0,\\
+1,&g_\theta\action\leq0.
\end{cases}
\end{equation*}
When $g_\theta\neq0$, every nonzero turn lets the disturbance decrease safety through the heading contribution. \textit{Driving straight}, as in Fig.~\ref{fig:naughty-dubins}, eliminates this contribution.

\para{Latent Safety Filter Setup} World model hyperparameters are listed in Table~\ref{tab:impl-dubins-wm} and safety filter training settings are listed in Table~\ref{tab:impl-dubins}. We add the predicted residuals to the mean and standard deviation of the Gaussian prior, with the standard deviation bounded below by $0.1$. We train a flow-matching density model on frozen latents and use its logpZO score~\cite{xu2025can} to penalize implausible perturbations. For safety learning, we train a Gaussian safety actor and twin critics using the discounted safety update in \eqref{eq:discounted-safety-value}. The actor and critic networks each use four hidden layers of width $512$, and the critic outputs are bounded by $1.5\tanh(\cdot)$. We update the critics, safety policy, and disturbance every $1$, $4$, and $2$ replay updates, respectively.

\begin{table}[!ht]
\caption{3D Dubins' Car Latent World Model Settings.}\label{tab:impl-dubins-wm}
\centering\footnotesize
\renewcommand{\arraystretch}{1.12}
\begin{tabular}{@{}>{\raggedright\arraybackslash}p{.45\columnwidth}>{\centering\arraybackslash}p{\dimexpr\columnwidth-.45\columnwidth-2\tabcolsep\relax}@{}}
\toprule
\textbf{Hyperparameter} & \textbf{Value}\\
\midrule
Image Dimension & $128\times128\times3$\\
Action Dimension & $1$ (continuous)\\
Stochastic Latent & Gaussian\\
Deterministic Dimension & $512$\\
Stochastic Dimension & $32$\\
Activation Function & SiLU\\
Encoder CNN Depth & $32$\\
Failure Classifier Layers & $2$\\
Batch Size & $16$\\
Batch Length & $32$\\
Optimizer & Adam\\
Learning Rate & $10^{-4}$\\
Training Iterations & $50{,}000$\\
\bottomrule
\end{tabular}
\vspace{-0.2in}
\end{table}

\begin{table}[!ht]
\caption{3D Dubins' Car Latent Safety Filter Settings.}\label{tab:impl-dubins}
\centering\footnotesize
\renewcommand{\arraystretch}{1.12}
\begin{tabular}{@{}>{\raggedright\arraybackslash}p{.45\columnwidth}>{\centering\arraybackslash}p{\dimexpr\columnwidth-.45\columnwidth-2\tabcolsep\relax}@{}}
\toprule
\textbf{Parameter} & \textbf{Value}\\
\midrule
Actor Learning Rate & $10^{-3}$\\
Critic Learning Rate & $10^{-4}$\\
Disturbance Learning Rate & $10^{-4}$\\
Discount Factor & $0.85\rightarrow0.9999$\\
Training Iterations & $100{,}000$\\
Replay Buffer Size & $100{,}000$\\
Batch Size & $512$\\
Parallel Imagination States & $512$\\
Imagination Steps & $50$\\
\bottomrule
\end{tabular}
\vspace{-0.2in}
\end{table}

\subsection{Simulation: Block Pouring}\label{sec:impl-sim}

\para{System Disturbance}
We randomize the environment physics at reset, with all randomized parameters sampled uniformly from the ranges listed in Table~\ref{tab:impl-sim-randomization}. The poses of all three cubes are perturbed within the specified bounds, while the physical properties of the green cube are randomized independently. These parameters remain fixed throughout each episode. For paired evaluation, we pre-sample $20$ physics configurations with a fixed seed and reuse them across source trajectories and filtering conditions, ensuring identical physics for corresponding rollouts.

\begin{table}[!ht]
\caption{Physical Randomization of Block Pouring in IsaacLab.}
\label{tab:impl-sim-randomization}
\centering\footnotesize
\renewcommand{\arraystretch}{1.12}
\begin{tabular}{lc}
\toprule
\textbf{Parameter} & \textbf{Randomization Range} \\
\midrule
Cube Position $x$ & $[-0.02,\,0.02]$ m \\
Cube Position $y$ & $[-0.02,\,0.02]$ m \\
Mass & $[0.8,\,1.2]$ kg \\
Static Friction & $[0.2,\,0.8]$ \\
Dynamic Friction & $[0.1,\,0.5]$ \\
Restitution & $[0.05,\,0.3]$ \\
\bottomrule
\end{tabular}
\end{table}

\para{Latent Safety Filter Setup} World model hyperparameters are listed in Table~\ref{tab:impl-sim-wm} and safety filter training settings are listed in Table~\ref{tab:impl-sim}. Following UNISafe~\cite{seo2025uncertainty}, a five-member probabilistic ensemble provides the log-JRD uncertainty score used in the safety margin to prevent OOD action exploitation of WM imaginations~\cite{seo2025uncertainty}. For safety filter synthesis, the actor and critic networks each use four hidden layers of width $512$, and update the critics, safety policy, and disturbance every $1$, $8$, and $2$ replay updates, respectively.

\begin{table}[!ht]
\caption{Block Pouring World Model Training Settings.}\label{tab:impl-sim-wm}
\centering\small
\renewcommand{\arraystretch}{1.12}
\begin{tabular}{@{}>{\raggedright\arraybackslash}p{.45\columnwidth}>{\centering\arraybackslash}p{\dimexpr\columnwidth-.45\columnwidth-2\tabcolsep\relax}@{}}
\toprule
\textbf{Hyperparameter} & \textbf{Value}\\
\midrule
Image Dimension & $2\times128\times128\times3$\\
Proprioception Dimension & $7$ (eef position/quaternion)\\
Action Dimension & $7$ (continuous)\\
Stochastic Latent & Categorical\\
Deterministic Dimension & $512$\\
Stochastic Dimension & $32\times32$\\
Activation Function & SiLU\\
Encoder CNN Depth & $32$\\
Encoder MLP Layers & $5$\\
Failure Classifier Layers & $3$\\
Batch Size & $16$\\
Batch Length & $64$\\
Optimizer & Adam\\
Learning Rate & $10^{-4}$\\
Training Iterations & $1{,}000{,}000$\\
\bottomrule
\end{tabular}
\vspace{-0.2in}
\end{table}

\begin{table}[!ht]
\caption{Simulated manipulation safety-learning settings.}\label{tab:impl-sim}
\centering\small
\renewcommand{\arraystretch}{1.12}
\begin{tabular}{@{}>{\raggedright\arraybackslash}p{.45\columnwidth}>{\centering\arraybackslash}p{\dimexpr\columnwidth-.45\columnwidth-2\tabcolsep\relax}@{}}
\toprule
\textbf{Parameter} & \textbf{Value}\\
\midrule
Actor Learning Rate & $10^{-3}$\\
Critic Learning Rate & $10^{-3}$\\
Disturbance Learning Rate & $10^{-3}$\\
Discount Factor & $0.95$\\
Training Iterations & $400{,}000$\\
Replay Buffer Size & $1{,}000{,}000$\\
Batch Size & $512$\\
Parallel Imagination States & $1{,}024$\\
Imagination Steps & $50$\\
\bottomrule
\end{tabular}
\vspace{-0.1in}
\end{table}

\para{Task Policy} We train the DreamerV3 as a task policy~\cite{hafner2023dreamerv3} independently of the safety filter. The actor optimizes imagined $\lambda$-returns with $\lambda=0.95$ and continuation-weighted discount, backpropagating through the dynamics with an entropy coefficient of $3\times10^{-4}$. We normalize advantages using moving $5$th and $95$th return percentiles (update rate $10^{-2}$, minimum scale $1$), and the critic uses a twohot log-likelihood objective with a slow-target regularizer. We also use Diffusion Policy~\cite{chi2023diffusionpolicy} by behavior cloning as a task policy. A vision-conditioned MLP or conditional U-Net predicts $8$ future actions of $7$D using $100$ denoising steps. Both are trained for $1{,}000$ epochs at learning rate $10^{-4}$; the MLP uses ViT image encoders and batch size $256$, while the U-Net uses ResNet-18 encoders and batch size $64$. During deployment, the task policy executes four actions before replanning.

\subsection{Real-World: Sunny-Side-Up Egg Serving}\label{sec:impl-egg}

\begin{table}[!ht]
\caption{Transformer-Based Latent World Model}
\label{tab:impl-world-model-comparison}
\centering\small
\setlength{\tabcolsep}{4pt}
\renewcommand{\arraystretch}{1.15}
\begin{tabular}{@{}>{\raggedright\arraybackslash}p{.27\columnwidth}>{\raggedright\arraybackslash}p{\dimexpr.32\columnwidth-2\tabcolsep\relax}>{\raggedright\arraybackslash}p{\dimexpr.41\columnwidth-2\tabcolsep\relax}@{}}
\toprule
 & \textbf{RSSM} & \textbf{Transformer RSSM}\\
\midrule
Visual encoder & Trainable CNN & Frozen DINOv3\\
Temporal model & GRU & Causal transformer\\
History & Recurrent & $8$-step attention window\\
Recon. Target & Pixels & Pixels and DINO features\\
\bottomrule
\end{tabular}
\end{table}

\begin{table}[ht]
\caption{Real World: World Model Hyperparameters.}\label{tab:impl-egg-wm}
\centering\footnotesize
\renewcommand{\arraystretch}{1.12}
\begin{tabular}{@{}>{\raggedright\arraybackslash}p{.45\columnwidth}>{\raggedright\arraybackslash}p{\dimexpr\columnwidth-.45\columnwidth-2\tabcolsep\relax}@{}}
\toprule
\textbf{Hyperparameter} & \textbf{Value}\\
\midrule
Image Dimension & $2\times192\times256\times3$\\
Proprioception Dimension & $8$\\
Action Dimension & $7$ (continuous)\\
Stochastic Latent & Gaussian\\
Deterministic Dimension & $512$\\
Stochastic Dimension & $32$\\
Activation Function & SiLU\\
Encoder MLP Layers & $2$ (proprioception)\\
Failure Classifier Layers & $3$ per failure head\\
Batch Size & $20$\\
Batch Length & $64$\\
Optimizer & Adam\\
Learning Rate & $10^{-4}$\\
Training Iterations & $200{,}000$\\
\bottomrule
\end{tabular}
\end{table}

\begin{table}[ht]
\caption{Real World Latent Safety Filter Setups.}
\label{tab:impl-egg-rl}
\centering\small
\renewcommand{\arraystretch}{1.12}
\begin{tabular}{@{}>{\raggedright\arraybackslash}p{.45\columnwidth}>{\centering\arraybackslash}p{\dimexpr.55\columnwidth-2\tabcolsep\relax}@{}}
\toprule
\textbf{Parameter} & \textbf{Value}\\
\midrule
Actor Learning Rate & $5\times10^{-4}$\\
Critic Learning Rate & $10^{-3}$\\
Disturbance Learning Rate & $10^{-3}$\\
Target Smoothing & $0.005$\\
Discount Factor & $0.85\rightarrow0.9999$\\
Training Iterations & $400{,}000$\\
Replay Buffer Size & $500{,}000$\\
Batch Size & $512$\\
Parallel Imagination States & $256$\\
Imagination Steps & $30$\\
\bottomrule
\end{tabular}
\end{table}

\para{Latent World Model}
Each observation contains RGB images from two tabletop cameras, seven robot joint positions, and the gripper width, while actions specify an end-effector delta pose and gripper command. Failures are manually labeled for each frame based on falling and flipping conditions. For latent world mode, we adapt the RSSM~\cite{hafner2019learning} with pretrained visual features and transformer-based latent dynamics, with the main architectural differences summarized in Table~\ref{tab:impl-world-model-comparison}. A frozen DINOv3 ViT-S/16+~\cite{simeoni2025dinov3} extracts $192$ patch tokens of dimension $384$ from each image; learned view embeddings distinguish the two cameras, and a two-layer transformer fuses their tokens. Eight learned queries pool the fused tokens into $3{,}072$ visual features, while a separate MLP maps the eight proprioceptive inputs to $512$ features. Their concatenation forms the $3{,}584$-dimensional observation embedding. 

For latent dynamics, a causal transformer predicts $512$ deterministic features from an eight-step history of stochastic states and actions, providing the same temporal context during training and imagination. Actions are embedded and injected into each transformer block through AdaLN-zero conditioning. The prior predicts a $32$-dimensional diagonal Gaussian from the deterministic features, and concatenating a stochastic sample with the deterministic features yields the $544$-dimensional latent state $\latent$ used by the safety filter. We jointly train pixel and frozen-feature reconstruction together with continuation, falling, and flipping prediction. Table~\ref{tab:impl-egg-wm} lists the architecture and training hyperparameters.

\para{Latent Safety Filter Setup}
Table~\ref{tab:impl-egg-rl} summarizes the safety-filter training settings. Following Algorithm~\ref{alg:impl-safety-learning}, we update the critics, safety policy, and disturbance every $1$, $4$, and $2$ replay updates, respectively.



\end{document}